\newcommand{\one}{\mathds{1}}
\newtheorem{theorem}{Theorem}
\newtheorem{proposition}{Proposition}
\newtheorem*{assumption*}{Assumption}
\newtheorem{assumption}{Assumption}
\newtheorem{lemma}{Lemma}
\newcommand{\Err}{{\rm Err}}
\newcommand{\eg}{{\em e.g.,~}}
\newcommand{\lcf}{{\em cf.~}}
\newcommand{\wrt}{{\em w.r.t.~}}
\begin{document}

\title{Regression with reject option and application to $k$NN}
 \author{Christophe Denis\footnote{Christophe.Denis@univ-eiffel.fr} \hspace*{0.05cm},  Mohamed Hebiri\footnote{Mohamed.Hebiri@univ-eiffel.fr} \hspace*{0.05cm} and Ahmed Zaoui\footnote{Ahmed.Zaoui@univ-eiffel.fr}\\
\small{LAMA, UMR-CNRS 8050,}\\
\small{Universit\'e Gustave Eiffel}\\}
 \date{}
 
\maketitle

\begin{abstract}
We investigate the problem of regression where one is allowed to abstain from predicting.
We refer to this framework as {\it regression with reject option} as an extension of classification with reject option.
In this context, we focus on the case where the rejection rate is fixed and derive the optimal rule which relies on thresholding the conditional variance function.
We provide a semi-supervised estimation procedure of the optimal rule involving two datasets:
a first {\it labeled} dataset is used to estimate both regression function and conditional variance function while
a second {\it unlabeled} dataset is exploited to calibrate the desired rejection rate.
The resulting predictor with reject option is shown to be almost as good as the optimal predictor with reject option both in terms of risk and rejection rate.
We additionally apply our methodology with $k$NN algorithm and establish rates of convergence for the resulting $k$NN predictor under mild conditions.
Finally, a numerical study is performed to illustrate the benefit of using the proposed procedure.\\
{\bf Keywords: }Regression; Regression with reject option; $k$NN; Predictor with reject option.
\end{abstract}

\section{Introduction}
\label{sec:intro}

Confident prediction is a fundamental problem in statistical learning for which numerous efficient algorithms have been designed, \eg neural-networks, kernel methods, or $k$-Nearest-Neighbors ($k$NN) to name a few.
However, even state-of-art methods may fail in some situations, leading to bad decision-making.
Obvious damageable incidences of an erroneous decision may occur in several fields such as medical diagnosis, where a wrong estimation can be fatal. In this work, we provide a novel statistical procedure designed to handle these cases.
In the specific context of regression, we build a prediction algorithm that allows to abstain from predicting when the doubt is too important. As a generalization of the classification with reject option setting~\cite{Chow57,Chow70, Denis_Hebiri19, Herbei_Wegkamp06, Lei14, Naadeem_Zucker_Hanczar10, Vovk_Gammerman_Shafer05}, this framework is naturally referred to as {\it regression with reject option}. In the spirit of~\cite{Denis_Hebiri19}, we opt here for a strategy where
the predictor can abstain up to a fraction $\varepsilon\in (0,1)$ of the data. The merit of this approach is that it allows human action on the proportion of the data where the prediction is too difficult while standard machine learning algorithms can be exploited to perform the predictions on the other fraction of the data. The difficulty to address a prediction is then automatically evaluated by the procedure. From this perspective, this strategy may improve the efficiency of the human intervention.


In this paper, we investigate the regression problem with reject option when the rejection (or abstention) rate is controlled.
Specifically, we provide a statistically principled and computationally efficient algorithm tailored to this problem.
We first formally define the regression with reject option framework, and explicitly exhibit the optimal predictor with bounded rejection rate in Section~\ref{sec:genFrame}. This optimal rule relies on a thresholding of the conditional variance function. This result is the bedrock of our work and suggests the use of a plug-in approach. We propose in Section~\ref{sec:plug-in} a two-step procedure which first estimates both the regression function and the conditional variance function on a first {\it labeled} dataset and then calibrates the threshold responsible for abstention using a second {\it unlabeled} dataset.
Under mild assumptions, we show that our procedure performs as well as the optimal predictor both in terms of risk and rejection rate. We emphasize that our procedure can be exploited with any off-the-shell estimator.
As an example we apply in Section~\ref{sec:knnRates} our methodology with the $k$NN algorithm for which we derive rates of convergence.
Finally, we perform numerical experiments in Section~\ref{sec:numerical} which illustrate the benefits of our approach.
In particular, it highlights the flexibility of the proposed procedure.

Rejection in regression is extremely rarely considered in the literature, an exception being~\cite{Wiener_ElYaniv_12} that views the reject option from a different perspective.
There, the authors used the reject option from the side of $\varepsilon$-optimality, and therefore ensures that the prediction is inside a ball with radius $\varepsilon$ around the regression function with high probability. Their methodology is intrinsically associated with empirical risk minimization procedures.
In contrast, our method is applicable to any estimation procedure.
Closer related works to ours appears in classification with reject option literature~\cite{Bartlett_Wegkamp08, Chow57, Chow70, Denis_Hebiri19, Herbei_Wegkamp06, Lei14, Naadeem_Zucker_Hanczar10, Vovk_Gammerman_Shafer05}. In particular, the present work can be viewed as an extension of the classification with reject option setting.
Indeed, from a general perspective, the present contribution brings a deeper understanding of the reject option. Importantly, the conditional variance function appears to capture the main feature behind the abstention decision.
In~\cite{Denis_Hebiri19}, the authors also provide rates of convergence for plug-in type approaches in the case of bounded rejection rate. However, their rates of convergence holds only under some margin type assumption~\cite{Audibert_Tsybakov07,Polonik95} and a smoothness assumption on the considered estimator. On the contrary, we do not require these assumptions to get valid rates of convergence.


\section{Regression with reject option}
\label{sec:genFrame}
In this section we introduce the regression with reject option setup and derive a general form of the optimal rule in this context. We additionally highlight the case of fixed rejection rate as our main framework.
First of all, before we proceed, let us introduce some preliminary notation.
Let $(X,Y)$ be a random couple taking its values in
$\mathbb{R}^d \times \mathbb{R}$: here $X$ denotes a feature vector and $Y$ is the corresponding output. We denote by $\mathbb{P}$ the joint distribution of $(X,Y)$ and by $\mathbb{P}_{X}$ the marginal distribution of the feature $X$.
Let $x\in \mathbb{R}^d$, we introduce the regression function $f^*(x) = \mathbb{E}\left[Y|X = x\right]$ as well as the conditional variance function $\sigma^2(x) = \mathbb{E}\left[(Y-f^{*}(X))^2|X = x\right]$.
We will give due attention to these two functions in our analysis.
In addition, we denote by $\left\|\cdot\right\|$ the Euclidean on $\mathbb{R}^{d}$.
Finally, $|\cdot|$ stands for the cardinality when dealing with a finite set.

\subsection{Predictor with reject option}
\label{subsec:definitionPredictor}

Let $f$ be some measurable real-valued function which must be viewed as a prediction function.
A \emph{predictor with reject option} $\Gamma_f$ associated to $f$ is defined as being any function that maps $\mathbb{R}^d$ onto $\mathcal{P}\left(\mathbb{R}\right)$ such for all $x \in \mathbb{R}^d$, the output $\Gamma_f(x) \in \{\emptyset, \{f(x)\}\}$.
We denote by $\Upsilon_f$ the set of all predictors with reject option that relies on $f$.
Hence, in this framework, there are only two options for a particular $x \in \mathbb{R}^d$: whether the predictor with reject option outputs the empty set, meaning that no prediction is produced for $x$;
or the output $\Gamma_f(x)$ is of size $1$ and the prediction coincides with the value $f(x)$.
The framework of regression with reject option naturally brings into play two important characteristics of a given predictor $\Gamma_f$.
The first one is the rejection rate that we denote by $r \left(\Gamma_{f}\right) = \mathbb{P}\left(|\Gamma_f(X)| = 0\right)$ and the second one is the $L_2$ error when prediction is performed
\begin{equation*}
\Err\left(\Gamma_f \right)  =  \mathbb{E}\left[(Y-f(X))^2 \; | \; |\Gamma_f (X)| = 1\right] \enspace .
\end{equation*}
The ultimate goal in regression with reject option is to build a predictor $\Gamma_f$ with a small rejection rate that achieves a small conditional $L_2$ error as well.
A natural way to make this happen is to embed these quantities into a measure of performance. To this end,
let consider the following risk
\begin{equation*}
\mathcal{R}_{\lambda}\left(\Gamma_f\right) = \mathbb{E} \left[(Y-f(X))^2 \one_{\{|\Gamma_f(X)| =1 \}}\right] +  \lambda \, r \left(\Gamma_{f}\right) \enspace ,
\end{equation*}
where $\lambda \ge 0$ is a tuning parameter which is responsible for compromising error and rejection rate: larger $\lambda$'s result in predictors $\Gamma_f$ with smaller rejection rates, but with larger errors.
Hence, $\lambda$ can be interpreted as the price to pay for using the reject option. Note that the above risk $\mathcal{R}_{\lambda}$ has already been considered by~\cite{Herbei_Wegkamp06} in the classification framework.

Minimizing the risk $\mathcal{R}_{\lambda}$, we derive an explicit expression of an optimal predictor with reject option.
\begin{proposition}
\label{prop:optimOracle}
Let $\lambda \ge 0$, and consider
\begin{equation*}
   \Gamma^{*}_{\lambda} \in \arg\min \mathcal{R}_{\lambda}(\Gamma_{f}) \enspace ,
    \end{equation*}
where the minimum is taken over all predictors with rejection option $\Gamma_f \in \Upsilon_f$ and all measurable functions $f$. Then we have that
\begin{enumerate}
\item The optimal predictor with rejected option $\Gamma^{*}_{\lambda} $ can be written as
\begin{equation}
    \Gamma^{*}_{\lambda}(X)=\begin{cases}
  \left\{f^{*}(X)\right\}  & \text{if} \;\; \sigma^2(X)\leq\lambda\\
  \emptyset     & \text{otherwise} \enspace .
  \end{cases}
  \label{eq:eqOracle}
\end{equation}
\item For any $\lambda < \lambda'$, the following holds
\begin{equation*}
\Err\left(\Gamma^{*}_{\lambda}\right) \leq \Err\left(\Gamma^{*}_{\lambda'}\right) \;\; {\rm and} \;\;
r\left(\Gamma^{*}_{\lambda}\right) \geq  r\left(\Gamma^{*}_{\lambda'}\right) \enspace .
\end{equation*}
\end{enumerate}
\end{proposition}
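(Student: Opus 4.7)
The plan is to reduce the minimization of $\mathcal{R}_\lambda$ to a pointwise optimization in two stages: first over the prediction function $f$ for a fixed acceptance region, and then over the acceptance region itself.

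To set this up, encode any pair $(\Gamma_f, f)$ by $f$ together with the measurable indicator $g(x) = \one_{\{|\Gamma_f(x)| = 1\}}$. Conditioning on $X$ and using the bias--variance decomposition $\mathbb{E}[(Y-f(X))^2 \mid X] = \sigma^2(X) + (f(X)-f^*(X))^2$, rewrite
\begin{equation*}
\mathcal{R}_{\lambda}(\Gamma_f) \;=\; \mathbb{E}\bigl[g(X)\bigl(\sigma^2(X) + (f(X)-f^*(X))^2\bigr)\bigr] + \lambda\,\mathbb{E}[1-g(X)].
\end{equation*}
Since the squared-bias term is non-negative and weighted by $g(X) \ge 0$, for any fixed $g$ the choice $f = f^*$ on $\{g=1\}$ is optimal (its value on $\{g = 0\}$ being irrelevant). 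Substituting back,
\begin{equation*}
\inf_{f} \mathcal{R}_{\lambda}(\Gamma_f) \;=\; \lambda + \mathbb{E}\bigl[g(X)\bigl(\sigma^2(X) - \lambda\bigr)\bigr].
\end{equation*}
Minimizing pointwise over $g \in \{0,1\}$, the integrand is non-positive exactly on $\{\sigma^2(X) \le \lambda\}$, so the optimal choice is $g(X) = \one_{\{\sigma^2(X) \le \lambda\}}$, which gives~\eqref{eq:eqOracle}.

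For the monotonicity, let $Z = \sigma^2(X)$. The rejection rate $r(\Gamma^*_\lambda) = \Pro(Z > \lambda)$ is manifestly non-increasing in $\lambda$. Using the same bias--variance identity,
\begin{equation*}
\Err(\Gamma^*_\lambda) \;=\; \frac{\mathbb{E}[Z \,\one_{\{Z \le \lambda\}}]}{\Pro(Z \le \lambda)} \;=\; \mathbb{E}[Z \mid Z \le \lambda],
\end{equation*}
and for $\lambda < \lambda'$ a short mean-splitting argument finishes the job: write the conditional mean on $\{Z \le \lambda'\}$ as a convex combination of $\mathbb{E}[Z \mid Z \le \lambda] \le \lambda$ and $\mathbb{E}[Z \mid \lambda < Z \le \lambda'] \ge \lambda$ (the latter when the conditioning set has positive mass, otherwise the two error terms coincide), yielding $\Err(\Gamma^*_\lambda) \le \Err(\Gamma^*_{\lambda'})$.

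The only mild subtlety lies in the first reduction: one must verify that the joint infimum over $(\Gamma_f, f)$ genuinely factors as the two one-dimensional minimizations above, which amounts to checking measurability of $g$ and noting that the value of $f$ on $\{g=0\}$ does not enter $\mathcal{R}_\lambda$. Once this reduction is made explicit, both claims follow from elementary pointwise comparisons, so I expect no serious technical obstacle.
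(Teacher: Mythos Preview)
Your proposal is correct and follows essentially the same approach as the paper. Part~1 is identical: the paper also expands $\mathcal{R}_\lambda$ via the bias--variance decomposition to obtain $\mathcal{R}_\lambda(\Gamma_f)=\lambda+\mathbb{E}\bigl[\{(f^*(X)-f(X))^2+\sigma^2(X)-\lambda\}\one_{\{|\Gamma_f(X)|=1\}}\bigr]$ and then minimizes first over $f$, then over the acceptance region. For Part~2, the paper computes $\Err(\Gamma^*_\lambda)-\Err(\Gamma^*_{\lambda'})$ directly and bounds the two resulting pieces by $\lambda(1-a_\lambda/a_{\lambda'})$ from above and below; your convex-combination phrasing of $\mathbb{E}[Z\mid Z\le\lambda']$ is the same computation rearranged, arguably a bit cleaner.
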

Interestingly, this result shows that the oracle predictor relies on thresholding the conditional variance function
$\sigma^2$.
We believe that this is an important remark that provides an essential characteristic of the reject option in regression but also in classification.
Indeed, it has been shown that the optimal classifier with reject option for classification is obtained by thresholding the function $f^*$ (see for instance~\cite{Herbei_Wegkamp06}).
However, in the binary case where $Y \in \{0,1\}$, one has $\sigma^2(x) = f^*(x)(1-f^*(x))$, and then thresholding $\sigma^2$ and $f^*$ are equivalent.

\begin{wrapfigure}{r}{0.4\textwidth}
\vspace{-0.5cm}
    \centering
    \includegraphics[width=0.4\textwidth]{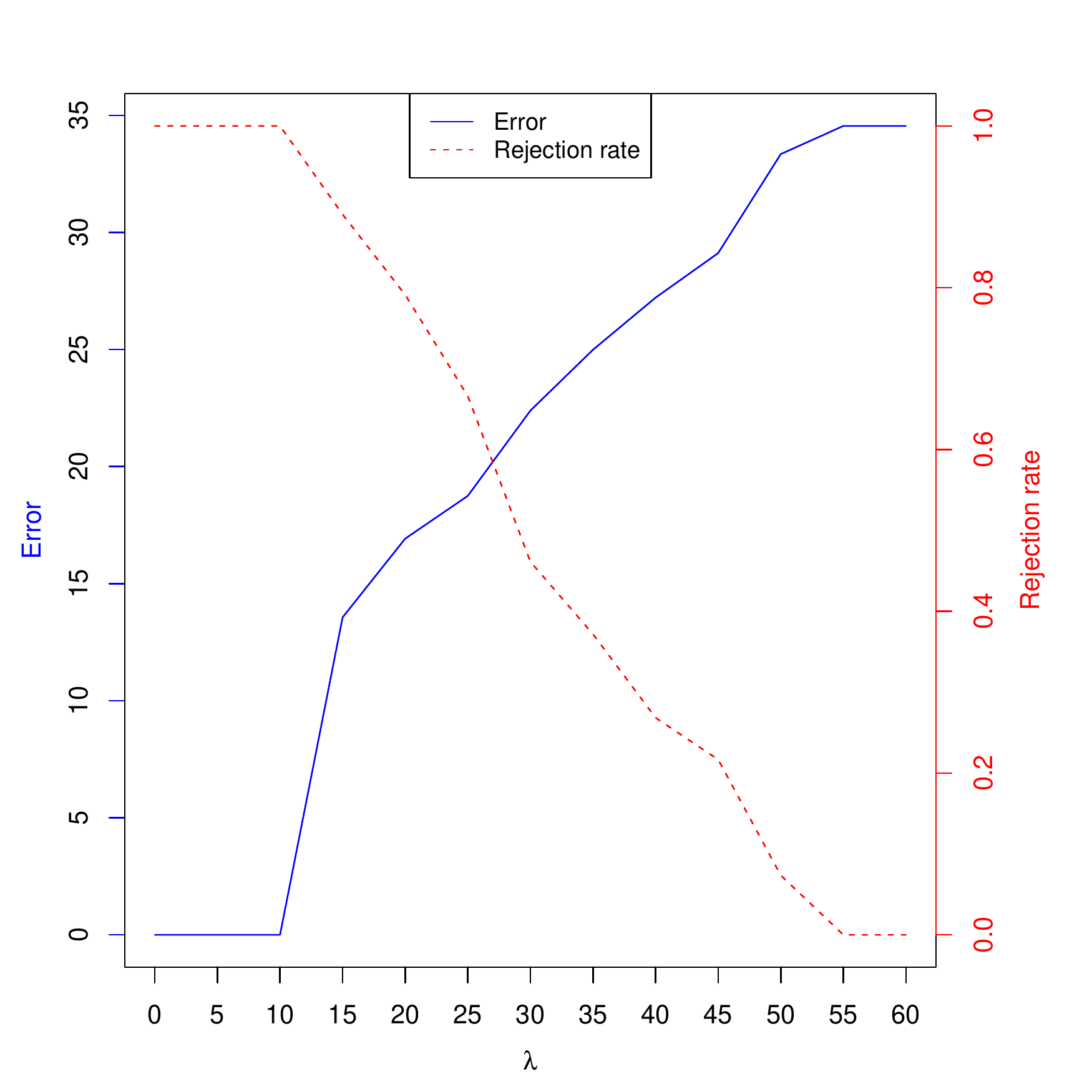}
    \caption{$\widehat{\Err}\left(\hat{\Gamma}_{\lambda}\right)$ and $ \hat{r}\left(\hat{\Gamma}_{\lambda}\right)$ vs. $\lambda$.
    \label{fig:ErrorRejet}}
\end{wrapfigure}
The second point of the proposition shows that the error and the rejection rate of the optimal predictor are working in two opposite directions \wrt $\lambda$ and then a compromise is required.
We illustrate this aspect with the \texttt{airfoil} dataset, and the $k$NN predictor (see Section~\ref{sec:numerical}) in the contiguous Figure~\ref{fig:ErrorRejet}.
The two curves correspond to the evaluation of the error $\Err(\hat{\Gamma}_{\lambda})$ (blue-solid line) and the rejection rate $ r(\hat{\Gamma}_{\lambda})$ (red-dashed line) as a function of $\lambda$.
In general any choice of the parameter $\lambda$ is difficult to interpret.
Indeed, one of the major drawbacks of this approach is that any fixed $\lambda$ (or even an ``optimal'' value of this parameter) does not allow to control neither of the two parts of the risk function.
Especially, the rejection rate can be arbitrary large.



For this reason, we investigate in Section~\ref{subsec:optimalPredictor} the setting where the rejection rate is fixed.
We understand this rejection rate as a budget one has beforehand.

\subsection{Optimal predictor with fixed rejection rate}
\label{subsec:optimalPredictor}
In this section, we introduce the framework where the rejection rate is fixed or at least bounded.
That is to say, for a given predictor with reject option $\Gamma_f$ and a given rejection rate $\varepsilon \in (0,1)$, we ask that $\Gamma_f$ satisfies following constraint $r\left(\Gamma_f\right) \leq \varepsilon$. This kind of constraint has also been considered by~\cite{Denis_Hebiri19} in the classification setting.
Our objective becomes to solve the constrained problem\footnote{By abuse of notation, we refer to $\Gamma_{\lambda}^*$ as the solution of the penalized problem and to $\Gamma_{\varepsilon}^*$ as the solution of the constrained problem.}:
\begin{equation}
\label{eq:eqOptimConstrained}
\Gamma_{\varepsilon}^* \in \arg\min\{\Err\left(\Gamma_{f}\right) \; : \; r\left(\Gamma_{f}\right) \leq \varepsilon\} \enspace .
\end{equation}
In the same vein as Proposition~\ref{prop:optimOracle}, we aim at writing an explicit expression of $\Gamma_{\varepsilon}^*$, referred in what follows to as $\varepsilon$-predictor.
However, this expression is not well identified in the general case.
Therefore, we make the following mild assumption on the distribution of $\sigma^2(X)$, which translates the fact that the function $\sigma^2$ is not constant on any set with non-zero measure \wrt $\mathbb{P}_X$.
\begin{assumption}
\label{ass:sigmaContinuity}
The cumulative distribution function $F_{\sigma^2}$ of $\sigma^2(X)$ is continuous.
\end{assumption}
Let us denote by $F_{\sigma^2}^{-1}$ the generalized inverse of the cumulative distribution $F_{\sigma^2}$ defined for all $u \in (0,1) $ as $F_{\sigma^2}^{-1}(u) = \inf\{t \in \mathbb{R}\; : \: F_{\sigma^2}(t) \geq u \}$.
Under Assumption~\ref{ass:sigmaContinuity} and from Proposition~\ref{prop:optimOracle}, we derive an explicit expression of the $\varepsilon$-predictor $\Gamma_{\varepsilon}^*$ given by~\eqref{eq:eqOptimConstrained}.
\begin{proposition}
\label{prop:optimalWithFixed}
Let $\varepsilon \in (0,1)$, and let $\lambda_{\varepsilon} =  F_{\sigma^2}^{-1}(1-\varepsilon)$.
Under Assumption~\ref{ass:sigmaContinuity}, we have
 $\Gamma_{\varepsilon}^* = \Gamma_{\lambda_{\varepsilon}}^{*}$.
\end{proposition}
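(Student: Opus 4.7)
The plan is to combine the unconstrained optimality from Proposition~\ref{prop:optimOracle} with the continuity of $F_{\sigma^2}$. First I would show that $\Gamma_{\lambda_{\varepsilon}}^{*}$ is feasible for~\eqref{eq:eqOptimConstrained}. Indeed, by Proposition~\ref{prop:optimOracle},
\[
r(\Gamma_{\lambda_{\varepsilon}}^{*}) \;=\; \mathbb{P}(\sigma^{2}(X) > \lambda_{\varepsilon}) \;=\; 1 - F_{\sigma^{2}}(\lambda_{\varepsilon}),
\]
and Assumption~\ref{ass:sigmaContinuity} forces $F_{\sigma^{2}}(F_{\sigma^{2}}^{-1}(1-\varepsilon)) = 1-\varepsilon$, so $r(\Gamma_{\lambda_{\varepsilon}}^{*}) = \varepsilon \le \varepsilon$.

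Next I would compare $\Gamma_{\lambda_{\varepsilon}}^{*}$ with an arbitrary feasible $\Gamma_{f}$. Two ingredients are needed. The first is the inequality $\mathcal{R}_{\lambda_{\varepsilon}}(\Gamma_{\lambda_{\varepsilon}}^{*}) \le \mathcal{R}_{\lambda_{\varepsilon}}(\Gamma_{f})$ from Proposition~\ref{prop:optimOracle}. Writing $r^{*} = \varepsilon$, $r = r(\Gamma_{f})$, and decomposing $\mathcal{R}_{\lambda_{\varepsilon}}(\Gamma_{h}) = (1-r(\Gamma_{h}))\Err(\Gamma_{h}) + \lambda_{\varepsilon}\,r(\Gamma_{h})$, the inequality reads
\[
(1-\varepsilon)\Err(\Gamma_{\lambda_{\varepsilon}}^{*}) \;\le\; (1-r)\Err(\Gamma_{f}) + \lambda_{\varepsilon}(r - \varepsilon).
\]
The second ingredient is the key bound
\[
(1-\varepsilon)\Err(\Gamma_{\lambda_{\varepsilon}}^{*}) \;=\; \mathbb{E}\!\left[\sigma^{2}(X)\,\one_{\{\sigma^{2}(X) \le \lambda_{\varepsilon}\}}\right] \;\le\; \lambda_{\varepsilon}(1-\varepsilon),
\]
obtained by pulling the indicator inside and bounding $\sigma^{2}(X)$ by $\lambda_{\varepsilon}$ on the event of integration; equivalently $\Err(\Gamma_{\lambda_{\varepsilon}}^{*}) \le \lambda_{\varepsilon}$.

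Then I would conclude by elementary algebra. Multiplying the first displayed inequality by $(1-\varepsilon)$ and regrouping, the target inequality $(1-r)\,\Err(\Gamma_{\lambda_{\varepsilon}}^{*}) \le (1-\varepsilon)\,\Err(\Gamma_{f})$ (which is equivalent to $\Err(\Gamma_{\lambda_{\varepsilon}}^{*}) \le \Err(\Gamma_{f})$ after dividing by $(1-r)(1-\varepsilon) > 0$) reduces to
\[
(r-\varepsilon)\bigl(\Err(\Gamma_{\lambda_{\varepsilon}}^{*}) - \lambda_{\varepsilon}\bigr) \;\ge\; 0,
\]
which holds since both factors are nonpositive ($r \le \varepsilon$ by feasibility and $\Err(\Gamma_{\lambda_{\varepsilon}}^{*}) \le \lambda_{\varepsilon}$ by the second ingredient). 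Since $\Gamma_{\lambda_{\varepsilon}}^{*}$ is itself feasible and beats every feasible $\Gamma_{f}$, it belongs to the argmin defining $\Gamma_{\varepsilon}^{*}$.

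The main obstacle is precisely step~4: the bound from Proposition~\ref{prop:optimOracle} controls the penalized risk, not the conditional error $\Err$, and because $\Err$ is a ratio one cannot simply divide the penalized inequality by the acceptance probability. The crucial trick that makes the algebra close is combining the slack $r-\varepsilon \le 0$ in the rejection constraint with the complementary slack $\Err(\Gamma_{\lambda_{\varepsilon}}^{*}) - \lambda_{\varepsilon} \le 0$ coming from the very definition of the oracle; Assumption~\ref{ass:sigmaContinuity} is what guarantees $r^{*} = \varepsilon$ exactly and so supplies the equality that aligns the two slacks.
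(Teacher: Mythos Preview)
Your approach is correct and genuinely different from the paper's. The paper introduces, for a competitor $\Gamma_f$ with rejection rate $\varepsilon'\le\varepsilon$, the auxiliary threshold $\lambda_{\varepsilon'}=F_{\sigma^2}^{-1}(1-\varepsilon')$, uses point~1 of Proposition~\ref{prop:optimOracle} at that level to get $\Err(\Gamma_f)\ge\Err(\Gamma^*_{\lambda_{\varepsilon'}})$ (since both have the same rejection rate $\varepsilon'$, the penalty terms cancel), and then invokes the monotonicity in point~2 of Proposition~\ref{prop:optimOracle} to pass from $\lambda_{\varepsilon'}$ down to $\lambda_\varepsilon$. You instead stay at $\lambda_\varepsilon$ throughout and compensate via the scalar bound $\Err(\Gamma^*_{\lambda_\varepsilon})\le\lambda_\varepsilon$, which is exactly the inequality driving the proof of point~2 in the paper. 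Your route is more self-contained (it does not cite point~2) and only uses continuity of $F_{\sigma^2}$ at the single value $1-\varepsilon$, whereas the paper's argument implicitly uses it at $1-\varepsilon'$ for every $\varepsilon'\le\varepsilon$.

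One presentational issue: the algebra in your step~4 is garbled. The inequality $(1-r)\Err(\Gamma^*_{\lambda_\varepsilon})\le(1-\varepsilon)\Err(\Gamma_f)$ is \emph{not} equivalent to $\Err(\Gamma^*_{\lambda_\varepsilon})\le\Err(\Gamma_f)$ after dividing by $(1-r)(1-\varepsilon)$, and it is not what your reduction actually yields. The clean way to write your argument is: from ingredient~1,
\[
(1-\varepsilon)\Err(\Gamma^*_{\lambda_\varepsilon})\le(1-r)\Err(\Gamma_f)+\lambda_\varepsilon(r-\varepsilon),
\]
and writing $(1-\varepsilon)=(1-r)+(r-\varepsilon)$ on the left gives
\[
(1-r)\bigl(\Err(\Gamma^*_{\lambda_\varepsilon})-\Err(\Gamma_f)\bigr)\le(r-\varepsilon)\bigl(\lambda_\varepsilon-\Err(\Gamma^*_{\lambda_\varepsilon})\bigr)\le 0,
\]
the last inequality by your ingredient~2 and $r\le\varepsilon$. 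Since $1-r>0$, the conclusion follows. This is exactly your ``$(r-\varepsilon)(\Err^*-\lambda_\varepsilon)\ge0$'' observation, just with the bookkeeping done correctly.
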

As an immediate consequence of the above proposition and properties on
quantile functions is that
\begin{equation*}
r\left(\Gamma^*_{\varepsilon}\right) = \mathbb{P}\left(|\Gamma_{\varepsilon}^*(X) |=  0\right) =  \mathbb{P}\left(\sigma^2(X) \geq \lambda_{\varepsilon} \right) =
\mathbb{P}\left(F_{\sigma^2}(\sigma^{2}(X)) \geq 1-\varepsilon \right) = \varepsilon \enspace,
\end{equation*}
and then the $\varepsilon$-predictor has rejection rate exactly $\varepsilon$.
The continuity Assumption~\ref{ass:sigmaContinuity} is a sufficient condition to ensure that this property holds true.
Besides, from this assumption, the $\varepsilon$-predictor can be expressed as follows
\begin{equation}
\label{eq:eqEpsilonPred}
    \Gamma^{*}_{\varepsilon}(x)=\begin{cases}
  \left\{f^{*}(x)\right\}  & \text{if} \;\;
  F_{{\sigma}^2}({\sigma}^2(x))\leq 1-\varepsilon\\
  \emptyset      & \text{otherwise} \enspace .
  \end{cases}
\end{equation}
Finally, as suggested by Proposition~\ref{prop:optimOracle} and~\ref{prop:optimalWithFixed}, the performance of
a given predictor with reject option $\Gamma_f$ is measured through the risk
$\mathcal{R}_{\lambda}$ when $\lambda = \lambda_{\varepsilon}$. Then, its excess risk is given by
\begin{equation*}
\mathcal{E}_{\lambda_{\varepsilon}}\left(\Gamma_{f}\right)  = \mathcal{R}_{\lambda_{\varepsilon}}(\Gamma_{f})-\mathcal{R}_{\lambda_{\varepsilon}}(\Gamma^{*}_{\varepsilon}) \enspace,
\end{equation*}
for which the following result provides a closed formula.
\begin{proposition}
Let $\varepsilon \in (0,1)$. For any predictor $\Gamma_{f}$, we have
 \begin{equation*}
\mathcal{E}_{\lambda_{\varepsilon}}\left(\Gamma_{f}\right)  =  \mathbb{E}_{X}\left[(f^{*}(X)-f(X))^{2}\one_{\left\{|\Gamma_{f}(X)|=1\right\}}\right]+ \mathbb{E}_{X}\left[\vert\sigma^{2}(X)-\lambda_{\varepsilon}\vert \one_{\left\{|\Gamma_{f}(X)|\neq |\Gamma^{*}_{\varepsilon}(X)| \right\}}\right] \enspace .
 \end{equation*}
 \label{prop:propExcessRisk}
\end{proposition}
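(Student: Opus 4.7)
The plan is to rewrite $\mathcal{R}_{\lambda_\varepsilon}(\Gamma_f)$ by conditioning on $X$ to eliminate $Y$, then subtract the value at $\Gamma_\varepsilon^*$ and reorganize the indicator sets so that the constant $\lambda_\varepsilon$ cancels and only $|\sigma^2(X)-\lambda_\varepsilon|$ remains on the symmetric difference.

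First I would use the standard bias-variance split: for any measurable $f$, $\mathbb{E}[(Y-f(X))^2\mid X]=\sigma^2(X)+(f^*(X)-f(X))^2$, using $\mathbb{E}[Y-f^*(X)\mid X]=0$. Plugging this in and writing $r(\Gamma_f)=\mathbb{E}_X[\one_{\{|\Gamma_f(X)|=0\}}]=1-\mathbb{E}_X[\one_{\{|\Gamma_f(X)|=1\}}]$ gives
\begin{equation*}
\mathcal{R}_{\lambda_\varepsilon}(\Gamma_f)=\lambda_\varepsilon+\mathbb{E}_X\bigl[(\sigma^2(X)-\lambda_\varepsilon)\one_{\{|\Gamma_f(X)|=1\}}\bigr]+\mathbb{E}_X\bigl[(f^*(X)-f(X))^2\one_{\{|\Gamma_f(X)|=1\}}\bigr].
\end{equation*}
Specializing to $\Gamma_\varepsilon^*=\Gamma_{\lambda_\varepsilon}^*$ of Proposition~\ref{prop:optimalWithFixed}, whose prediction function is $f^*$ itself and whose acceptance region is $\{\sigma^2\leq\lambda_\varepsilon\}$, the last term vanishes and the middle term becomes $-\mathbb{E}_X[|\sigma^2(X)-\lambda_\varepsilon|\one_{\{\sigma^2(X)\leq\lambda_\varepsilon\}}]$.

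Next I would take the difference $\mathcal{R}_{\lambda_\varepsilon}(\Gamma_f)-\mathcal{R}_{\lambda_\varepsilon}(\Gamma_\varepsilon^*)$. The $(f^*-f)^2$ term is already in the desired form, so the work reduces to simplifying
\begin{equation*}
T:=\mathbb{E}_X\bigl[(\sigma^2(X)-\lambda_\varepsilon)\one_{\{|\Gamma_f(X)|=1\}}\bigr]+\mathbb{E}_X\bigl[(\lambda_\varepsilon-\sigma^2(X))\one_{\{\sigma^2(X)\leq\lambda_\varepsilon\}}\bigr].
\end{equation*}
I would partition both indicators according to the four cases based on $(|\Gamma_f(X)|,\,\one_{\{\sigma^2(X)\leq\lambda_\varepsilon\}})$. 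The terms on $\{|\Gamma_f|=1,\,\sigma^2\leq\lambda_\varepsilon\}$ cancel exactly (the signs are opposite), leaving a contribution on $\{|\Gamma_f|=1,\,\sigma^2>\lambda_\varepsilon\}$ equal to $|\sigma^2-\lambda_\varepsilon|$ and a contribution on $\{|\Gamma_f|=0,\,\sigma^2\leq\lambda_\varepsilon\}$ also equal to $|\sigma^2-\lambda_\varepsilon|$.

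Finally, since $|\Gamma_\varepsilon^*(X)|=1$ iff $\sigma^2(X)\leq\lambda_\varepsilon$, these two remaining regions are precisely $\{|\Gamma_f(X)|\neq |\Gamma_\varepsilon^*(X)|\}$, so $T=\mathbb{E}_X[|\sigma^2(X)-\lambda_\varepsilon|\one_{\{|\Gamma_f(X)|\neq|\Gamma_\varepsilon^*(X)|\}}]$, yielding the claimed identity. The only mildly delicate step is the careful bookkeeping of signs in the four-way split; everything else is algebra. No additional assumption (not even Assumption~\ref{ass:sigmaContinuity}) is needed here beyond what is already used to define $\Gamma_\varepsilon^*$ via Proposition~\ref{prop:optimalWithFixed}.
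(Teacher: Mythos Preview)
Your proof is correct and follows essentially the same route as the paper: both derive the decomposition $\mathcal{R}_{\lambda_\varepsilon}(\Gamma_f)=\lambda_\varepsilon+\mathbb{E}_X[(\sigma^2(X)-\lambda_\varepsilon)\one_{\{|\Gamma_f|=1\}}]+\mathbb{E}_X[(f^*-f)^2\one_{\{|\Gamma_f|=1\}}]$ (the paper cites this as Equation~\eqref{eq:eqDecompRisk}), subtract the value at $\Gamma_\varepsilon^*$, and then use $\{|\Gamma_\varepsilon^*|=1\}=\{\sigma^2\leq\lambda_\varepsilon\}$ to turn the remaining term into $|\sigma^2-\lambda_\varepsilon|$ on the symmetric difference. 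The only cosmetic difference is that the paper phrases the last step as ``the nonzero values of $\one_{\{|\Gamma_f|=1\}}-\one_{\{|\Gamma_\varepsilon^*|=1\}}$ equal the sign of $\sigma^2-\lambda_\varepsilon$'', while you spell out the equivalent four-case split.
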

The above excess risk consists of two terms that translates two different aspect of the regression with reject option problem.
The first one is related to the $L_2$ risk of the prediction function $f$ and is rather classical in the regression setting.
On contrast, the second is related to the reject option problem. It is dictated by the behavior of the conditional variance $\sigma^2$
around the threshold $\lambda_{\varepsilon}$.


\section{Plug-in $\varepsilon$-predictor with reject option}
\label{sec:plug-in}
We devote this section to the study of a data-driven predictor with reject option based on the {\it plug-in} principle that mimics the optimal rule derived in Proposition~\ref{prop:optimalWithFixed}.
\subsection{Estimation strategy}
\label{subsec:estimation}

Equation~\eqref{eq:eqEpsilonPred} indicates that a possible way to estimate $\Gamma^{*}_{\varepsilon}$ relies on the plug-in
principle.
To be more specific, Eq.~\eqref{eq:eqEpsilonPred} suggests that estimating $f^*$ and $\sigma^2$, as well as the cumulative distribution $F_{\sigma^2}$ would be enough to get an estimator of $\Gamma^{*}_{\varepsilon}$.
To build such predictor, we first introduce a learning sample $\mathcal{D}_n = \{(X_i,Y_i), \;\; i = 1, \ldots,n\}$ which consists of $n$ independent copies of $(X,Y)$.
This dataset helps us to construct estimators $\hat{f}$ and $\hat{\sigma}^2$ of the regression function $f^*$ and the conditional variance function $\sigma^2$ respectively. In this paper, we focus on estimator $\hat{\sigma}^2$ which relies on the residual-based methods~\cite{Hall_Caroll89}.
Based on $\mathcal{D}_n$, the estimator $\hat{\sigma}^2$ is obtained by solving the regression problem of the output variable $(Y-\hat{f}(X))^2$ on the input variable $X$.
Estimating the last quantity $F_{\sigma^2}$ is rather simple by replacing cumulative distribution function by its empirical version.
Since this term only depends on the marginal distribution $\mathbb{P}_{X}$, we estimate it using a second {\it unlabeled} dataset $\mathcal{D}_N = \{X_{n+1},\ldots, X_{n+N} \}$ composed of $N$ independent copies of $X$.
This is an important feature of our methodology since unlabeled data are usually easy to get.
The dataset $\mathcal{D}_N$ is assumed to be independent of $\mathcal{D}_n$.
We set
\begin{equation*}
\hat{F}_{\hat{\sigma}^2}(\cdot) = \frac{1}{N}\sum_{i = 1}^{N} \one_{\{ \hat{\sigma}^2(X_{n+i}) \leq \cdot\}}\enspace ,
\end{equation*}
as an estimator for $F_{\sigma^2}$.
With this notation, the {\it plug-in $\varepsilon$-predictor} is the predictor with reject option defined for each $x \in \mathbb{R}^d$ as
 \begin{equation}
    \hat{\Gamma}_{\varepsilon}(x)=\begin{cases}
  \left\{\hat{f}(x)\right\}  & \text{if} \;\;
  \hat{F}_{\hat{\sigma}^2}(\hat{\sigma}^2(x))\leq 1-\varepsilon\\
  \emptyset      & \text{otherwise} \enspace.
  \end{cases}
  \label{eq:eqPlugin}
\end{equation}
It is worth noting that the proposed methodology is flexible enough to rely upon any off-the-shelf estimators of the regression function $f^*$ and the conditional variance function $\sigma^2$.

\subsection{Consistency of plug-in $\varepsilon$-predictors}
\label{subsec:consistency}
In this part, we investigate the statistical properties of the plug-in $\varepsilon$-predictors with reject option.
This analysis requires an additional assumption on the following quantity
\begin{equation*}
F_{\hat{\sigma}^{2}}(\cdot)=\mathbb{P}_{X}\left(\hat{\sigma}^{2}(X)\leq \cdot | \mathcal{D}_n \right) \enspace.
\end{equation*}
\begin{assumption}
\label{ass:hatsigmaContinuity}
The cumulative distribution function $F_{\hat{\sigma}^2}$ of $\hat{\sigma}^2(X)$ is continuous.
\end{assumption}
This condition is analogous to Assumption~\ref{ass:sigmaContinuity} but deals with the estimator $\hat{\sigma}^2(X)$ instead of the true conditional variance $\sigma^2(X)$.
This difference makes Assumption~\ref{ass:hatsigmaContinuity} rather weak as the estimator $\hat{\sigma}^2(X)$ is chosen by the practitioner.
Moreover, we can make any estimator satisfy this condition by providing a smoothed version of it.
We illustrate this strategy with $k$NN algorithm in Section~\ref{sec:knnRates}.
Next theorem is the main result of this section, it establishes the consistency of the predictor $\hat{\Gamma}_\varepsilon$ to the optimal one.
\begin{theorem}
\label{thm:mainConsistency}
Let $\varepsilon \in (0,1)$. Assume that $\sigma^2$ is bounded, $\hat{f}$ is a consistent estimator of $f^*$ w.r.t. the $L_2$ risk, and $\hat{\sigma}^2$ is a consistent estimator of $\sigma^2$ w.r.t. the $L_1$ risk. Under Assumptions~\ref{ass:sigmaContinuity}-~\ref{ass:hatsigmaContinuity}, the followings hold
\begin{equation*}
\mathbb{E}\left[\mathcal{E}_{\lambda_{\varepsilon}}\left(\hat{\Gamma}_{\varepsilon}\right)\right] \underset{n,N \to +\infty}{\longrightarrow}0, \quad  {\text and} \quad \mathbb{E}\left[|r(\hat{\Gamma}_{\varepsilon}) - \varepsilon|\right] \leq CN^{-1/2} \enspace,
\end{equation*}
where $C > 0$ is an absolute constant.
\end{theorem}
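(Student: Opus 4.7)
The plan is to handle the two conclusions of the theorem separately, using the exact decomposition of the excess risk from Proposition~\ref{prop:propExcessRisk} for the first one. Throughout, I set $\tilde F(\cdot) = F_{\hat\sigma^2}(\cdot) = \mathbb{P}_X(\hat\sigma^2(X) \leq \cdot \mid \mathcal{D}_n)$.

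I start with the rejection-rate bound, which is the simpler of the two. Under Assumption~\ref{ass:hatsigmaContinuity}, the variable $\tilde F(\hat\sigma^2(X))$ is uniform on $[0,1]$ conditionally on $\mathcal{D}_n$. Because $\{\hat F_{\hat\sigma^2}(\hat\sigma^2(X)) > 1-\varepsilon\}$ is contained in $\{\tilde F(\hat\sigma^2(X)) > 1-\varepsilon - \|\hat F_{\hat\sigma^2}-\tilde F\|_\infty\}$ and contains $\{\tilde F(\hat\sigma^2(X)) > 1-\varepsilon + \|\hat F_{\hat\sigma^2}-\tilde F\|_\infty\}$, the uniformity of $\tilde F(\hat\sigma^2(X))$ yields $|r(\hat\Gamma_\varepsilon) - \varepsilon| \leq \|\hat F_{\hat\sigma^2} - \tilde F\|_\infty$ almost surely. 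Applying the DKW inequality conditionally on $\mathcal{D}_n$ to the i.i.d.\ sample $\hat\sigma^2(X_{n+1}), \ldots, \hat\sigma^2(X_{n+N})$ and then taking total expectations delivers the claimed $CN^{-1/2}$ rate.

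For the consistency of the excess risk, Proposition~\ref{prop:propExcessRisk} writes $\mathcal{E}_{\lambda_\varepsilon}(\hat\Gamma_\varepsilon) = T_1 + T_2$ with $T_1 = \mathbb{E}_X[(f^*-\hat f)^2\one_{\{|\hat\Gamma_\varepsilon(X)|=1\}}]$ and $T_2 = \mathbb{E}_X[|\sigma^2 - \lambda_\varepsilon|\one_{\{A \triangle \hat A\}}]$, where $A = \{\sigma^2(X) \leq \lambda_\varepsilon\}$ and $\hat A = \{\hat F_{\hat\sigma^2}(\hat\sigma^2(X)) \leq 1-\varepsilon\}$. Dropping the indicator gives $T_1 \leq \mathbb{E}_X[(f^*-\hat f)^2]$, so $\mathbb{E}[T_1] \to 0$ by $L_2$-consistency of $\hat f$. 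Boundedness of $\sigma^2$ by some constant $M$ forces $\lambda_\varepsilon \leq M$ and thus $|\sigma^2-\lambda_\varepsilon|\leq 2M$, so $T_2 \leq 2M\,\mathbb{P}_X(A \triangle \hat A)$; it remains to show $\mathbb{P}(A\triangle \hat A) \to 0$ under the joint distribution.

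To control the symmetric difference, I use the representation $A = \{U \leq 1-\varepsilon\}$ with $U = F_{\sigma^2}(\sigma^2(X))$ (uniform by Assumption~\ref{ass:sigmaContinuity}) and $\hat A = \{W \leq 1-\varepsilon\}$ with $W = \hat F_{\hat\sigma^2}(\hat\sigma^2(X))$. For any $\delta > 0$, $A\triangle \hat A$ is contained in $\{|U-(1-\varepsilon)| \leq \delta\} \cup \{|W-U|>\delta\}$, the first piece having probability at most $2\delta$ by uniformity of $U$. A triangle inequality splits
\begin{equation*}
|W - U| \leq \|\hat F_{\hat\sigma^2} - \tilde F\|_\infty + \|\tilde F - F_{\sigma^2}\|_\infty + \omega_F(|\hat\sigma^2(X)-\sigma^2(X)|),
\end{equation*}
where $\omega_F$ is the modulus of continuity of $F_{\sigma^2}$, which vanishes at $0$ because $F_{\sigma^2}$ is continuous (Assumption~\ref{ass:sigmaContinuity}) and $\sigma^2$ is bounded. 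The first piece is $O(N^{-1/2})$ in expectation by DKW; the third tends to $0$ in probability by $L_1$-consistency of $\hat\sigma^2$ and is uniformly bounded. The main technical step is the middle piece: the inclusions $\{\hat\sigma^2 \leq t\} \subset \{\sigma^2 \leq t+\eta\} \cup \{|\hat\sigma^2 - \sigma^2| > \eta\}$ and its mirror, combined with Markov, yield $\sup_t|\tilde F(t)-F_{\sigma^2}(t)| \leq \omega_F(\eta) + \eta^{-1}\mathbb{E}_X[|\hat\sigma^2-\sigma^2|\mid \mathcal{D}_n]$ for every $\eta>0$; taking expectation and sending first $n\to\infty$ and then $\eta\to 0$ gives $\mathbb{E}[\|\tilde F - F_{\sigma^2}\|_\infty] \to 0$. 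Plugging these three bounds into the display, then sending $n,N\to\infty$ and finally $\delta\to 0$, yields $\mathbb{E}[\mathbb{P}_X(A\triangle\hat A)] \to 0$ and concludes the proof.
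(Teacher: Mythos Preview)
Your proof is correct, and for the rejection-rate bound it is essentially the paper's argument (DKW applied conditionally on $\mathcal{D}_n$, combined with the uniformity of $\tilde F(\hat\sigma^2(X))$), packaged slightly more directly via the sandwich on sublevel sets.

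For the excess-risk consistency, however, you take a genuinely different route. The paper introduces a pseudo-oracle predictor $\tilde\Gamma_\varepsilon$ (built from $\hat f,\hat\sigma^2$ but with the \emph{true} CDF $F_{\hat\sigma^2}$ in place of $\hat F_{\hat\sigma^2}$) and splits $\mathcal{E}_{\lambda_\varepsilon}(\hat\Gamma_\varepsilon)=\bigl[\mathcal{R}_{\lambda_\varepsilon}(\hat\Gamma_\varepsilon)-\mathcal{R}_{\lambda_\varepsilon}(\tilde\Gamma_\varepsilon)\bigr]+\mathcal{E}_{\lambda_\varepsilon}(\tilde\Gamma_\varepsilon)$, treating the threshold-calibration error (DKW) and the estimation error of $(f^*,\sigma^2)$ in two separate steps; in the latter, a case analysis on the sign of $\tilde\lambda_\varepsilon-\lambda_\varepsilon$ keeps the factor $|\sigma^2(X)-\lambda_\varepsilon|$ inside the expectation and reduces matters to the single-point quantity $|F_{\hat\sigma^2}(\lambda_\varepsilon)-F_{\sigma^2}(\lambda_\varepsilon)|$. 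You instead bound $|\sigma^2-\lambda_\varepsilon|\le 2M$ up front, work directly with Proposition~\ref{prop:propExcessRisk}, and control $\mathbb{P}(A\triangle\hat A)$ through the uniforms $U=F_{\sigma^2}(\sigma^2(X))$, $W=\hat F_{\hat\sigma^2}(\hat\sigma^2(X))$, the uniform-continuity modulus $\omega_F$, and a sup-norm bound on $\tilde F-F_{\sigma^2}$. Your argument is more elementary and avoids the pseudo-oracle altogether; the paper's decomposition is more modular (it is reused verbatim in the rates proof of Theorem~\ref{thm:theoRate}) and, by retaining $|\sigma^2(X)-\lambda_\varepsilon|$ rather than bounding it crudely, interfaces directly with the margin Assumption~\ref{ass:MA} needed there.
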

This theorem establishes the fact that the plug-in $\varepsilon$-predictor behaves asymptotically as well as the optimal $\varepsilon$-predictor both in terms of risk and rejection rate. The convergence of the rejection rate requires only Assumption~\ref{ass:hatsigmaContinuity} which is rather weak and can even be removed following the process detailed in Section~\ref{subsec:$k$NN}. In particular, the theorem shows that the rejection rate of the plug-in
$\varepsilon$-predictor is of level $\varepsilon$ up to a term of order $O(N^{-1/2})$. This rate is similar to the one obtained in the classification setting~\cite{Denis_Hebiri19}.
It relies on the difference
between the cumulative distribution $F_{\hat{\sigma}^2}$ and its empirical counterpart $\hat{F}_{\hat{\sigma}^2}$ that is controlled using Dvoretzky-Kiefer-Wolfowitz Inequality~\cite{Massart90}.
Interestingly, this result applies to any consistent estimators of $f^*$ and $\sigma^2$.

The estimation of regression function $f^*$ is widely studied and suitable algorithm such as random forests, kernel procedures, or $k$NN estimators can be used, see~\cite{Biau_Devroye15, Gyofri_Kohler_Krzyzak_Walk02, scornet2015,Stone77, Tsybakov08}. The estimation of the conditional variance function which relies on the residual-based methods has also been extensively studied based on kernel procedures, see for instance~\cite{Fan_Yao98, Hall_Caroll89, Hardle_Tsybakov97, KulikWichelhaus11, ShenGaoWittenHan19}. In the next section, we derive rates of convergence in the case where both estimators $\hat{f}$ and $\hat{\sigma}^2$ rely on the $k$NN algorithm. In particular, we establish rates of convergence for
$\hat{\sigma}^2$ in sup norm (see Proposition~\ref{prop:rateSigma}
in the supplementary material).

\section{Application to $k$NN algorithm: rates of convergence}
\label{sec:knnRates}
The plug-in $\varepsilon$-predictor $\hat{\Gamma}_\varepsilon$ relies on estimators of the regression and the conditional variance functions.
In this section, we consider the specific case of $k$NN based estimations.
We refer to the resulting predictor as {\it $k$NN predictor with reject option}.
Specifically, we establish rates of convergence for this procedure.
In addition, since $k$NN estimator of $\sigma^2$ violates Assumption~\ref{ass:hatsigmaContinuity}, applying our methodology to $k$NN has the benefit of illustrating the smoothing technique to make this condition be satisfied.
\subsection{Assumptions}
\label{subsec:ratesAssumptions}

To study the performance of the $k$NN predictor with reject option in the finite sample regime,
we assume that $X$ belongs to a regular compact set $\mathcal{C} \subset \mathbb{R}^{d}$, see~\cite{Audibert_Tsybakov07}. Besides, we make the following assumptions.
\begin{assumption}
\label{ass:regularity}
The functions $f^*$ and $\sigma^2$ are Lipschitz.
\end{assumption}
\begin{assumption}[Strong density assumption]
\label{ass:StrongDensityAssumption}
We assume that the marginal distribution $\mathbb{P}_{X}$ admits a density $\mu$ w.r.t to the Lebesgue measure such that for all $x \in \mathcal{C}$, we have  $0 < \mu_{\min} \leq \mu(x) \leq \mu_{\max}$.
\end{assumption}
These two assumptions are rather classical when we deal with rate of convergence
and we refer the reader to the baseline books \cite{Gyofri_Kohler_Krzyzak_Walk02,Tsybakov08}. In particular,
we point out that the strong density assumption has been introduced in the context of binary classification for instance in~\cite{Audibert_Tsybakov07}.
The last assumption that we require highlights the behavior of $\sigma^2$ around the threshold $\lambda_{\varepsilon}$.
\begin{assumption}[$\alpha$-exponent assumption]
\label{ass:MA}
We say that $\sigma^2$ has exponent $\alpha \geq 0$ (at level $\lambda_{\varepsilon}$) with respect to $\mathbb{P}_X$ if there exists $c^{*}>0$
such that for all $ t >0 $
\begin{equation*}
\mathbb{P}_X\left(0 < |\sigma^2(X) - \lambda_{\varepsilon}| \leq t \right) \leq c^{*} t^{\alpha} \enspace.
\end{equation*}
\end{assumption}
This assumption has been first introduced in~\cite{Polonik95} and is also referred as Margin assumption in the binary classification setting, see \cite{Mammen_Tsybakov99}. For $\alpha > 0$, Assumption~\ref{ass:MA} ensures that the random variable $\sigma^2(X)$ can not concentrate too much around the threshold $\lambda_{\varepsilon}$. It allows to derive faster rates of convergence.
Note that, if $\alpha = 0$ there is no assumption.

\subsection{$k$NN predictor with reject option}
\label{subsec:$k$NN}

For any $x \in \mathbb{R}^d$, we denote by $(X_{(i,n)}(x),Y_{(i,n)}(x)), i = 1,\ldots n$ the reordered data according to the $\ell_2$ distance in $\mathbb{R}^d$, meaning that $\|  X_{(i,n)}(x) - x  \| < \|  X_{(j,n)}(x) - x  \| $ for all $i< j$ in $\{1,\ldots, n \}$.
Note that Assumption~\ref{ass:StrongDensityAssumption} ensures that ties occur with probability $0$ (see~\cite{Gyofri_Kohler_Krzyzak_Walk02} for more details).
Let $k = k_n$ be an integer.
The $k$NN estimator of $f^*$ and $\sigma^2$ are then defined, for all $x$, as follows
\begin{equation*}
\hat{f}(x)  = \frac{1}{k_n}\sum_{i = 1}^{k_n}  Y_{(i,n)}(x) \;\;{\rm and} \;\; \hat{\sigma}^2(x) = \frac{1}{k_n} \sum_{i =1}^{k_n}\left( Y_{(i,n)}(x) - \hat{f}(X_{(i,n)}(x))\right)^{2} \enspace.
\end{equation*}
Conditional on $\mathcal{D}_n$, the cumulative distribution function
$F_{\hat{\sigma}^2}$ is not continuous and then Assumption~\ref{ass:hatsigmaContinuity} does not hold. To avoid this issue, we introduce a random perturbation $\zeta$ distributed according to the Uniform distribution on $[0,u]$ that is independent from every other random variable where $u>0$ is a (small) fixed real number that will be specified later. Then, we define the random variable $\bar{\sigma}^2(X,\zeta) := \hat{\sigma}^2(X) + \zeta$. It is not difficult to see that, conditional on $\mathcal{D}_n$ the cumulative distribution $F_{\bar{\sigma}^2}$ of $\bar{\sigma}^2(X,\zeta)$ is continuous. Furthermore, by the triangle inequality, the consistency of $\hat{\sigma}^2$ implies the consistency of
$\bar{\sigma}^2$ provided that $u$ tends to $0$. Therefore, we naturally define the $k$NN predictor with reject option as follows.

Let $(\zeta_1, \ldots, \zeta_N)$ be independent copies of $\zeta$ and independent of every other random variable.
We set
\begin{equation*}
\hat{F}_{\bar{\sigma}^2}(.) = \frac{1}{N} \sum_{i = 1}^N    \one_{\{ \hat{\sigma}^2(X_{n+i}) + \zeta_i \leq \cdot\}} \enspace,
\end{equation*}
and the $k$NN $\varepsilon$-predictor with reject option is then defined for all $x$ and $\zeta$ as
 \begin{equation*}
    \hat{\Gamma}_{\varepsilon}(x, \zeta)=\begin{cases}
  \left\{\hat{f}(x)\right\}  & \text{if} \;\;
  \hat{F}_{\bar{\sigma}^2}(\bar{\sigma}^2(x, \zeta))\leq 1-\varepsilon\\
  \emptyset      & \text{otherwise} \enspace.
  \end{cases}
\end{equation*}

\subsection{Rates of convergence}
\label{subsec:ratesCve}

In this section, we derive the rates of convergence of the $k$NN $\varepsilon$-predictor
in the following framework. We assume that $Y$ is bounded or that $Y$ satisfies
\begin{equation}
\label{eq:eqGaussianCase}
Y = f^{*}(X) + \sigma(X)\xi \enspace,
\end{equation}
where $\xi$ is independent of $X$ and distributed according to a standard normal distribution.
Note that these assumptions covers a broad class of applications.
Under these assumptions, we can state the following result.
\begin{theorem}
\label{thm:theoRate}
Grant Assumptions~\ref{ass:sigmaContinuity},~\ref{ass:regularity},~\ref{ass:StrongDensityAssumption}, and~\ref{ass:MA}. Let $\varepsilon \in (0,1)$, if
$k_n \propto n^{2/(d+2)}$,
and $u \leq n^{-1/(d+2)}$, then
the $k$NN $\varepsilon$-predictor $\hat{\Gamma}_{\varepsilon}$ satisfies
\begin{equation*}
\mathbb{E}\left[\mathcal{E}_{\lambda_{\varepsilon}}\left(\hat{\Gamma}_{\varepsilon}\right)\right] \leq  C \left(n^{-2/(d+2)}
+ \log(n)^{(\alpha+1)}{n}^{-(\alpha+1)/(d+2)}
+ N^{-1/2}\right)\enspace ,
\end{equation*}
where $C > 0$ is a constant which depends on $f^{*}$, $\sigma^2$, $c_0$, $c^*$, $\alpha$, $\mathcal{C}$, and on the dimension $d$.
\end{theorem}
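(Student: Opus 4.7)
The plan is to use the excess risk decomposition given by Proposition~\ref{prop:propExcessRisk} and bound the two resulting terms separately, using the standard $L_2$ rate of the $k$NN regression estimator for the first and a margin-type argument for the second. Write
\[
\mathbb{E}\bigl[\mathcal{E}_{\lambda_{\varepsilon}}(\hat{\Gamma}_{\varepsilon})\bigr]
= \underbrace{\mathbb{E}\bigl[(f^*(X)-\hat{f}(X))^2\,\one_{\{|\hat{\Gamma}_{\varepsilon}(X,\zeta)|=1\}}\bigr]}_{(I)}
+ \underbrace{\mathbb{E}\bigl[|\sigma^2(X)-\lambda_{\varepsilon}|\,\one_{\{|\hat{\Gamma}_{\varepsilon}(X,\zeta)|\neq|\Gamma^{*}_{\varepsilon}(X)|\}}\bigr]}_{(II)} .
\]
For $(I)$, I would drop the indicator and invoke the classical $L_2$ rate for the $k$NN regression estimator under Assumptions~\ref{ass:regularity}--\ref{ass:StrongDensityAssumption} with the choice $k_n\propto n^{2/(d+2)}$ (Stone / Gy\"orfi et al.), yielding $(I)\leq C n^{-2/(d+2)}$. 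The boundedness or Gaussian-noise hypothesis on $Y$ guarantees the necessary moment control on residuals.

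The core of the argument is the bound on $(II)$, where I would use a peeling argument around the critical level $\lambda_{\varepsilon}$. Fix a threshold $\tau_n>0$ (to be calibrated later) and split
\[
(II) \;\leq\; \tau_n\,\mathbb{P}\bigl(0<|\sigma^2(X)-\lambda_{\varepsilon}|\leq\tau_n\bigr)
\;+\; \|\sigma^2\|_\infty\,\mathbb{P}\bigl(|\sigma^2(X)-\lambda_{\varepsilon}|>\tau_n,\;|\hat{\Gamma}_{\varepsilon}(X,\zeta)|\neq|\Gamma^{*}_{\varepsilon}(X)|\bigr).
\]
The first summand is $\leq c^*\tau_n^{\alpha+1}$ by Assumption~\ref{ass:MA}. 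For the second, disagreement means that $\hat{F}_{\bar{\sigma}^2}(\bar{\sigma}^2(X,\zeta))$ and $F_{\sigma^2}(\sigma^2(X))=:U$ lie on opposite sides of $1-\varepsilon$; so it suffices to show $|\hat{F}_{\bar{\sigma}^2}(\bar{\sigma}^2(X,\zeta))-F_{\sigma^2}(\sigma^2(X))|<|U-(1-\varepsilon)|$ is unlikely when $|\sigma^2(X)-\lambda_{\varepsilon}|>\tau_n$.

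I would control this difference by the triangle inequality
\[
\bigl|\hat{F}_{\bar{\sigma}^2}(\bar{\sigma}^2(X,\zeta))-F_{\sigma^2}(\sigma^2(X))\bigr|
\;\leq\; \sup_{t}\bigl|\hat{F}_{\bar{\sigma}^2}(t)-F_{\bar{\sigma}^2}(t)\bigr|
+ \bigl|F_{\bar{\sigma}^2}(\bar{\sigma}^2(X,\zeta))-F_{\sigma^2}(\sigma^2(X))\bigr| .
\]
The first piece is $O(N^{-1/2})$ (in expectation, with exponential tail) by the Dvoretzky--Kiefer--Wolfowitz inequality~\cite{Massart90}, exactly as in Theorem~\ref{thm:mainConsistency}. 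For the second, I would use the sup-norm rate $\|\hat{\sigma}^2-\sigma^2\|_\infty = O_{\Pro}\bigl((\log n/n)^{1/(d+2)}\bigr)$ for the $k$NN variance estimator (Proposition~\ref{prop:rateSigma} of the supplementary material), together with $u\leq n^{-1/(d+2)}$, to obtain $\|\bar{\sigma}^2(\cdot,\zeta)-\sigma^2\|_\infty\leq \eta_n$ with $\eta_n\asymp(\log n/n)^{1/(d+2)}$; Assumption~\ref{ass:MA} (or the continuity/Lipschitzness of $F_{\sigma^2}$ inherited from Assumptions~\ref{ass:regularity}--\ref{ass:StrongDensityAssumption}) then converts this into the same order. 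After a union bound over the two high-probability events, the disagreement probability on $\{|\sigma^2-\lambda_{\varepsilon}|>\tau_n\}$ is bounded by $\Pro(|\sigma^2(X)-\lambda_{\varepsilon}|\leq \eta_n+CN^{-1/2})\lesssim (\eta_n+N^{-1/2})^{\alpha}$, which combined with the $\|\sigma^2\|_\infty$ prefactor contributes a term of order $(\eta_n+N^{-1/2})^{\alpha+1}$.

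Choosing $\tau_n \asymp \eta_n + N^{-1/2}$ balances the two pieces and yields
\[
(II) \;\leq\; C\bigl(\log(n)^{\alpha+1} n^{-(\alpha+1)/(d+2)} + N^{-(\alpha+1)/2}\bigr),
\]
and combined with the bound on $(I)$ gives the theorem (using $N^{-(\alpha+1)/2}\leq N^{-1/2}$). The main technical obstacle, and the step I expect to carry out most carefully, is deriving the sup-norm rate for $\hat{\sigma}^2$: because the residuals $(Y-\hat{f}(X))^2$ are not independent of $\hat{f}$, one has to decompose $\hat{\sigma}^2-\sigma^2$ into a genuine $k$NN average of $(Y-f^*(X))^2-\sigma^2(X)$ (handled by standard concentration for bounded or sub-Gaussian variables combined with the Lipschitz regularity of $\sigma^2$) plus a cross term in $(\hat{f}-f^*)$, which must be shown negligible using the sup-norm rate of $\hat{f}$ itself. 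Under the bounded-$Y$ or Gaussian-noise model of~\eqref{eq:eqGaussianCase}, this can be handled through Bernstein/Hoeffding inequalities for the $k$NN neighbors and a uniform covering argument for $\mathcal{C}$.
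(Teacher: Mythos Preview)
Your overall plan---apply Proposition~\ref{prop:propExcessRisk} to split into $(I)+(II)$, bound $(I)$ by the $L_2$ rate of $\hat f$, and drive $(II)$ with a margin argument based on the sup-norm rate of $\hat\sigma^2$---is on the right track, and your sketch for proving Proposition~\ref{prop:rateSigma} (decompose $\hat\sigma^2-\sigma^2$ into the ``honest'' $k$NN average of $(Y-f^*)^2-\sigma^2$ plus a cross term in $\hat f-f^*$) is exactly what the paper does.

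Where the paper differs, and where your argument develops a real gap, is the handling of the unlabeled-sample contribution. The paper does \emph{not} attack $(II)$ directly for $\hat\Gamma_\varepsilon$; it inserts the pseudo-oracle
$\tilde\Gamma_\varepsilon$ that uses the true (conditional on $\mathcal D_n$) CDF $F_{\bar\sigma^2}$ in place of $\hat F_{\bar\sigma^2}$, and splits the excess risk as
\[
\mathbb{E}\bigl[\mathcal R_{\lambda_\varepsilon}(\hat\Gamma_\varepsilon)-\mathcal R_{\lambda_\varepsilon}(\tilde\Gamma_\varepsilon)\bigr]
\;+\;\mathbb{E}\bigl[\mathcal E_{\lambda_\varepsilon}(\tilde\Gamma_\varepsilon)\bigr].
\]
The second piece is handled entirely in $\sigma^2$-space: from $\|\bar\sigma^2-\sigma^2\|_\infty\le\hat h_u$, the $W_\infty$/Bobkov--Ledoux lemma (Lemma~\ref{thm:BobkovLedoux16}) gives $|\tilde\lambda_\varepsilon-\lambda_\varepsilon|\le\hat h_u$, so disagreement between $\tilde\Gamma_\varepsilon$ and $\Gamma^*_\varepsilon$ forces $|\sigma^2(X)-\lambda_\varepsilon|\le 2\hat h_u$, and Assumption~\ref{ass:MA} yields the $(1+\alpha)$ power directly---no peeling needed. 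The first piece is handled entirely in CDF-space: $F_{\bar\sigma^2}(\bar\sigma^2(X,\zeta))$ is exactly uniform on $[0,1]$, so DKW plus Lemma~\ref{lem:DH18} gives $A_\varepsilon\le CN^{-1/2}$, which is then multiplied by the crude bound $\|\sigma^2\|_\infty+\lambda_\varepsilon$.

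Your direct route forces you to compare the CDF-scale quantity $|\hat F_{\bar\sigma^2}(\bar\sigma^2)-F_{\sigma^2}(\sigma^2)|$ with the $\sigma^2$-scale quantity $|\sigma^2(X)-\lambda_\varepsilon|$, and both conversions you invoke are unjustified. First, Assumptions~\ref{ass:regularity}--\ref{ass:StrongDensityAssumption} do \emph{not} imply that $F_{\sigma^2}$ is Lipschitz: a Lipschitz $\sigma^2$ can have nearly flat regions on a set of positive $\mathbb P_X$-measure, making the pushforward density unbounded, so $\|\bar\sigma^2-\sigma^2\|_\infty\le\eta_n$ does not give $|F_{\bar\sigma^2}(\bar\sigma^2)-F_{\sigma^2}(\sigma^2)|\lesssim\eta_n$. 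Second, Assumption~\ref{ass:MA} is an \emph{upper} bound on the mass near $\lambda_\varepsilon$, hence yields $|U-(1-\varepsilon)|\lesssim|\sigma^2(X)-\lambda_\varepsilon|^\alpha$, the opposite inequality from what you need to lower-bound $|U-(1-\varepsilon)|$ on $\{|\sigma^2(X)-\lambda_\varepsilon|>\tau_n\}$. Consequently, your claimed $N^{-(\alpha+1)/2}$ rate is not obtainable under the stated hypotheses; the CDF-space margin exponent is always $1$ (because $U$ is uniform), which is why the paper gets only $N^{-1/2}$ for this term. The pseudo-oracle decomposition is precisely what keeps the two scales decoupled and avoids this difficulty.
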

Each part of the above rate describes a given feature of the problem. The first one relies on the estimation error of the regression function. The second one, which depends in part on the parameter $\alpha$ from Assumption~\ref{ass:MA}, is due to the estimation error in sup norm of the conditional variance
$\mathbb{E}\left[\left(\sup_{x \in \mathcal{C}}\left|\hat{\sigma}^2(x)-\sigma^2(x)\right|\right)\right] \leq C \log(n)n^{-1/(d+2)}$
stated in Proposition~\ref{prop:rateSigma} in the supplementary material. 
Notice that for $\alpha = 1$, the second term is of the same order (up to logarithmic factor) as the term corresponding to the estimation of the regression function.
The last term is directly linked to the estimation of the threshold $\lambda_{\varepsilon}$.
Lastly, for $\alpha > 1$, we observe, provided that the size of the unlabeled sample $N$ is sufficiently large, that this rate is the same as the rate of $\hat{f}$ in $L_2$ norm which is then the best situation that we can expect for the rejection setting.


\section{Numerical experiments}
\label{sec:numerical}

In this section, we present numerical experiments to illustrate the performance of the plug-in $\varepsilon$-predictor.
The construction process of this predictor is described in Section~\ref{subsec:estimation} and relies on estimators of the regression and the conditional variance functions.
The code used for the implementation of the plug-in $\varepsilon$-predictor can be found at \url{https://github.com/ZaouiAmed/Neurips2020_RejectOption}.
For this experimental study, we consider the same algorithm for both estimation tasks and build three plug-in $\varepsilon$-predictors based respectively on support vector machines (\texttt{svm}), random forests (\texttt{rf}), and $k$NN (\texttt{knn}) algorithms.
Besides, to avoid non continuity issues, we add the random perturbation $\zeta \sim \mathcal{U} [0, 10^{-10}]$ to all of the considered methods as described in Section~\ref{subsec:$k$NN}.
The performance is  evaluated on two benchmark datasets: {\it QSAR aquatic toxicity} and {\it Airfoil Self-Noise} coming from the UCI database.
We refer to these two datasets as \texttt{aquatic} and \texttt{airfoil} respectively.
For all datasets, we split the data into three parts (50  \% train labeled, 20 \% train unlabeled, 30 \% test). The first part is used to estimate both regression and variance functions, while the second part
is used to compute the empirical cumulative distribution function.
Finally, for each $\varepsilon \in \{i/10, \;\; i = 0, \ldots,9\}$ and each plug-in $\varepsilon$-predictor, we compute the empirical rejection rate $\hat{r}$ and the empirical error $\widehat{\Err}$ on the test set. This procedure is repeated $100$ times and we report the average performance on the test set alongside its standard deviation. We employ the $10$-fold cross-validation to select the parameter $k \in \{5,10,15,20,30,50,70,100,150\}$ of the $k$NN algorithm. For random forests and svm procedures, we used respectively the \texttt{R} packages \texttt{randomForest} and \texttt{e1071} with default parameters.

\subsection{Datasets}
\label{subsec:dataset}

The datasets used for the experiments are briefly described bellow:\\
{\it QSAR aquatic toxicity} has been used to develop quantitative regression QSAR models to predict acute aquatic toxicity towards the fish Pimephales promelas. This dataset is composed of $n = 546$ observations for which
$8$ numerical features are measured. The output takes its values in $[0.12, 10. 05]$.\\
{{\it Airfoil Self-Noise} is composed of $n = 1503$ observations for which $5$ features are measured.
This dataset is obtained from a series of aerodynamic and acoustic tests. The output is the scaled sound pressure level, in decibels. It takes its values in $[103,140]$.

Since the variance function plays a key role in the construction of the plug-in $\varepsilon$-predictor, we display in Figure~\ref{fig:hist} the histogram of an estimate of $\sigma^2(X)$ produced by the random forest algorithm.
More specifically, 
for each $i = 1, \ldots,n$, we evaluate $\hat{\sigma}^2(X_i)$ by $10$-fold cross-validation and build the histogram of $(\hat{\sigma}^2(X_i))_{i = 1, \ldots, n}$ thereafter.
Left and right panels of Figure~\ref{fig:hist} deal respectively with the \texttt{aquatic} and \texttt{airfoil} datasets and reflect two different situations where the use of reject option is relevant. The estimated variance in the \texttt{airfoil} dataset is typically large (about $40 \%$ of the values are larger than $10$) and then we may have some doubts in the associated prediction.
According to the \texttt{aquatic} dataset, main part of the estimated values $\hat{\sigma}^2$ is smaller than $1$ and then the use of the reject option may seem less significant. However, in this case, the predictions produced by the plug-in $\varepsilon$-predictors would be very accurate.

\begin{figure}[!ht]
\centering
\begin{tabular}{cc}
\includegraphics[height = 0.3 \columnwidth]{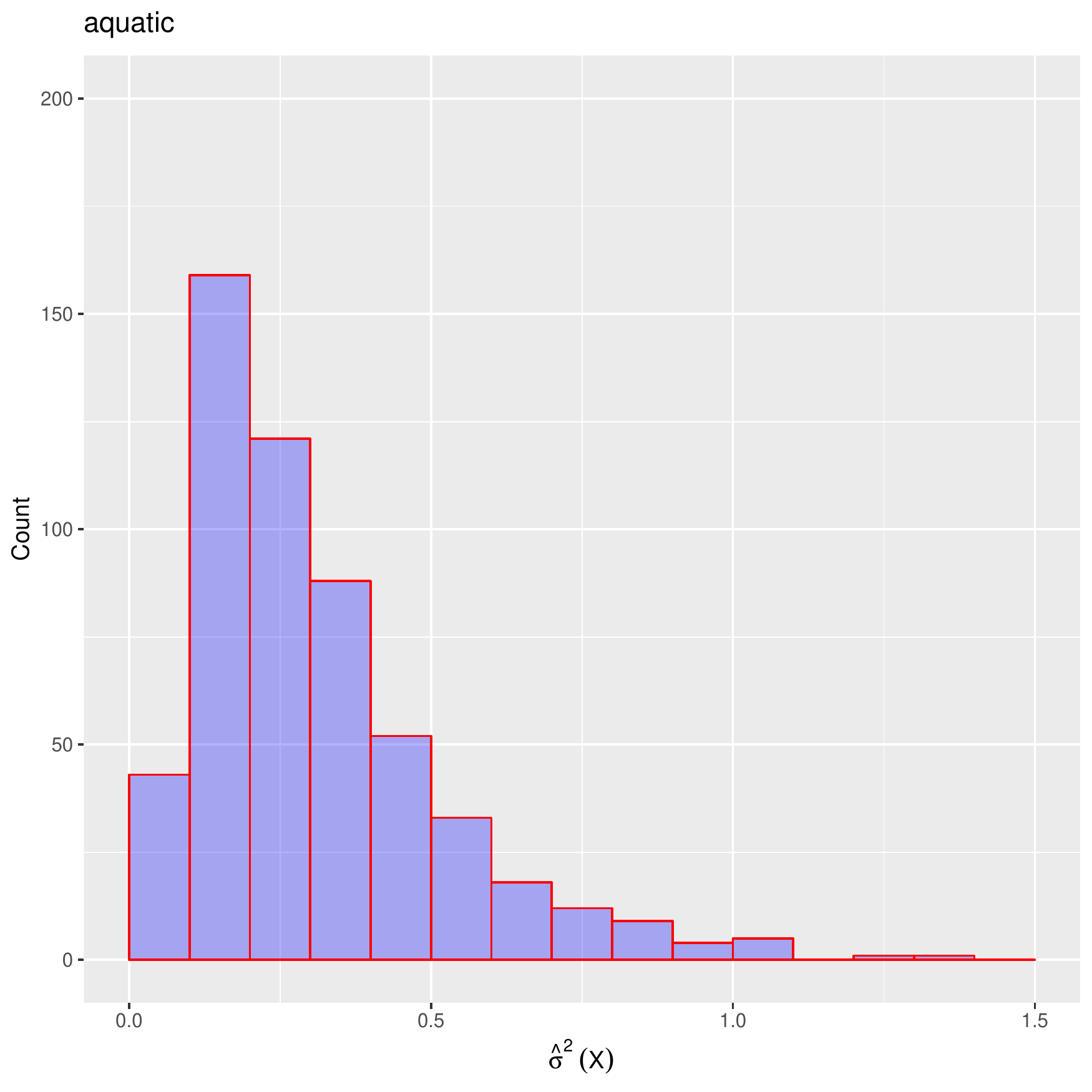} &
\includegraphics[height = 0.3 \columnwidth]{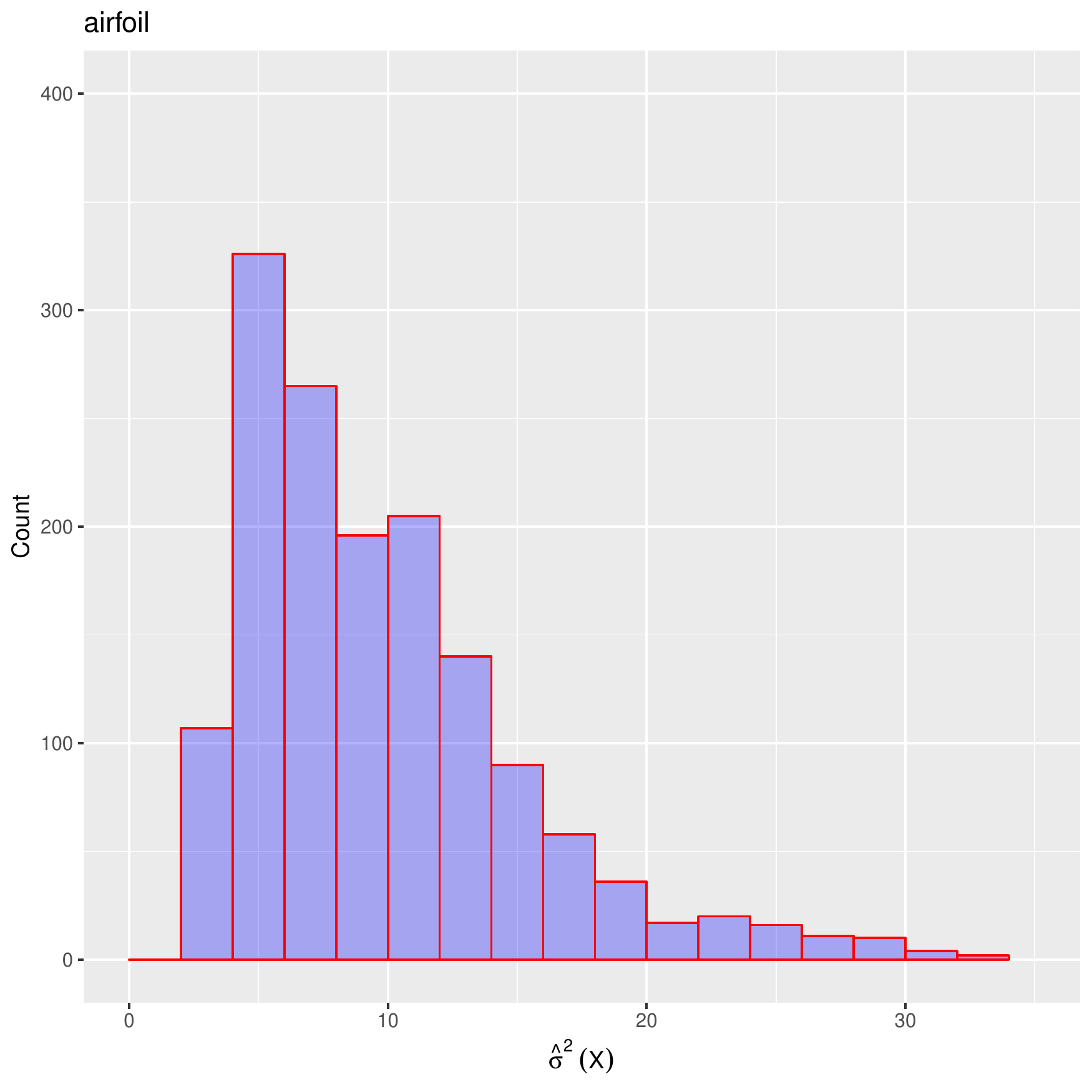}
\end{tabular}
\caption{Histogram of the estimates of $\sigma^2(X)$}
\label{fig:hist}
\end{figure}

\subsection{Results}
\label{subsec:results}

We present the obtained results in Figure~\ref{fig:result} and Table~\ref{result1}. We make a focus on the values of $\varepsilon \in \{0, 0.2,0.5,0.8\}$. As a general picture, the results are reflecting our theory: the empirical errors of the plug-in $\varepsilon$-predictors are decreasing \wrt $\varepsilon$ for both datasets and their empirical rejection rates are very close to their expected values. Indeed, Table~\ref{result1} displays how precise is the estimation of the rejection rate whatever the method used.
This is in accordance with our theoretical findings.
Moreover, the empirical errors of the plug-in $\varepsilon$-predictors based on the random forests and $k$NN algorithms are decreasing \wrt $\varepsilon$ for both datasets.
As expected, the use of the reject option improves the prediction precision.
As an illustration, for \texttt{airfoil} dataset and the predictor based on random forests, the error is divided by $2$ if we reject $50 \%$ of the data.
However, we discover that the decrease for the prediction error is not systematic. In the case of plug-in $\varepsilon$-predictor based on the svm algorithm and with the \texttt{aquatic} dataset, we observe a strange curve for the error rate (see Figure~\ref{fig:result}-left).
We conjecture that this phenomenon is due to a poor estimation of the variance.
Indeed, in Figure~\ref{fig:resultadd}, we present the performance of some kind of hybrid plug-in $\varepsilon$-predictors: we still use the svm algorithm to estimate the regression function; the variance function estimation is done based on svm (dashed line), random forests (dotted line), and $k$NN (dash-dotted line).
From Figure~\ref{fig:resultadd}, we observe that the empirical error $\widehat{\Err}$ is now decreasing \wrt $\varepsilon$ for the hybrid predictors based on svm and random forests, and that the performance is quite good.

\begin{table}[!ht]
\caption{Performances of the three plug-in $\varepsilon$-predictors on the real datasets $\texttt{aquatic}$, and $\texttt{airfoil}$.}
\centering
{\tiny
{\setlength{\tabcolsep}{2pt}
\vspace*{0.25cm}
\begin{tabular}{l || cc | cc | cc || cc | cc | cc}
\multicolumn{1}{c}{} & \multicolumn{6}{c}{{\texttt{aquatic}}} &  \multicolumn{6}{c}{{\texttt{airfoil}}}\\ \hline
\multicolumn{1}{c}{}  &  \multicolumn{2}{c}{{\texttt{svm}}} & \multicolumn{2}{c}{{\texttt{rf}}} & \multicolumn{2}{c}{{\texttt{knn}}} & \multicolumn{2}{c}{{\texttt{svm}}} & \multicolumn{2}{c}{{\texttt{rf}}} & \multicolumn{2}{c}{{\texttt{knn}}}
\\ \hline \noalign{\smallskip}
1-$\varepsilon$ & $\widehat{\Err}$ & $1-\hat{r}$ & $\widehat{\Err}$ & $1-\hat{r}$ &  $\widehat{\Err}$ & $1-\hat{r}$ & $\widehat{\Err}$ & $1-\hat{r}$ & $\widehat{\Err}$ & $1-\hat{r}$ &  $\widehat{\Err}$ & $1-\hat{r}$\\ \hline \noalign{\smallskip}
$1$ & 1.38 (0.18) & 1.00 (0.00) & 1.34 (0.18) &  1.00 (0.00) & 2.29 (0.27)   & 1.00 (0.00) &  11.81 (1.03) & 1.00 (0.00) & 14.40 (1.04) &  1.00 (0.00) & 35.40 (2.05)   & 1.00 (0.00) \\
$0.8$ & 1.08 (0.17) & 0.81 (0.05) & 1.04 (0.16) &  0.80 (0.05) & 1.98 (0.26)   & 0.80 (0.04) &  8.27(0.86) & 0.80 (0.03) & 10.26 (0.95) & 0.80 (0.03) & 31.13 (1.96) & 0.80 ( 0.03)\\
$0.5$ & 0.91 (0.18) & 0.50 (0.06) & 0.81 (0.18) &  0.50 (0.06) & 1.51 (0.30)   & 0.50 (0.06)  & 5.15 (0.92) & 0.50 (0.04) & 7.22 (0.92) & 0.50 (0.3) & 22.42 (2.13) & 0.50 (0.03) \\
$0.2$ & 1.01 (0.32) & 0.19 (0.05) & 0.55 (0.21) &  0.20 (0.05) & 0.75 (0.37)   & 0.19 (0.05)  & 2.6 (0.64) & 0.20 (0.03) & 4.00 (0.74) & 0.20 (0.03) & 17.27 (3.00) & 0.19 (0.03) \\
\end{tabular}}}
\label{result1}
\end{table}

\begin{figure}[!ht]
\begin{center}
\begin{tabular}{cc}
\includegraphics[height = 0.3 \columnwidth]{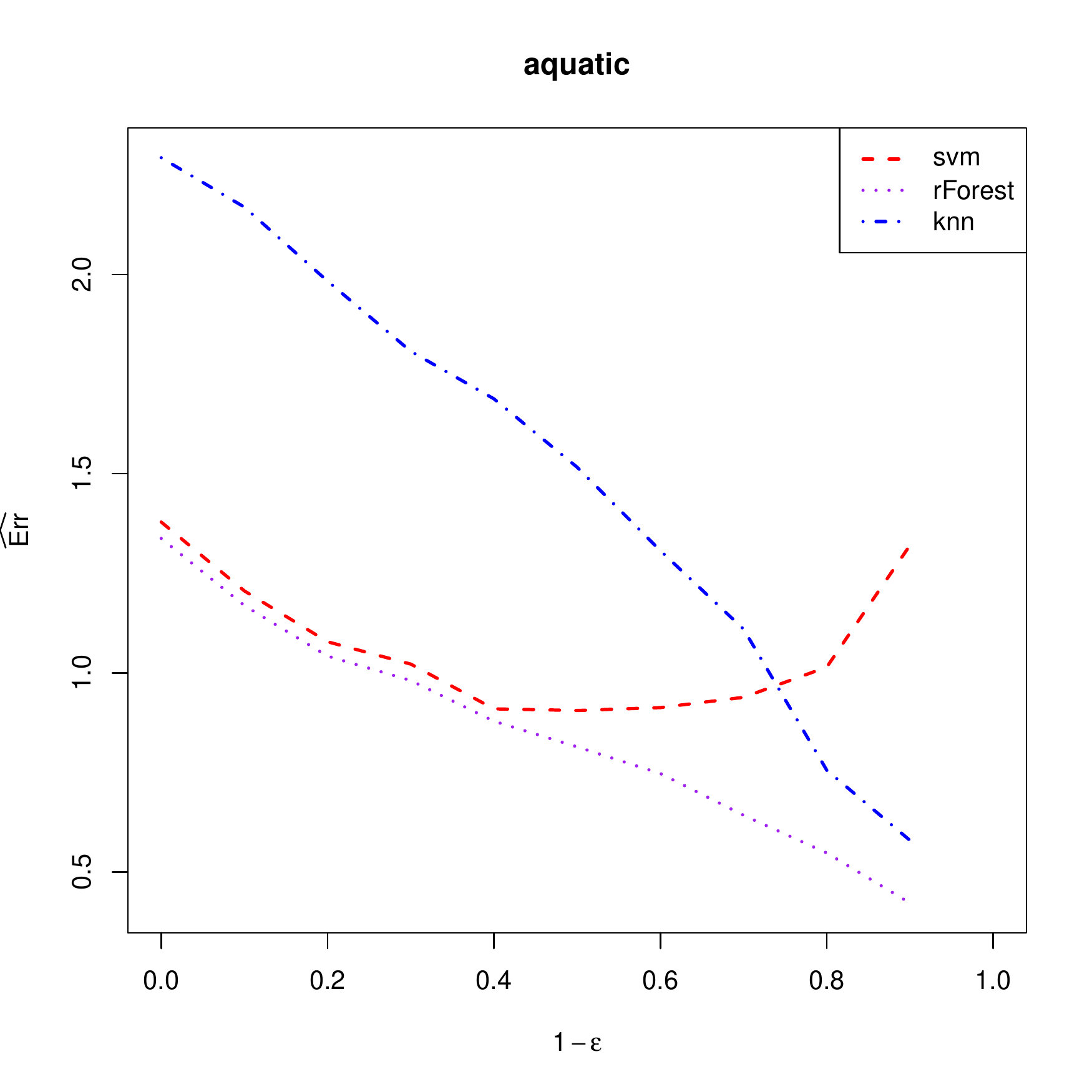}  &
\includegraphics[height = 0.3 \columnwidth]{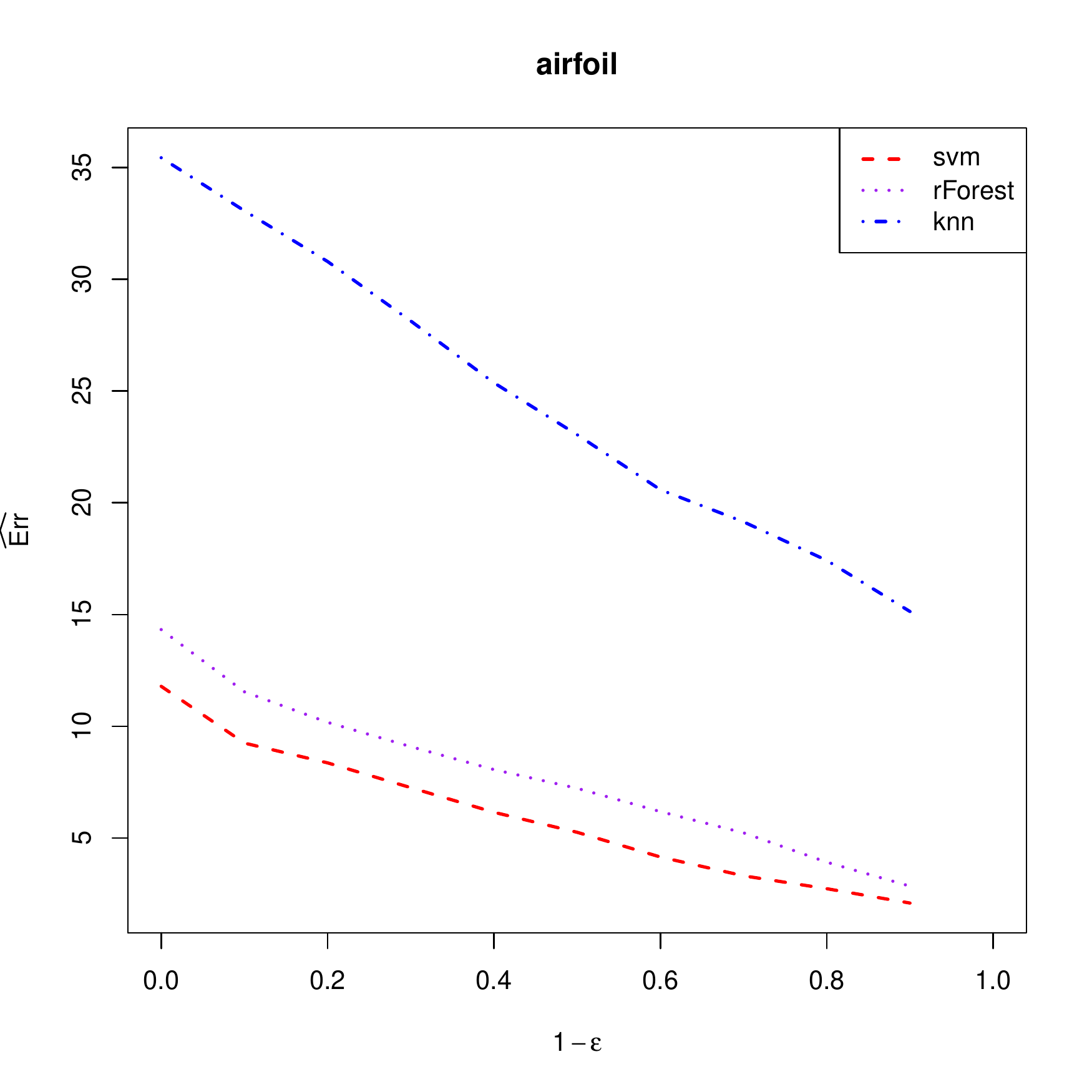} \\
\end{tabular}
\end{center}
\caption{\label{fig:result}
Visual description of the performance of three plug-in $\varepsilon$-predictors on the $\texttt{aquatic}$, and $\texttt{airfoil}$ datasets.}
\end{figure}

\begin{figure}[!ht]
\begin{center}
\begin{tabular}{c}
\includegraphics[height = 0.3 \columnwidth]{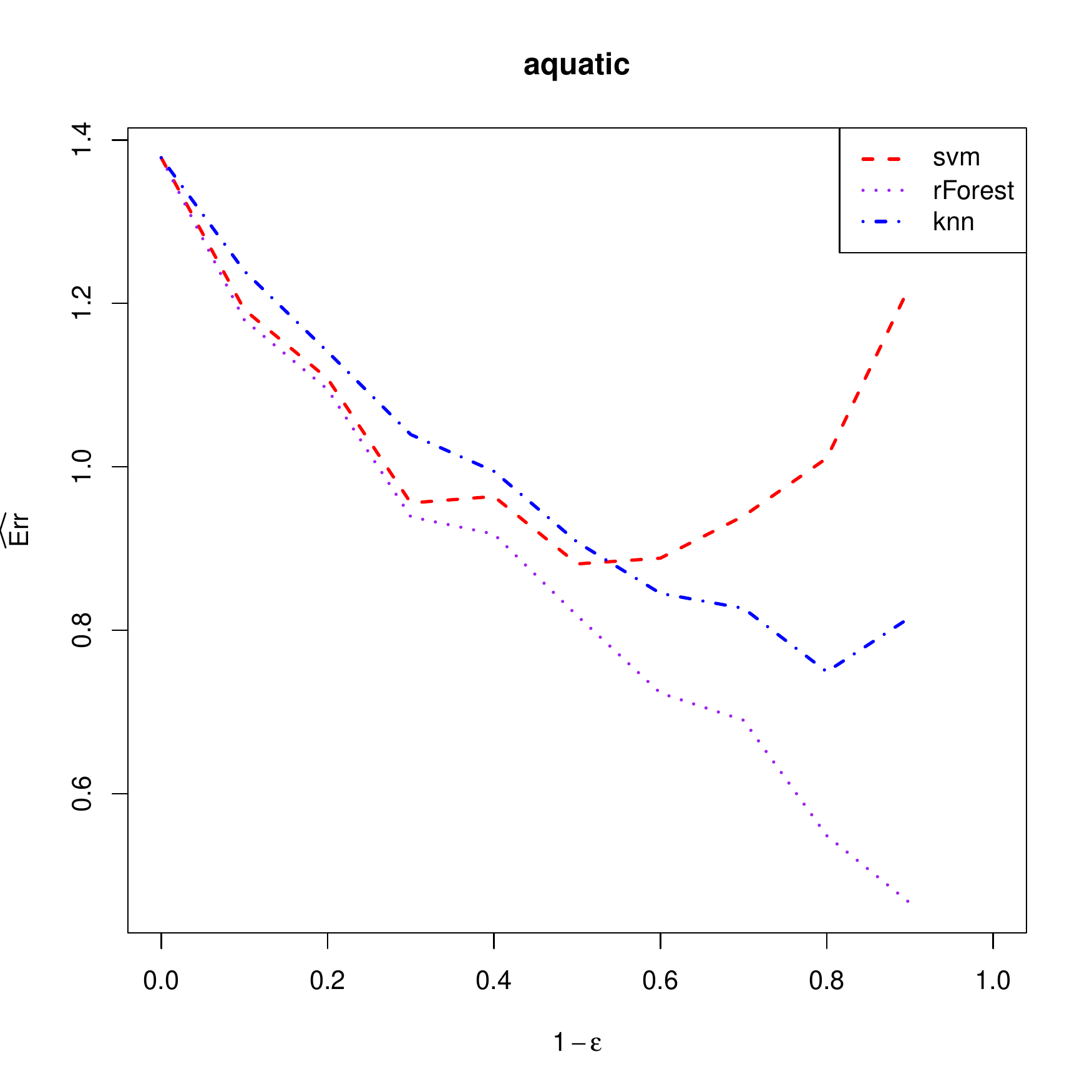}  \\
\end{tabular}
\end{center}
\caption{\label{fig:resultadd}
Additional description of the performance of the plug-in procedure on $\texttt{aquatic}$ dataset.}
\end{figure}
\newpage
\section{Conclusion}
We generalized the use of the reject option to the regression setting. We investigated the particular setting where the rejection rate is bounded.
In this framework, an optimal rule is derived, it relies on thresholding of the variance function.
Based on the {\it plug-in} principle, we derived a semi-supervised algorithm that can be applied on top of any off-the-shelf estimators of both regression and variance functions. One of the main features of the proposed procedure is that it precisely controls the probability of rejection.
We derived general consistency results on rejection rate and on excess risk. We also established rates of convergence for the predictor with reject option when the regression and the variance functions are estimated by $k$NN algorithm.
In future work, we plan to apply our methodology to the high-dimensional setting, taking advantage of sparsity structure of the data.
\section*{Broader impact}
Approaches based on reject option may be helpful at least from two perspectives. First, when human action is limited by time or any other constraint, reject option is an efficient tool to prioritize the human action.
On the other hand, in a world where automatic decisions need to be balanced and considered with caution, abstaining from prediction is one way to prevent from damageable decision-making.
In particular, human is more likely able to detect anomalies such as bias in data.
In a manner of speaking, the use of the reject option compromises between human and machines! Our numerical and theoretical analyses support this idea, in particular because our estimation of the rejection rate is accurate.

While the rejection rate has to be fixed according to the considered problem, it appears that the main drawback of our approach is that border instances may be automatically treated while they would have deserved a human consideration.
From a general perspective, this is a weakness of all methods based on reject option.
This inconvenience is even stronger when the conditional variance function is poorly estimated.

\bibliographystyle{plain}
\bibliography{BIB}

\begin{thebibliography}{10}

\bibitem{Audibert_Tsybakov07}
{J.-Y.} Audibert and A.~Tsybakov.
\newblock Fast learning rates for plug-in classifiers.
\newblock {\em Ann. Statist.}, 35(2):608--633, 2007.

\bibitem{Bartlett_Wegkamp08}
P.~Bartlett and M.~Wegkamp.
\newblock Classification with a reject option using a hinge loss.
\newblock {\em J. Mach. Learn. Res.}, 9:1823--1840, 2008.

\bibitem{Biau_Devroye15}
G.~Biau and L.~Devroye.
\newblock {\em Lectures on the Nearest Neighbor Method}.
\newblock Springer Series in the Data Sciences. Springer New York, 2015.

\bibitem{Bobkov_Ledoux14}
S.~Bobkov and M.~Ledoux.
\newblock One-dimensional empirical measures, order statistics and
  {Kantorovich} transport distances.
\newblock 2016.
\newblock to appear in the {Memoirs of the Amer. Math. Soc.}

\bibitem{Chow57}
C.~Chow.
\newblock An optimum character recognition system using decision functions.
\newblock {\em IRE Transactions on Electronic Computers}, (4):247--254, 1957.

\bibitem{Chow70}
C.~Chow.
\newblock On optimum error and reject trade-off.
\newblock {\em IEEE Trans. Inform. Theory}, 16:41--46, 1970.

\bibitem{Denis_Hebiri19}
C.~Denis and M.~Hebiri.
\newblock Consistency of plug-in confidence sets for classification in
  semi-supervised learning.
\newblock {\em Journal of Nonparametric Statistics}, 2019.

\bibitem{Devroye_Gyorfi_Lugosi96}
L.~Devroye, L.~Gy\"orfi, and G.~Lugosi.
\newblock {\em A Probabilistic Theory of Pattern Recognition}.
\newblock Springer, New York, 1996.

\bibitem{Fan_Yao98}
J.~Fan and Q.~Yao.
\newblock Efficient estimation of conditional variance functions in stochastic
  regression.
\newblock {\em Biometrika}, 85(3):645--660, 1998.

\bibitem{Gyofri_Kohler_Krzyzak_Walk02}
L.~Gy{\"o}rfi, M.~Kohler, A.~Krzy{\.z}ak, and H.~Walk.
\newblock {\em A distribution-free theory of nonparametric regression}.
\newblock Springer Ser. Statist. Springer-Verlag, New York, 2002.

\bibitem{Hall_Caroll89}
P.~Hall and R.J. Carroll.
\newblock Variance function estimation in regression: The mean effect of
  estimating the mean.
\newblock {\em Journal of the Royal Statistical Society: Series B
  (Methodological)}, 51(1):3--14, 1989.

\bibitem{Hardle_Tsybakov97}
W.~H\"ardle and A.~Tsybakov.
\newblock Local polynomial estimators of the volatility function in
  nonparametric autoregression.
\newblock {\em Journal of Econometrics}, 81(1):223--242, 1997.

\bibitem{Herbei_Wegkamp06}
R.~Herbei and M.~Wegkamp.
\newblock Classification with reject option.
\newblock {\em Canad. J. Statist.}, 34(4):709--721, 2006.

\bibitem{KulikWichelhaus11}
R.~Kulik and C.~Wichelhaus.
\newblock {Nonparametric conditional variance and error density estimation in
  regression models with dependent errors and predictors}.
\newblock {\em Electron. J. Statist.}, 5:856--898, 2011.

\bibitem{Lei14}
J.~Lei.
\newblock Classification with confidence.
\newblock {\em Biometrika}, 101(4):755--769, 2014.

\bibitem{Mammen_Tsybakov99}
E.~Mammen and A.~Tsybakov.
\newblock Smooth discrimination analysis.
\newblock {\em Ann. Statist.}, 27(6):1808--1829, 1999.

\bibitem{Massart90}
P.~Massart.
\newblock The tight constant in the dvoretzky-kiefer-wolfowitz inequality.
\newblock {\em Ann. Probab.}, 18(3):1269--1283, 1990.

\bibitem{Naadeem_Zucker_Hanczar10}
M.~Naadeem, J.D. Zucker, and B.~Hanczar.
\newblock Accuracy-rejection curves ({ARC}s) for comparing classification
  methods with a reject option.
\newblock In {\em MLSB}, pages 65--81, 2010.

\bibitem{Polonik95}
W.~Polonik.
\newblock Measuring mass concentrations and estimating density contour
  clusters-an excess mass approach.
\newblock {\em Ann. Statist.}, 23(3):855--881, 1995.

\bibitem{scornet2015}
E.~Scornet, G.~Biau, and J.-P. Vert.
\newblock Consistency of random forests.
\newblock {\em Ann. Statist.}, 43(4):1716--1741, 08 2015.

\bibitem{ShenGaoWittenHan19}
Y.~Shen, G.~Gao, D.~Witten, and F.~Han.
\newblock {Optimal estimation of variance in nonparametric regression with
  random design}.
\newblock 2019.

\bibitem{Stone77}
C.~Stone.
\newblock Consistent nonparametric regression.
\newblock {\em Ann. Statist.}, pages 595--620, 1977.

\bibitem{Tsybakov08}
A.~Tsybakov.
\newblock {\em Introduction to Nonparametric Estimation}.
\newblock Springer Ser. Statist. Springer New York, 2008.

\bibitem{vershynin18}
Roman Vershynin.
\newblock {\em High-Dimensional Probability: An Introduction with Applications
  in Data Science}.
\newblock Cambridge Series in Statistical and Probabilistic Mathematics.
  Cambridge University Press, 2018.

\bibitem{Vovk_Gammerman_Shafer05}
V.~Vovk, A.~Gammerman, and G.~Shafer.
\newblock {\em Algorithmic learning in a random world}.
\newblock Springer, New York, 2005.

\bibitem{Wiener_ElYaniv_12}
Y.~Wiener and R.~El-Yaniv.
\newblock Pointwise tracking the optimal regression function.
\newblock In F.~Pereira, C.~J.~C. Burges, L.~Bottou, and K.~Q. Weinberger,
  editors, {\em Advances in Neural Information Processing Systems 25}, pages
  2042--2050. Curran Associates, Inc., 2012.

\end{thebibliography}


\newpage

\appendix

\section*{Supplementary material}

This supplementary material is organized as follows. Section~\ref{sec:proofOptimal} provides all proofs of results related to the optimal predictors (that is, Propositions~\ref{prop:optimOracle},~\ref{prop:optimalWithFixed}~\ref{prop:propExcessRisk}). In Sections~\ref{sec:proofConsitency} and~\ref{sec:proofOfRate} we prove Theorem~\ref{thm:mainConsistency} that establishes the consistency and Theorem~\ref{thm:theoRate} that states the rates of convergence of the plug-in $\varepsilon$-predictor $\hat{\Gamma}_{\varepsilon}$ respectively.
We further establish several finite sample guarantees on $k$NN estimator in Section~\ref{sec:ProofknnRate}.
To help readability of the paper, we provide in Section~\ref{sec:technic} some technical tools that are used for the proofs.

\section{Proofs for optimal predictors}
\label{sec:proofOptimal}

\begin{proof}[Proof of Proposition~\ref{prop:optimOracle}]

By definition of  $\mathcal{R}_{\lambda}$, we have for any predictor with reject option $\Gamma_{f}$
\begin{eqnarray*}
\mathcal{R}_{\lambda}\left(\Gamma_f\right)&=&\mathbb{E}\left[(Y-f(X))^{2}\one_{\{|\Gamma_{f}(X)|=1\}}\right]+\lambda\mathbb{P}(|\Gamma_{f}(X)|=0)\\
&=&\mathbb{E}\left[(Y - f^{*}(X) + f^{*}(X)-f(X))^{2}\one_{\{|\Gamma_{f}(X)|=1\}}\right]+\lambda\mathbb{P}(|\Gamma_{f}(X)|=0)\\
&=&\mathbb{E}\left[(Y-f^{*}(X))^{2}\one_{\{|\Gamma_{f}(X)|=1\}}\right]+\mathbb{E}\left[(f^{*}(X)-f(X))^{2}\one_{\{|\Gamma_{f}(X)|=1\}}\right]\\
& & \;\;\;\; +2\mathbb{E}\left[(Y-f^{*}(X))(f^{*}(X)-f(X))\one_{\{|\Gamma_{f}(X)|=1\}}\right]+\lambda\mathbb{P}(|\Gamma_{f}(X)|=0) \enspace.
\end{eqnarray*}
Since
\begin{equation*}
\mathbb{E}\left[(Y-f^{*}(X))(f^{*}(X)-f(X))\one_{\left\{|\Gamma_{f}(X)|=1\right\}}\right] = 0\enspace ,
\end{equation*}
and
\begin{equation*}
\mathbb{E}\left[(Y-f^{*}(X))^{2}\one_{\left\{|\Gamma_{f}(X)|=1\right\}}\right]
=\mathbb{E}\left[\sigma^{2}(X)\one_{\left\{|\Gamma_{f}(X)|=1\right\}}\right] \enspace,
\end{equation*}
we deduce,
\begin{eqnarray}
\label{eq:eqDecompRisk}
\mathcal{R}_{\lambda}(\Gamma_{f})&=&\mathbb{E}\left[(f^{*}(X)-f(X))^{2}\one_{\left\{|\Gamma_{f}(X)|=1\right\}}\right]+\mathbb{E}\left[\sigma^{2}(X)\one_{\left\{|\Gamma_{f}(X)|=1\right\}}+\lambda(1-\one_{\left\{|\Gamma_{f}(X)|=1\right\}})\right] \nonumber \\
&=&\mathbb{E}\left[\left\{(f^{*}(X)-f(X))^{2}+(\sigma^{2}(X)-\lambda)\right\}\one_{\left\{|\Gamma_{f}(X)|=1\right\}}\right]+\lambda \enspace.
\end{eqnarray}
Clearly, on the event $\left\{\left|\Gamma_f(X)\right|=1\right\}$, the mapping $f \mapsto (f^{*}(X)-f(X))^{2}+\left(\sigma^{2}(X)-\lambda\right)$ achieves its minimum at $f=f^{*}$.
Then, it remains to consider the minimization of
$$ \Gamma \mapsto \mathbb{E}\left[\left\{(\sigma^{2}(X)-\lambda)\right\}\one_{\left\{|\Gamma(X)|=1\right\}}\right]+ \lambda \enspace,
$$
on the set $\Upsilon_{f^*}$, which leads to $\{|\Gamma(X)| = 1\} = \{ \sigma^{2}(X) \leq \lambda \}$.
Putting all together, we get
\begin{equation*}
\{|\Gamma_{\lambda}^{*}(X)| = 1\} = \{\sigma^2(X) \leq \lambda\} \;\; {\rm and \; on \; this \; event} \;\; \Gamma_{\lambda}^{*}(X) = \{f^{*}(X)\} \enspace ,
\end{equation*}
and point $1.$ of Proposition~\ref{prop:optimOracle} is proven. For the second point, we observe that for $0 < \lambda < \lambda{'}$,
\begin{equation*}
\{|\Gamma^{*}_{\lambda}(X)| = 1\} = \{\sigma^2(X) \leq \lambda\}  \subset \{\sigma^2(X)\leq \lambda' \} = \{|\Gamma^{*}_{\lambda{'}}(X)| = 1\} \enspace.
\end{equation*}
From this inclusion, we deduce $r(\Gamma^{*}_{\lambda{'}})\leq r(\Gamma^{*}_{\lambda})$. Furthermore, using the relation $\{|\Gamma_{\lambda}^{*}(X)| = 1\} = \{\sigma^2(X) \leq \lambda \}$ and if we denote by $a_{\lambda} = \mathbb{P}\left(|\Gamma^{*}_{\lambda}(X)| = 1\right)$ we have
\begin{multline}
\label{eq:eqCompRisk}
\Err\left(\Gamma^{*}_{\lambda}\right) - \Err\left(\Gamma^{*}_{\lambda{'}}\right)  =  \frac{1}{a_{\lambda}}
\mathbb{E}\left[(Y-f^{*}(X))^2 \one_{\{ \sigma^2(X) \leq \lambda \}}\right] -  \frac{1}{a_{\lambda{'}}}
\mathbb{E}\left[(Y-f^{*}(X))^2\one_{\{ \sigma^2(X) \leq \lambda' \}}\right]  \\
 =  \left(\frac{1}{a_{\lambda}}-  \frac{1}{a_{\lambda{'}}}\right) \mathbb{E}\left[(Y-f^{*}(X))^2\one_{\{\sigma^2(X) \leq \lambda\}}\right]  \\
- \frac{1}{a_{\lambda{'}}}
\mathbb{E}\left[(Y-f^{*}(X))^2\one_{\{\lambda < \sigma^2(X) \leq \lambda{'}\}}\right] \enspace .
\end{multline}
By definition of $\sigma^2(X)$, we can write
\begin{eqnarray*}
 \mathbb{E}\left[(Y-f^{*}(X))^2\one_{\{\sigma^2(X) \leq \lambda\}}\right] & = & \mathbb{E}\left[\one_{\{\sigma^2(X) \leq \lambda\}} \mathbb{E}\left[(Y-f^{*}(X))^2|X\right]\right]
 \\
 & = & \mathbb{E}\left[\one_{\{\sigma^2(X) \leq \lambda\}} \sigma^2(X) \right] \leq  \lambda a_{\lambda},
\end{eqnarray*}
and then
\begin{equation*}
\left(\frac{1}{a_{\lambda}}-  \frac{1}{a_{\lambda{'}}}\right) \mathbb{E}\left[(Y-f^{*}(X))^2\one_{\{\sigma^2(X) \leq \lambda\}}\right] \leq \lambda \left(1-\frac{a_{\lambda}}{a_{\lambda{'}}}\right).
\end{equation*}
In the same way, we obtain
\begin{equation*}
\frac{1}{a_{\lambda{'}}}
\mathbb{E}\left[(Y-f^{*}(X))^2\one_{\{\lambda \leq \sigma^2(X) \leq \lambda{'}\}}\right] \geq \frac{\lambda}{a_{\lambda{'}}} \left(a_{\lambda{'}} - a_{\lambda}\right) = \lambda\left(1-\frac{a_{\lambda}}{a_{\lambda{'}}}\right).
\end{equation*}
From Equation~\eqref{eq:eqCompRisk}, we then get $\Err\left(\Gamma^{*}_{\lambda}\right) \leq \Err\left(\Gamma^{*}_{\lambda{'}}\right)$.
\end{proof}

\begin{proof}[Proof of Proposition~\ref{prop:optimalWithFixed}]
First of all, observe that for any $\varepsilon \in (0,1)$, if we set $ \lambda_{\varepsilon} = F_{\sigma^2}^{-1}(1- \varepsilon)$, then the optimal predictor $\Gamma_\lambda^*$ given by~\eqref{eq:eqOracle} with $\lambda = \lambda_{\varepsilon}$ is such that,
\begin{equation*}
r\left(\Gamma^{*}_{\lambda_{\varepsilon}}\right) = \mathbb{P}\left(\sigma^2(X) \geq \lambda_{\varepsilon} \right) =
\mathbb{P}\left(F_{\sigma^2}(\sigma^{2}(X)) \geq 1-\varepsilon \right) = \varepsilon \enspace.
\end{equation*}
We need to prove that any predictor $\Gamma_f$ such that $r\left(\Gamma_f\right) = \varepsilon'$ with $\varepsilon' \leq \varepsilon$, satisfies $\Err\left(\Gamma_f\right) \geq \Err\left(\Gamma^{*}_{\lambda_{\varepsilon}}\right)$.
To this end, consider $\Gamma_{\lambda_{\varepsilon'} }^*$ with $\lambda_{\varepsilon'} = F_{\sigma^2}^{-1}(1- \varepsilon')$.
On one hand, by optimality of $\Gamma_{\lambda_{\varepsilon'} }^*$ (\lcf point $1.$ of Proposition~\ref{prop:optimOracle}), we have
\begin{equation*}
\Err\left(\Gamma_{f}\right) - \Err\left(\Gamma^{*}_{\lambda_{\varepsilon'}}\right) = \dfrac{1}{1-\varepsilon'}\left(\mathcal{R}_{\lambda_{\varepsilon'}}\left(\Gamma_{f}\right) - \mathcal{R}_{\lambda_{\varepsilon'}}\left(\Gamma^{*}_{\lambda_{\varepsilon'}}\right)\right) \geq 0 \enspace .
\end{equation*}
On the other hand, since $\varepsilon' \leq \varepsilon$ implies $\lambda_{\varepsilon} \leq \lambda_{\varepsilon'}$, point $2.$ of Proposition~\ref{prop:optimOracle} reads as
\begin{equation*}
\Err\left(\Gamma^{*}_{\lambda_{\varepsilon}}\right) \leq \Err\left(\Gamma^{*}_{\lambda_{\varepsilon'}}\right) \enspace .
\end{equation*}
Combining these two facts gives the desired result.
\end{proof}

\begin{proof}[Proof of Proposition~\ref{prop:propExcessRisk}]
First, from Equation~\eqref{eq:eqDecompRisk}, we have the following decomposition
\begin{eqnarray*}
\mathcal{R}_{\lambda_{\varepsilon}}(\Gamma_{f})
& = & \mathbb{E}\left[\left\{(f^{*}(X)-f(X))^{2}+\sigma^{2}(X)-\lambda_{\varepsilon}\right\}\one_{\left\{|\Gamma_{f}(X)|=1\right\}}\right]+\lambda_{\varepsilon}
\\
& = &
\mathbb{E}\left[(f^{*}(X)-f(X))^{2}\one_{\left\{|\Gamma_{f}(X)|=1\right\}}\right] + \mathbb{E}\left[( \sigma^{2}(X) - \lambda_{\varepsilon})\one_{\left\{|\Gamma_{f}(X)|=1\right\}}\right] + \lambda_{\varepsilon} \enspace.
\end{eqnarray*}
Therefore, we deduce
\begin{equation*}
\mathcal{E}\left(\Gamma_{f}\right) = \mathbb{E}\left[(f^{*}(X)-f(X))^{2}\one_{\left\{|\Gamma_{f}(X)|=1\right\}}\right]+\mathbb{E}\left[\left(\sigma^{2}(X)-\lambda_{\varepsilon}\right)\left\{\one_{\left\{|\Gamma_{f}(X)|=1\right\}}-\one_{\left\{|\Gamma^{*}_{\varepsilon}(X)|=1\right\}}\right\}\right]  ,
\end{equation*}
and the result follows from the fact that all non zero values of $\one_{\left\{|\Gamma_{f}(X)|=1\right\}}-\one_{\left\{|\Gamma^{*}_{\varepsilon}(X)|=1\right\}}$ equal the sign of $\left(\sigma^{2}(X)-\lambda_{\varepsilon}\right)$ due to the fact that $\left\{|\Gamma^{*}_{\varepsilon}(X)|=1\right\} = \left\{ \sigma^{2}(X)-\lambda_{\varepsilon} \leq 0 \right\}$.
\end{proof}

\section{Proof of the consistency results: Theorem~\ref{thm:mainConsistency}}
\label{sec:proofConsitency}
The consistency of $\hat{\Gamma}_{\varepsilon}$ consists in the introduction of a pseudo oracle $\varepsilon$-predictor $\tilde{\Gamma}_{\varepsilon}$ defined for all $x\in \mathbb{R}^{d}$ by
\begin{equation}
    \tilde{\Gamma}_{\varepsilon}(x)=\begin{cases}
  \left\{\hat{f}(x)\right\}  & \text{if} \;\;
  \hat{\sigma}^2(x)\leq F^{-1}_{\hat{\sigma}^2}(1-\varepsilon)\\
  \emptyset      & \text{otherwise} \enspace .
  \end{cases}
  \label{eq:eqPseudoOracle}
\end{equation}
This predictor differs from $\hat{\Gamma}_{\varepsilon}$ in that it knows the marginal distribution $\mathbb{P}_X$ and then it has rejection rate exactly $\varepsilon$.
Then, we consider the following decomposition
\begin{equation}
\label{eq:decompositionExcessRisk}
\mathbb{E}\left[\mathcal{E}_{\lambda_{\varepsilon}}\left(\hat{\Gamma}_{\varepsilon}\right)\right]
=
\mathbb{E}\left[\mathcal{R}_{\lambda_{\varepsilon}}(\hat{\Gamma}_{\varepsilon}) - \mathcal{R}_{\lambda_{\varepsilon}}(\tilde{\Gamma}_{\varepsilon})\right]
+
\mathbb{E}\left[\mathcal{E}_{\lambda_{\varepsilon}}\left(\tilde{\Gamma}_{\varepsilon}\right) \right]\enspace,
\end{equation}
and show that both terms in the r.h.s. go to zero.

\noindent {$\bullet$ \bf Step 1.} $\mathbb{E}\left[\mathcal{E}_{\lambda_{\varepsilon}}\left(\tilde{\Gamma}_{\varepsilon}\right) \right] \rightarrow 0$.
We use Proposition~\ref{prop:propExcessRisk} and get the following result.
\begin{proposition}
\label{prop:excessriskoptimal}
Let $\varepsilon \in (0,1)$. Under Assumptions~\ref{ass:sigmaContinuity} and~\ref{ass:hatsigmaContinuity}, the following holds
\begin{equation*}
\mathbb{E}\left[\mathcal{E}_{\lambda_{\varepsilon}}\left(\tilde{\Gamma}_{\varepsilon}\right)\right]
\leq  \mathbb{E}\left[(\hat{f}(X) - f^{*}(X))^{2}\right]+
\mathbb{E}\left[|\hat{\sigma}^2(X)-\sigma^2(X)|\right] +
 C\mathbb{E}\left[| F_{\hat{\sigma}^2}(\lambda_{\varepsilon}) - F_{\sigma^2}(\lambda_{\varepsilon}) |\right],
\end{equation*}
where $C > 0$ is constant which depends on the upper bounds of $\sigma^2$ and $\lambda_{\varepsilon}$.
\end{proposition}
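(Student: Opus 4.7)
The plan is to apply Proposition~\ref{prop:propExcessRisk} to the pseudo-oracle $\tilde{\Gamma}_\varepsilon$, which splits the excess risk into two pieces, and then to bound each piece. The first piece, $\mathbb{E}_X[(f^*(X) - \hat{f}(X))^2 \one_{\{|\tilde{\Gamma}_\varepsilon(X)|=1\}}]$, is handled trivially by dropping the indicator to get $\mathbb{E}_X[(f^*(X) - \hat{f}(X))^2]$. All the effort then goes into the abstention term
\begin{equation*}
T := \mathbb{E}_X\!\left[|\sigma^2(X) - \lambda_\varepsilon| \, \one_D\right], \quad D := \{|\tilde{\Gamma}_\varepsilon(X)| \neq |\Gamma^*_\varepsilon(X)|\}.
\end{equation*}

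The central step, which I expect to be the main obstacle, is the set inclusion $D \subseteq A \cup B$, where, writing $\hat{\lambda}_\varepsilon := F^{-1}_{\hat{\sigma}^2}(1-\varepsilon)$,
\begin{equation*}
A := \{\hat{\sigma}^2(X) \text{ lies strictly between } \lambda_\varepsilon \text{ and } \hat{\lambda}_\varepsilon\}, \qquad B := \{|\hat{\sigma}^2(X) - \sigma^2(X)| \geq |\sigma^2(X) - \lambda_\varepsilon|\}.
\end{equation*}
To establish it, I would work on $D \cap B^c$ and split along the two sub-cases composing $D$, namely $\{\sigma^2(X) \leq \lambda_\varepsilon,\ \hat{\sigma}^2(X) > \hat{\lambda}_\varepsilon\}$ and $\{\sigma^2(X) > \lambda_\varepsilon,\ \hat{\sigma}^2(X) \leq \hat{\lambda}_\varepsilon\}$. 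In each sub-case, the strict inequality defining $B^c$, i.e.\ $|\hat{\sigma}^2(X) - \sigma^2(X)| < |\sigma^2(X) - \lambda_\varepsilon|$, forces $\hat{\sigma}^2(X)$ onto the same side of $\lambda_\varepsilon$ as $\sigma^2(X)$. Combined with the fact that on $D$, $\hat{\sigma}^2(X)$ lies on the opposite side of $\hat{\lambda}_\varepsilon$ from where $\sigma^2(X)$ sits relative to $\lambda_\varepsilon$, this forces $\hat{\sigma}^2(X)$ into the interval between $\lambda_\varepsilon$ and $\hat{\lambda}_\varepsilon$, which is the event $A$.

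Once the inclusion is in hand, decompose $T \leq T_A + T_B$ with the natural notation. On $B$, the defining inequality gives $|\sigma^2(X) - \lambda_\varepsilon| \leq |\hat{\sigma}^2(X) - \sigma^2(X)|$, so $T_B \leq \mathbb{E}_X[|\hat{\sigma}^2(X) - \sigma^2(X)|]$. On $A$, the boundedness of $\sigma^2$ yields the deterministic bound $|\sigma^2(X) - \lambda_\varepsilon| \leq \|\sigma^2\|_\infty + \lambda_\varepsilon =: C$, hence $T_A \leq C \, \mathbb{P}_X(A \mid \mathcal{D}_n)$. Finally, Assumptions~\ref{ass:sigmaContinuity} and~\ref{ass:hatsigmaContinuity} ensure $F_{\sigma^2}(\lambda_\varepsilon) = 1-\varepsilon = F_{\hat{\sigma}^2}(\hat{\lambda}_\varepsilon)$, so that
\begin{equation*}
\mathbb{P}_X(A \mid \mathcal{D}_n) = \bigl|F_{\hat{\sigma}^2}(\hat{\lambda}_\varepsilon) - F_{\hat{\sigma}^2}(\lambda_\varepsilon)\bigr| = \bigl|F_{\sigma^2}(\lambda_\varepsilon) - F_{\hat{\sigma}^2}(\lambda_\varepsilon)\bigr|.
\end{equation*}
Taking expectation over $\mathcal{D}_n$ and combining the three pieces yields the claimed bound; the constant $C$ depends on $\|\sigma^2\|_\infty$ and $\lambda_\varepsilon$, as stated. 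Beyond the case-analysis justifying $D \subseteq A \cup B$, every other step is a routine majorization.
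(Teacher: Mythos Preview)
Your proposal is correct and follows essentially the same approach as the paper: apply Proposition~\ref{prop:propExcessRisk}, then show on the disagreement event that either $|\sigma^2(X)-\lambda_\varepsilon|\le|\hat\sigma^2(X)-\sigma^2(X)|$ or $\hat\sigma^2(X)$ falls between $\lambda_\varepsilon$ and $\hat\lambda_\varepsilon$, bound the latter using boundedness of $\sigma^2$ and the identity $F_{\hat\sigma^2}(\hat\lambda_\varepsilon)=1-\varepsilon=F_{\sigma^2}(\lambda_\varepsilon)$ from Assumptions~\ref{ass:sigmaContinuity}--\ref{ass:hatsigmaContinuity}. The only cosmetic difference is that you package the case analysis as the single inclusion $D\subseteq A\cup B$, whereas the paper writes out the four sub-cases (two orderings of $\lambda_\varepsilon,\tilde\lambda_\varepsilon$ within each half of $D$) explicitly.
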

\begin{proof}[Proof of Proposition~\ref{prop:excessriskoptimal}] Let $\varepsilon \in (0,1)$.
First, we recall our notation
$F_{\hat{\sigma}^{2}}(\cdot)=\mathbb{P}_{X}\left(\hat{\sigma}^{2}(X)\leq \cdot|\mathcal{D}_n\right)$ and
 $\lambda_{\varepsilon}=F^{-1}_{\sigma^2}(1-\varepsilon)$.
We also introduce $\tilde{\lambda}_{\varepsilon}=F^{-1}_{\hat{\sigma}^2}(1-\varepsilon)$ for the pseudo-oracle counterpart of $\lambda_{\varepsilon}$.
A direct application of Proposition~\ref{prop:propExcessRisk} yields
\begin{equation}
\label{eq:eqDecompExcessRisk}
\mathcal{E}\left(\tilde{\Gamma}_{\varepsilon}\right)\leq \mathbb{E}_{X}\left[\left(\hat{f}(X)-f^{*}(X)\right)^{2}\right]+\mathbb{E}_{X}\left[|\sigma^{2}(X)-\lambda_{\varepsilon}|\one_{\left\{|\tilde{\Gamma}_{\varepsilon}(X)|\neq |\Gamma^{*}_{\varepsilon}(X)| \right\}}\right].
\end{equation}
We first observe that
 if $\sigma^{2}(X)\leq \lambda_{\varepsilon}$ and $\hat{\sigma}^{2}(X)\geq \tilde{\lambda}_{\varepsilon}$,
we have either of the two cases
      \begin{itemize}
     \item[$\bullet$] $\tilde{\lambda}_{\varepsilon}\geq \lambda_{\varepsilon}$ and then
$|\sigma^{2}(X)-\lambda_{\varepsilon}|\leq |\hat{\sigma}^{2}(X)-\sigma^{2}(X)|$;
     \item[$\bullet$] $\tilde{\lambda}_{\varepsilon}\leq \lambda_{\varepsilon}$ and then either
     $|\sigma^{2}(X)-\lambda_{\varepsilon}|\leq |\hat{\sigma}^{2}(X)-\sigma^{2}(X)|$ or $\hat{\sigma}^{2}(X)\in (\tilde{\lambda}_{\varepsilon},\lambda_{\varepsilon})$.
      \end{itemize}
Similar reasoning holds in the case where $\sigma^{2}(X)\geq \lambda_{\varepsilon}$ and $\hat{\sigma}^{2}(X)\leq \tilde{\lambda}_{\varepsilon}$.
Therefore
\begin{multline*}
\mathbb{E}\left[|\sigma^{2}(X)-\lambda_{\varepsilon}|\one_{\left\{|\tilde{\Gamma}_{\varepsilon}(X)|\neq |\Gamma^{*}_{\varepsilon}(X)| \right\}} |\mathcal{D}_n\right] \\\leq \mathbb{E}\left[|\sigma^{2}(X)-\lambda_{\varepsilon}|\one_{\left\{|\sigma^{2}(X)-\lambda_{\varepsilon}|\leq  |\hat{\sigma}^{2}(X)-\sigma^{2}(X)| \right\}}|\mathcal{D}_n\right] \\
\hspace*{2cm} + \one_{\left\{\lambda_{\varepsilon}\leq \tilde{\lambda}_{\varepsilon}\right\}}
\mathbb{E}\left[|\sigma^{2}(X)-\lambda_{\varepsilon}|\one_{\left\{\lambda_{\varepsilon}\leq \hat{\sigma}^{2}(X)\leq \tilde{\lambda}_{\varepsilon} \right\}}|\mathcal{D}_n\right]\\+
\one_{\left\{\tilde{\lambda}_{\varepsilon}\leq {\lambda}_{\varepsilon}\right\}}
\mathbb{E}\left[|\sigma^{2}(X)-\lambda_{\varepsilon}|\one_{\left\{\tilde{\lambda}_{\varepsilon}\leq \hat{\sigma}^{2}(X)\leq {\lambda}_{\varepsilon} \right\}}|\mathcal{D}_n\right]\enspace .
\end{multline*}
From the above inequality, since $\sigma^2$ is bounded, there exists a constant $C>0$ such that
\begin{equation*}
\mathbb{E}\left[|\sigma^{2}(X)-\lambda_{\varepsilon}|\one_{\left\{|\tilde{\Gamma}_{\varepsilon}(X)|\neq |\Gamma^{*}_{\varepsilon}(X)| \right\}}\right] \leq \mathbb{E}\left[|\hat{\sigma}^{2}(X)-\sigma^{2}(X)|\right]+C \mathbb{E}\left[| F_{\hat{\sigma}^{2}}(\tilde{\lambda}_{\varepsilon})-F_{\hat{\sigma}^{2}}(\lambda_{\varepsilon})|\right] \enspace .
\end{equation*}
Now, from Assumptions~\ref{ass:sigmaContinuity} and~\ref{ass:hatsigmaContinuity}, we have that $F_{\sigma^{2}}(\lambda_{\varepsilon})=1-\varepsilon=F_{\hat{\sigma}^{2}}(\tilde{\lambda}_{\varepsilon})$.
Therefore, we deduce that
\begin{equation*}
\mathbb{E}\left[|\sigma^{2}(X)-\lambda_{\varepsilon}|\one_{\left\{|\tilde{\Gamma}_{\varepsilon}(X)|\neq |\Gamma^{*}_{\varepsilon}(X)| \right\}}\right] \leq\mathbb{E}\left[|\hat{\sigma}^{2}(X)-\sigma^{2}(X)|\right]+C \mathbb{E}\left[| F_{{\sigma}^{2}}({\lambda}_{\varepsilon})-F_{\hat{\sigma}^{2}}(\lambda_{\varepsilon})|\right] \enspace.
\end{equation*}
Putting this into Equation~\eqref{eq:eqDecompExcessRisk} gives the result in Proposition~\ref{prop:excessriskoptimal}.
\end{proof}
Since $\hat{f}$ and $\hat{\sigma}^2$ are consistent \wrt the $L_2$ and $L_1$ risks respectively, the first two terms in the bound of Proposition~\ref{prop:excessriskoptimal} converge to zero.
It remains to study the convergence of the last term.
To this end, we prove that
\begin{eqnarray*}
\mathbb{E}\left[|F_{\sigma^2}(\lambda_{\varepsilon}) - F_{\hat{\sigma}^2}(\lambda_{\varepsilon})|\right] & = &
\mathbb{E}\left[|\one_{\{\sigma^2(X) \leq \lambda_{\varepsilon}\}}-\one_{\{\hat{\sigma}^2(X) \leq \lambda_{\varepsilon} \}}|\right]\\
&\leq &\mathbb{P}\left(|\sigma^2(X) - \hat{\sigma}^2(X)|\geq |\sigma^2(X)-\lambda_{\varepsilon}|\right) \enspace.
\end{eqnarray*}
Hence, for any $\beta > 0$, using Markov's Inequality we have
\begin{eqnarray*}
\mathbb{E}\left[|F_{\sigma^2}(\lambda_{\varepsilon}) - F_{\hat{\sigma}^2}(\lambda_{\varepsilon})|\right] &\leq & \mathbb{P}\left(|\sigma^2(X)-\lambda_{\varepsilon}| \leq \beta \right) + \mathbb{P}\left(|\sigma^2(X) - \hat{\sigma}^2(X)|\geq \beta\right)\\ & \leq & \mathbb{P}\left(|\sigma^2(X)-\lambda_{\varepsilon}| \leq \beta \right) + \dfrac{\mathbb{E}\left[|\hat{\sigma}^2(X)-\sigma^2(X)|\right]}{\beta} \enspace.
\end{eqnarray*}
Combining this last inequality with Proposition~\ref{prop:excessriskoptimal} and the consistency of $\hat{f}$ and $\hat{\sigma}^2$ \wrt the $L_2$ and $L_1$ risks respectively implies that for all $\beta > 0$
\begin{equation*}
\limsup_{n,N \rightarrow +\infty} \mathbb{E}\left[\mathcal{E}_{\lambda_{\varepsilon}}\left(\tilde{\Gamma}_{\varepsilon}\right)\right]
\leq C\mathbb{P}\left(|\sigma^2(X)-\lambda_{\varepsilon}| \leq \beta \right) \enspace .
\end{equation*}
Since the above inequality holds for all $\beta > 0$, under Assumption~\ref{ass:sigmaContinuity},
we deduce that
\begin{equation*}
\mathbb{E}\left[\mathcal{E}_{\lambda_{\varepsilon}}\left(\tilde{\Gamma}_{\varepsilon}\right)\right] \rightarrow 0\enspace ,
\end{equation*}
and then this step of the proof is complete.

\noindent {$\bullet$ \bf Step 2.} $\mathbb{E}\left[\mathcal{R}_{\lambda_{\varepsilon}}(\hat{\Gamma}_{\varepsilon}) - \mathcal{R}_{\lambda_{\varepsilon}}(\tilde{\Gamma}_{\varepsilon})\right] \rightarrow 0$. Thanks to Equation~\eqref{eq:eqDecompRisk}, we have that
\begin{equation*}
\mathcal{R}_{\lambda_{\varepsilon}}(\hat{\Gamma}_{\varepsilon}) - \mathcal{R}_{\lambda_{\varepsilon}}(\tilde{\Gamma}_{\varepsilon})
=
\mathbb{E}_{X}\left[\left\{(f^{*}(X)-\hat{f}(X))^{2}+(\sigma^{2}(X)-\lambda_{\varepsilon})\right\}\left(\one_{\left\{|\hat{\Gamma}_{\varepsilon}(X)|=1\right\}}-\one_{\left\{|\tilde{\Gamma}_{\varepsilon}(X)|=1\right\}}\right)\right]\enspace .
\end{equation*}
Therefore, since $\sigma^2$ is bounded, there exists a constant $C>0$ such that
\begin{eqnarray}
\label{eq:ExcessriskGammaTildeA}
\mathbb{E}\left[|\mathcal{R}_{\lambda_{\varepsilon}}(\hat{\Gamma}_{\varepsilon}) - \mathcal{R}_{\lambda_{\varepsilon}}(\tilde{\Gamma}_{\varepsilon})|\right]
& \leq &
2\mathbb{E}\left[(f^{*}(X)-\hat{f}(X))^{2}\right]
+ C\mathbb{E}\left[| \one_{\left\{|\hat{\Gamma}_{\varepsilon}(X)|=1\right\}}-\one_{\left\{|\tilde{\Gamma}_{\varepsilon}(X)|=1\right\}}|\right] \nonumber
\\
& \leq &  2\mathbb{E}\left[(f^{*}(X)-\hat{f}(X))^{2}\right] + C A_{\varepsilon}\enspace ,
\end{eqnarray}
where
\begin{equation}
\label{eq:eqAeps}
A_{\varepsilon}
=
\mathbb{E}\left[| \one_{\left\{|\hat{\Gamma}_{\varepsilon}(X)|=1\right\}}-\one_{\left\{|\tilde{\Gamma}_{\varepsilon}(X)|=1\right\}}|\right]
=
\mathbb{E}\left[\big|\one_{\{\hat{F}_{\hat{\sigma}^{2}}(\hat{\sigma}^{2}(X)) \geq 1-\varepsilon\}} -
\one_{\{{F}_{\hat{\sigma}^{2}}(\hat{\sigma}^{2}(X)) \geq 1-\varepsilon\}}\big|\right] \enspace .
\end{equation}
Considering the fact that $\hat{f}$ is consistent \wrt the $L_2$ risk, it remains to treat the term $A_{\varepsilon}$.
We have
\begin{equation*}
A_{\varepsilon}
\leq
\mathbb{P}\left(|\hat{F}_{\hat{\sigma}^{2}}(\hat{\sigma}^{2}(X)) - F_{\hat{\sigma}^{2}}(\hat{\sigma}^{2}(X))| \geq |{F}_{\hat{\sigma}^{2}}(\hat{\sigma}^{2}(X))- (1-\varepsilon)| \right) \enspace ,
\end{equation*}
and then, for all $\beta > 0$, the following holds
\begin{equation}
\label{eq:eqBorneAeps}
A_{\varepsilon} \leq \mathbb{P}\left(|{F}_{\hat{\sigma}^{2}}(\hat{\sigma}^{2}(X))- (1-\varepsilon)| < \beta\right) +  \mathbb{P}\left(|\hat{F}_{\hat{\sigma}^{2}}(\hat{\sigma}^{2}(X)) - {F}_{\hat{\sigma}^{2}}(\hat{\sigma}^{2}(X))| \geq \beta\right) \enspace .
\end{equation}
Under Assumption~\ref{ass:hatsigmaContinuity}, the random variable $F_{\hat{\sigma}^{2}}(\hat{\sigma}^{2}(X))$ is uniformly distributed on $[0,1]$ conditionally on $\mathcal{D}_n$.
Therefore, we deduce that
\begin{eqnarray}
\label{eq:eqBoundFhat}
\mathbb{P}\left(|{F}_{\hat{\sigma}^{2}}(\hat{\sigma}^{2}(X))- (1-\varepsilon)| < \beta\right)
& = &
\mathbb{E}\left[\mathbb{P}_{X}\left(|{F}_{\hat{\sigma}^{2}}(\hat{\sigma}^{2}(X))- (1-\varepsilon)| < \beta\right)|\mathcal{D}_n\right] \nonumber
\\
& = &
\mathbb{E}\left[2\beta |\mathcal{D}_n\right] = 2 \beta \enspace.
\end{eqnarray}
According to the second term in the r.h.s. of Equation~\eqref{eq:eqBorneAeps}. we have that
\begin{eqnarray*}
\mathbb{P}\left(|\hat{F}_{\hat{\sigma}^{2}}(\hat{\sigma}^{2}(X)) - {F}_{\hat{\sigma}^{2}}(\hat{\sigma}^{2}(X))| \geq \beta\right) & \leq & \mathbb{P}\left(\sup_{x \in \mathbb{R}} |\hat{F}_{\hat{\sigma}^{2}}(x) - {F}_{\hat{\sigma}^{2}}(x)| \geq \beta\right) \\
& = & \mathbb{E}\left[\mathbb{P}_{\mathcal{D}_N} \left(\sup_{x \in \mathbb{R}} |\hat{F}_{\hat{\sigma}^{2}}(x) - {F}_{\hat{\sigma}^{2}}(x)| \geq \beta |\mathcal{D}_n\right) \right] \enspace,
\end{eqnarray*}
where $\mathbb{P}_{\mathcal{D}_N}$ is the probability measure \wrt the dataset $\mathcal{D}_N$.
Since, conditionally on $\mathcal{D}_n$, $\hat{F}_{\hat{\sigma}^{2}}$ is the empirical counterpart of the continuous cumulative distribution function ${F}_{\hat{\sigma}^{2}}$, applying the Dvoretzky-Kiefer-Wolfowitz Inequality~\cite{Massart90}, we deduce that
\begin{equation}
\label{eq:eqDKW}
\mathbb{P}\left(|\hat{F}_{\hat{\sigma}^{2}}(\hat{\sigma}^{2}(X)) - {F}_{\hat{\sigma}^{2}}(\hat{\sigma}^{2}(X))| \geq \beta\right)  \leq 2\exp(-2N\beta^2) \enspace .
\end{equation}
Putting \eqref{eq:eqBoundFhat} and \eqref{eq:eqDKW} into Eq.~\eqref{eq:eqBorneAeps}, we have that for all $\beta > 0$
\begin{equation}
\label{eq:eqBoundAepsBeta}
A_{\varepsilon} \leq 2\left(\beta+\exp\left(-2N\beta^2\right)\right) \enspace .
\end{equation}

Since Equation~\eqref{eq:eqBoundAepsBeta} holds for all $\beta > 0$, we have that $A_{\varepsilon} \rightarrow 0$ as $N,n \rightarrow +\infty$. Hence, from the above inequality we get the desired result in {\bf Step 2}:
\begin{equation*}
\mathbb{E}\left[\left|\mathcal{R}_{\lambda_{\varepsilon}}(\hat{\Gamma}_{\varepsilon}) - \mathcal{R}_{\lambda_{\varepsilon}}(\tilde{\Gamma}_{\varepsilon})\right|\right] \rightarrow 0 \enspace.
\end{equation*}
Combining {\bf Step 1} and {\bf Step 2} yields the convergence:
$\mathbb{E}\left[\mathcal{E}_{\lambda_{\varepsilon}}\left(\hat{\Gamma}_{\varepsilon}\right)\right] \rightarrow 0$ as $N,n \rightarrow +\infty$.

\noindent $\bullet$ {\bf Bound on $\mathbb{E}\left[r(\hat{\Gamma}_{\varepsilon})\right]$.} To finish the proof of Theorem~\ref{thm:mainConsistency}, it remains to control the rejection rate $\mathbb{E}\left[r(\hat{\Gamma}_{\varepsilon})\right]$ and show that it satisfies $\mathbb{E}\left[\left|r(\hat{\Gamma}_{\varepsilon})- \varepsilon \right|\right] \leq C N^{-1/2}$ for some constant $C>0$. We observe that
\begin{equation*}
\mathbb{E}\left[\left|r(\hat{\Gamma}_{\varepsilon})- \varepsilon \right|\right]  =
\mathbb{E}\left[\left|r(\hat{\Gamma}_{\varepsilon})-r(\tilde{\Gamma}_{\varepsilon})\right|\right]
\leq A_{\varepsilon} \enspace,
\end{equation*}
where $A_{\varepsilon}$ is given by Eq.~\eqref{eq:eqAeps}.
Repeating the same reasoning as in {\bf Step 2} above, we bound $A_\varepsilon$ as in Eq.~\eqref{eq:eqBorneAeps}, and get from Dvoretsky-Kiefer-Wolfowitz Inequality (see~Equation~\eqref{eq:eqDKW}), that for all $\beta>0$,
\begin{equation*}
\mathbb{P}\left(|\hat{F}_{\hat{\sigma}^{2}}(\hat{\sigma}^{2}(X)) - {F}_{\hat{\sigma}^{2}}(\hat{\sigma}^{2}(X))| \geq \beta\right)  \leq 2\exp(-2N\beta^2) \enspace ,
\end{equation*}
and from Equation~\eqref{eq:eqBoundFhat},
\begin{equation*}
\mathbb{P}\left(|{F}_{\hat{\sigma}^{2}}(\hat{\sigma}^{2}(X))- (1-\varepsilon)| < \beta\right) = 2\beta \enspace.
\end{equation*}
These two bounds combined the classical peeling argument of~\cite{Audibert_Tsybakov07} (see Lemma~\ref{lem:DH18} below) imply the desired result:
\begin{equation}
\label{eq:boundAepsilon}
A_{\varepsilon} \leq  C N^{-1/2} \enspace.
\end{equation}
%
\section{Proof of rates of convergence: Theorem~\ref{thm:theoRate}}
\label{sec:proofOfRate}
In this section, we follow the same strategy as in Section~\ref{sec:proofConsitency} but here we care about rates of convergence.
Moreover, we have to pay attention to the randomness we introduced in the predictor because of the use of $k$NN.
As in Section~\ref{sec:proofConsitency}, we introduce some pseudo-oracle predictor. However, this one needs to depend on the randomness we introduced in the definition of $\hat{\Gamma}_{\varepsilon}(x,\zeta)$.
Define the pseudo-oracle $\varepsilon$-predictor $\tilde{\Gamma}_{\varepsilon}$ for all $x \in \mathbb{R}^d$ and $\zeta \in [0,u]$ as\footnote{The only difference between $\tilde{\Gamma}_{\varepsilon}(x, \zeta)$ and $\tilde{\Gamma}_{\varepsilon}(x)$ given in~\eqref{eq:eqPseudoOracle} is the dependency in $\zeta$ that is hidden inside~$\bar{\sigma}^2$. To avoid useless additional notation, we write $\tilde{\Gamma}_{\varepsilon}$ for both pseudo-oracles.}
\begin{equation*}
    \tilde{\Gamma}_{\varepsilon}(x, \zeta)=\begin{cases}
  \left\{\hat{f}(x)\right\}  & \text{if} \;\;
  \bar{\sigma}^2(x, \zeta)\leq F^{-1}_{\bar{\sigma}^2}(1-\varepsilon)\\
  \emptyset      & \text{otherwise} \enspace .
  \end{cases}
\end{equation*}
To study the excess risk $\mathbb{E}\left[\mathcal{E}_{\lambda_{\varepsilon}}\left(\hat{\Gamma}_{\varepsilon}\right)\right]$ of our predictor, we also consider a similar decomposition as in Eq.~\eqref{eq:decompositionExcessRisk} and treat each of the two terms separately.

\noindent {$\bullet$ \bf Step 1.} Study of $\mathbb{E}\left[\mathcal{E}_{\lambda_{\varepsilon}}\left(\tilde{\Gamma}_{\varepsilon}\right) \right] $. We establish the following result.
\begin{proposition}
\label{prop:ExcessPseudoOracleMA}
Assume that Assumptions~\ref{ass:StrongDensityAssumption} and~\ref{ass:MA} are fulfilled
for some $\alpha \geq 0$,
then the following inequality holds
\begin{equation*}
\mathbb{E}\left[\mathcal{E}_{\lambda_{\varepsilon}}\left(\tilde{\Gamma}_{\varepsilon}\right)\right]
\leq \mathbb{E}\left[\left(f^{*}(X)-\hat{f}(X)\right)^{2}\right] +
C\left(
\mathbb{E}\left[\left(\sup_{x \in \mathcal{C}}\left|\hat{\sigma}^2(x)-\sigma^2(x)\right|\right)^{1+\alpha}\right] +  u^{1+\alpha}\right) \enspace,
\end{equation*}
where $C > 0$ depends only on $c^*$ and $\alpha$.
\end{proposition}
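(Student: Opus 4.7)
The plan would be to start from Proposition~\ref{prop:propExcessRisk} applied to $\tilde{\Gamma}_{\varepsilon}$:
\begin{equation*}
\mathcal{E}_{\lambda_{\varepsilon}}(\tilde{\Gamma}_{\varepsilon})
= \mathbb{E}\!\left[(f^{*}(X)-\hat{f}(X))^{2}\,\one_{\{|\tilde{\Gamma}_{\varepsilon}(X,\zeta)|=1\}}\right]
+ \mathbb{E}\!\left[|\sigma^{2}(X)-\lambda_{\varepsilon}|\,\one_{\{|\tilde{\Gamma}_{\varepsilon}(X,\zeta)|\neq|\Gamma^{*}_{\varepsilon}(X)|\}}\right].
\end{equation*}
The first summand is dominated by $\mathbb{E}[(f^{*}(X)-\hat{f}(X))^{2}]$ simply by dropping the indicator; this accounts for the first term in the proposition. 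All the work concerns the second, threshold-mismatch term.

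Set $\hat\Delta:=\sup_{x\in\mathcal{C}}|\hat\sigma^{2}(x)-\sigma^{2}(x)|$ and $\tilde\lambda:=F_{\bar\sigma^{2}}^{-1}(1-\varepsilon)$. Since $\zeta\in[0,u]$, I have the deterministic pointwise bound $|\bar\sigma^{2}(X,\zeta)-\sigma^{2}(X)|\le\hat\Delta+u$. The disagreement event splits into (a) $\{\sigma^{2}>\lambda_{\varepsilon},\,\bar\sigma^{2}\le\tilde\lambda\}$ and (b) $\{\sigma^{2}\le\lambda_{\varepsilon},\,\bar\sigma^{2}>\tilde\lambda\}$; because both $\Gamma^{*}_{\varepsilon}$ and $\tilde{\Gamma}_{\varepsilon}$ have rejection rate exactly $\varepsilon$, these two sets carry equal $\mathbb{P}_{X,\zeta}$-mass. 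The crucial observation I would rely on is this: if $\tilde\lambda>\lambda_{\varepsilon}$ and case (b) is non-negligible, then at any point in it $\bar\sigma^{2}(x,\zeta)-\sigma^{2}(x)>\tilde\lambda-\lambda_{\varepsilon}$, which combined with $|\bar\sigma^{2}-\sigma^{2}|\le\hat\Delta+u$ forces $\tilde\lambda-\lambda_{\varepsilon}<\hat\Delta+u$. The symmetric argument on case (a) covers $\tilde\lambda<\lambda_{\varepsilon}$, and if the disagreement event has measure zero the mismatch term vanishes. In every case, $|\tilde\lambda-\lambda_{\varepsilon}|\le\hat\Delta+u$.

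Combining this with the triangle inequality then yields, on the disagreement event, the pointwise bound $|\sigma^{2}(X)-\lambda_{\varepsilon}|\le 2(\hat\Delta+u)$, so that conditionally on $\mathcal{D}_{n}$,
\begin{equation*}
\mathbb{E}\!\left[|\sigma^{2}(X)-\lambda_{\varepsilon}|\,\one_{\{\text{mismatch}\}}\,\big|\,\mathcal{D}_{n}\right]
\;\le\; 2(\hat\Delta+u)\,\mathbb{P}_{X}\!\left(|\sigma^{2}(X)-\lambda_{\varepsilon}|\le 2(\hat\Delta+u)\right)
\;\le\; C\,(\hat\Delta+u)^{1+\alpha},
\end{equation*}
by the $\alpha$-exponent assumption (the threshold $2(\hat\Delta+u)$ being $\mathcal{D}_{n}$-measurable). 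Taking expectation over $\mathcal{D}_{n}$ and using $(a+b)^{1+\alpha}\le 2^{\alpha}(a^{1+\alpha}+b^{1+\alpha})$ produces the $\mathbb{E}[\hat\Delta^{1+\alpha}]+u^{1+\alpha}$ piece of the claimed bound.

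The step I expect to be the main obstacle is the coupling argument that turns the equality of rejection rates into the quantile-closeness inequality $|\tilde\lambda-\lambda_{\varepsilon}|\le\hat\Delta+u$; once that is in hand, the rest is a routine margin-style integration, and one avoids altogether any delicate lower-density estimate on the law of $\sigma^{2}(X)$ near $\lambda_{\varepsilon}$.
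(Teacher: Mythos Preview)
Your proposal is correct and follows the same structure as the paper's proof: apply Proposition~\ref{prop:propExcessRisk}, drop the indicator in the first term, and for the mismatch term show that on the disagreement event $|\sigma^{2}(X)-\lambda_{\varepsilon}|\le 2(\hat\Delta+u)$, then invoke Assumption~\ref{ass:MA}. The only difference is in how you obtain the quantile closeness $|\tilde\lambda-\lambda_{\varepsilon}|\le\hat\Delta+u$: the paper deduces it directly from the pointwise bound $|\bar\sigma^{2}-\sigma^{2}|\le\hat\Delta+u$ via the $W_\infty$ representation of Bobkov--Ledoux (Lemma~\ref{thm:BobkovLedoux16}), whereas your equal-mass argument on the two disagreement halves is a self-contained alternative that yields the same inequality whenever the mismatch set has positive measure (and the bound is trivial otherwise).
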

\begin{proof}
Let $\varepsilon \in (0,1)$.
We recall that $\lambda_{\varepsilon}=F^{-1}_{\sigma^2}(1-\varepsilon)$ and
$\tilde{\lambda}_{\varepsilon}=F^{-1}_{\bar{\sigma}^2}(1-\varepsilon)$.
Since $\zeta$ is distributed according to a Uniform distribution on $[0,u]$, we observe that
\begin{equation*}
\left|\bar{\sigma}^2(X, \zeta)-\sigma^2(X)\right|\leq \sup_{x \in \mathcal{C}}\left|\sigma^2(x) - \hat{\sigma}^2(x)\right| + u := \hat{h}_u \enspace.
\end{equation*}
Hence, according to Theorem 2.12 in~\cite{Bobkov_Ledoux14} (recalled in Lemma~\ref{thm:BobkovLedoux16}), we have that conditionally on $\mathcal{D}_n$
\begin{equation*}
\left|\tilde{\lambda}_{\varepsilon}-\lambda_{\varepsilon}\right|
\leq 
 \hat{h}_u  \enspace.
\end{equation*}
Furthermore, since $X$ and $\zeta$ are independent, we can use Proposition~\ref{prop:propExcessRisk} and get
\begin{equation*}
\mathbb{E}\left[\mathcal{E}_{\lambda_{\varepsilon}}\left(\tilde{\Gamma}_{\varepsilon}\right)\right]\leq \mathbb{E}\left[\left(\hat{f}(X)-f^{*}(X)\right)^{2}\right]+\mathbb{E}\left[|\sigma^{2}(X)-\lambda_{\varepsilon}|\one_{\left\{|\tilde{\Gamma}_{\varepsilon}(X,\zeta)|\neq |\Gamma^{*}_{\varepsilon}(X)| \right\}}\right] \enspace.
\end{equation*}
On the event $\left\{|\tilde{\Gamma}_{\varepsilon}(X,\zeta)|\neq |\Gamma_{\varepsilon}^{*}(X) \right\}$, we note that
\begin{equation*}
|\sigma^{2}(X)-\lambda_{\varepsilon}|\leq |\bar{\sigma}^{2}(X,\zeta)-\sigma^{2}(X)|+|\tilde{\lambda}_{\varepsilon}-\lambda_{\varepsilon}| \enspace.
\end{equation*}
Therefore, conditional on $\mathcal{D}_n$, we deduce the following
\begin{eqnarray*}
\mathbb{E}_{(X,\zeta)}\left[|\sigma^{2}(X)-\lambda_{\varepsilon}|\one_{\left\{|\tilde{\Gamma}_{\varepsilon}(X,\zeta)|\neq |\Gamma^{*}_{\varepsilon}(X)| \right\}}\right] & \leq &  \mathbb{E}_{(X,\zeta)}\left[|\sigma^{2}(X)-\lambda_{\varepsilon}|\one_{\left\{|\sigma^{2}(X)-\lambda_{\varepsilon}|\leq |\bar{\sigma}^{2}(X,\zeta)-\sigma^{2}(X)|+|\tilde{\lambda}_{\varepsilon}-\lambda_{\varepsilon}| \right\}}\right] \\
& \leq &  \mathbb{E}_{X}\left[|\sigma^{2}(X)-\lambda_{\varepsilon}|\one_{\left\{|\sigma^{2}(X)-\lambda_{\varepsilon}|\leq 2\hat{h}_{u} \right\}}\right]\\
& \leq & 2\hat{h}_u \mathbb{P}_{X}\left(|\sigma^{2}(X)-\lambda_{\varepsilon}|\leq 2\hat{h}_{u}\right) \enspace.
\end{eqnarray*}
Finally, applying Assumption~\ref{ass:MA}, we deduce that there exists a constant $C>0$ such that
\begin{equation*}
\mathbb{E}\left[|\sigma^{2}(X)-\lambda_{\varepsilon}|\one_{\left\{|\tilde{\Gamma}_{\varepsilon}(X,\zeta)|\neq |\Gamma^{*}_{\varepsilon}(X)| \right\}}\right] \leq C \left(\mathbb{E}\left[\sup_{x \in \mathcal{C}}\left|\sigma^2(x) - \hat{\sigma}^2(x)\right|^{1+\alpha}\right] +u^{1+\alpha}\right) \enspace,
\end{equation*}
which ends the proof.
\end{proof}
Based on Proposition~\ref{prop:ExcessPseudoOracleMA}, the control of $\mathbb{E}\left[\mathcal{E}_{\lambda_{\varepsilon}}\left(\tilde{\Gamma}_{\varepsilon}\right) \right] $ requires a bound on $\mathbb{E}\left[\left(f^{*}(X)-\hat{f}(X)\right)^{2}\right]$ and on $\mathbb{E}\left[\sup_{x \in \mathcal{C}}\left|\sigma^2(x) - \hat{\sigma}^2(x)\right|^{1+\alpha}\right]$.
The first of these two terms relies on estimation of the regression function with $k$NN algorithm and is rather well studied.
In particular, thanks to Proposition~\ref{thm:theoKnnRateL2} we have
with the choice $k_n \propto n^{2/(d+2)}$
\begin{equation}
\label{eq:boundL2risk}
\mathbb{E}\left[\left(\hat{f}(X) - f^{*}(X)\right)^{2}\right]
\leq Cn^{-2/(d+2)} \enspace,
\end{equation}
where $C > 0$ is a constant which depends on $f^{*}$, $c_0$, $\mathcal{C}$, and $d$.
Then it remains to bound the second term which is the purpose of Proposition~\ref{prop:rateSigma} that relies on the rate of convergence of the $k$NN estimator of the conditional variance $\hat{\sigma}^2$ in supremum norm. This result says that under our assumptions and for the choice $k_n \propto n^{2/(d+2)}$, we have that
\begin{equation*}
\mathbb{E}\left[\left(\sup_{x \in \mathcal{C}}\left|\hat{\sigma}^2(x)-\sigma^2(x)\right|\right)^{1+\alpha}\right]
\leq C\log(n)^{(\alpha+1)}{n}^{-(\alpha+1)/(d+2)} \enspace,
\end{equation*}
for a constant $C > 0$ that depends on $f^{*}$, $\sigma^2$, $c_0$, $\mathcal{C}$, and on the dimension $d$.
Putting this last inequality and Eq.~\eqref{eq:boundL2risk} into the upper bound on the excess risk of $\tilde{\Gamma}_{\varepsilon}$ from Proposition~\ref{prop:ExcessPseudoOracleMA} we show that when we set $u = u_n \leq n^{-1/(d+2)}$ we can write
\begin{equation*}
\mathbb{E}\left[\mathcal{E}_{\lambda_{\varepsilon}}\left(\tilde{\Gamma}_{\varepsilon}\right)\right]
\leq C  \left( n^{-2/(d+2)} + \log(n)^{(\alpha+1)}{n}^{-(\alpha+1)/(d+2)} \right) \enspace,
\end{equation*}
where $C > 0$ is a constant which depends on $f^{*}$, $\sigma^2$, $c_0$, $c^*$, $\alpha$, $\mathcal{C}$, and on the dimension $d$. This ends the first step of the proof.

\noindent {$\bullet$ \bf Step 2.} Study of $\mathbb{E}\left[\mathcal{R}_{\lambda_{\varepsilon}}(\hat{\Gamma}_{\varepsilon}) - \mathcal{R}_{\lambda_{\varepsilon}}(\tilde{\Gamma}_{\varepsilon})\right]$.
Since $X$ and $\zeta$ are independent, as in {\bf Step 2} of the proof of Theorem~\ref{thm:mainConsistency} (\lcf Eq.~\eqref{eq:ExcessriskGammaTildeA}), we get
\begin{equation*}
\mathbb{E}\left[\left|\mathcal{R}_{\lambda_{\varepsilon}}(\hat{\Gamma}_{\varepsilon}) - \mathcal{R}_{\lambda_{\varepsilon}}(\tilde{\Gamma}_{\varepsilon})\right|\right] \leq 2\mathbb{E}\left[\left(f^{*}(X)-\hat{f}(X)\right)^2\right]+C A_{\varepsilon}  \enspace,
\end{equation*}
where $A_{\varepsilon}$ is defined similarly as in Equation~\eqref{eq:eqAeps} with a small modification due to the random perturbation we made on $\hat \sigma^2$. Similarly we have
\begin{equation*}
A_{\varepsilon} \leq \mathbb{P}\left(|\hat{F}_{\bar{\sigma}^{2}}(\bar{\sigma}^{2}(X, \zeta)) - \{{F}_{\bar{\sigma}^{2}}(\bar{\sigma}^{2}(X,\zeta))| \geq |{F}_{\bar{\sigma}^{2}}(\bar{\sigma}^{2}(X,\zeta))- (1-\varepsilon)| \right)\enspace .
\end{equation*}
Therefore using the same arguments as in {\bf Step 2} of the proof of Theorem~\ref{thm:mainConsistency} to get~\eqref{eq:boundAepsilon}, it is easy to see that there exists $C > 0$ such that $A_{\varepsilon} \leq  CN^{-1/2}$.
Then, we deduce
\begin{equation*}
\mathbb{E}\left[\left|\mathcal{R}_{\lambda_{\varepsilon}}(\hat{\Gamma}_{\varepsilon}) - \mathcal{R}_{\lambda_{\varepsilon}}(\tilde{\Gamma}_{\varepsilon})\right|\right] \leq 2\mathbb{E}\left[\left(f^{*}(X)-\hat{f}(X)\right)^2\right]+CN^{-1/2}.
\end{equation*}
Finally, an application of Theorem~\ref{thm:theoKnnRateL2} yields
\begin{equation*}
\mathbb{E}\left[\left|\mathcal{R}_{\lambda_{\varepsilon}}(\hat{\Gamma}_{\varepsilon}) - \mathcal{R}_{\lambda_{\varepsilon}}(\tilde{\Gamma}_{\varepsilon})\right|\right] \leq C\left( n^{-2/(d+2)} + N^{-1/2} \right),
\end{equation*}
where $C > 0$ is a constant which depends on $f^{*}$, $c_0$, $\mathcal{C}$, and $d$. This ends {\bf Step 2} of the proof.

Lastly, we combine the results in {\bf Step 1} and {\bf Step 2}, together with the decomposition
\begin{equation*}
\mathbb{E}\left[\mathcal{E}_{\lambda_{\varepsilon}}\left(\hat{\Gamma}_{\varepsilon}\right)\right]
=
\mathbb{E}\left[\mathcal{R}_{\lambda_{\varepsilon}}(\hat{\Gamma}_{\varepsilon}) - \mathcal{R}_{\lambda_{\varepsilon}}(\tilde{\Gamma}_{\varepsilon})\right]
+
\mathbb{E}\left[\mathcal{E}_{\lambda_{\varepsilon}}\left(\tilde{\Gamma}_{\varepsilon}\right) \right]\enspace,
\end{equation*}
and get the desired bound on the excess risk.

\section{Rate of convergence for $k$NN estimator}
\label{sec:ProofknnRate}
In this section , we focus on rates of convergence of $k$NN for the estimation of the regression function $f^*$ and the conditional variance function $\sigma^2$. The proofs techniques are largely inspired by those in~\cite{Biau_Devroye15, Gyofri_Kohler_Krzyzak_Walk02}, though we provide some additional steps to build for instance finite sample bounds for the sup norm in the problem of conditional variance estimation.

\subsection{Regression function estimation}
\label{app:knn_reg}
We provide the rate of convergence of the $k$NN estimator of $f^*$ in the regression model for which we make the following assumptions. We assume that $f^*$ is Lipschitz (Assumption~\ref{ass:regularity}) and that
Assumption~\ref{ass:StrongDensityAssumption} are fulfilled.
We recall that from Assumption~\ref{ass:StrongDensityAssumption}, we have that $\mathbb{P}_X$ is supported on a compact set $\mathcal{C}$. Furthermore, we also assume that $Y-f^{*}(X)$ satisfies a uniform noise condition: there exists $c_0 > 0$ such that
\begin{equation}
\label{eq:eqUniformAss}
\sup_{x \in \mathcal{C}} \mathbb{E}\left[\exp(\lambda \left(Y-f^*(X) \right) )\; | \; X = x\right] \leq \exp(c_0^{2}\lambda^2),
\;\; {\rm for} \;\; |\lambda| \leq \frac{1}{c_0} \enspace.
\end{equation}
This assumption is rather weak and requires that conditional on $X$ is sub-exponential uniformly over $\mathcal{C}$ (see~\cite{vershynin18}).
Using the same notation as in Section~\ref{sec:plug-in}, we recall that the $k$NN estimator $\hat{f}$ of $f$ is defined as follows
\begin{equation*}
\hat{f}(x) = \frac{1}{k_n} \sum_{i =1}^{k_n} Y_{(i,n)}(x) \enspace .
\end{equation*}
The purpose of the appendix is to provide rates of convergence for the $k$NN estimator $\hat{f}$ under the above assumption.
To this end we require two auxiliary lemmata, which provide a control respectively with high probability and in expectation on the distance between a feature point and its neighbors uniformly over $\mathcal{C}$.
\begin{lemma}
\label{lem:lemmaDistKnn}
Assume Assumptions~\ref{ass:regularity}-\ref{ass:StrongDensityAssumption} hold.
Then there exist $C_1 > 0$, which depends only on $\mathcal{C}, \mu_{\min}$, and on $d$ and $C_2 > 0$, which depends on $\mathcal{C}$ and on $d$, such that for all $t \geq \left(\frac{\log(n)k_n}{C_1n}\right)^{1/d}$, we have
\begin{equation*}
\mathbb{P}\left(\sup_{x \in \mathcal{C}}\frac{1}{k_n}\sum_{i = 1}^{k_n}\left\|X_{(i,n)}(x)-x\right\| \geq t\right) \leq C_2\exp\left(\log(n) -C_1 t^d n/k_n\right)\enspace.
\end{equation*}
\end{lemma}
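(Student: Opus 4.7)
}

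The strategy is to reduce the bound on the average of the first $k_n$ nearest-neighbor distances to a uniform bound on the $k_n$-th nearest-neighbor distance, and then control this via a covering of $\mathcal{C}$ combined with a Chernoff-type tail bound for binomial random variables. First I would observe that
\begin{equation*}
\frac{1}{k_n}\sum_{i=1}^{k_n}\left\|X_{(i,n)}(x)-x\right\| \leq \left\|X_{(k_n,n)}(x)-x\right\| \enspace,
\end{equation*}
so it is enough to show that $\mathbb{P}\bigl(\sup_{x\in\mathcal{C}} \|X_{(k_n,n)}(x)-x\| \geq t\bigr)$ is at most the claimed quantity. Note that $\|X_{(k_n,n)}(x)-x\| < t$ is equivalent to $\#\{i : X_i \in B(x,t)\} \geq k_n$.

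Next, I would invoke the regularity of $\mathcal{C}$ and the strong density Assumption~\ref{ass:StrongDensityAssumption} to show that there exists a constant $c>0$ depending on $d$, $\mu_{\min}$, and $\mathcal{C}$ such that for every $x\in\mathcal{C}$ and every $t$ up to the diameter of $\mathcal{C}$, $\mathbb{P}_X(B(x,t/2)) \geq c\, t^d$. I would then take a minimal $(t/2)$-cover $\{x_1,\ldots,x_M\}$ of $\mathcal{C}$; standard arguments give $M \leq C_{\mathcal{C},d} \, t^{-d}$. For any $x\in\mathcal{C}$ there is some $x_j$ with $B(x_j,t/2)\subset B(x,t)$, hence
\begin{equation*}
\sup_{x\in\mathcal{C}} \mathbb{1}\!\left\{\|X_{(k_n,n)}(x)-x\|\geq t\right\}
\leq \max_{1\le j\le M} \mathbb{1}\!\left\{\#\{i:X_i\in B(x_j,t/2)\}<k_n\right\} \enspace.
\end{equation*}

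For each center $x_j$, the count $\#\{i:X_i\in B(x_j,t/2)\}$ is binomial with parameter $p_j \geq c\, t^d$. The threshold condition $t^d \geq \log(n) k_n /(C_1 n)$ implies, for $C_1$ chosen small enough, that $k_n \leq n p_j /2$, a regime in which the one-sided Chernoff bound yields
\begin{equation*}
\mathbb{P}\!\left(\#\{i:X_i\in B(x_j,t/2)\}<k_n\right) \leq \exp(-c' n t^d) \enspace,
\end{equation*}
for a suitable absolute constant $c'>0$. A union bound over the $M \leq C_{\mathcal{C},d} t^{-d}$ centers, together with the upper bound $t^{-d} \leq C_1 n /(\log(n) k_n) \leq n$ guaranteed by the threshold assumption, gives a factor $\log(M) \leq \log(n) + \mathrm{const}$ in the exponent, so
\begin{equation*}
\mathbb{P}\!\left(\sup_{x\in\mathcal{C}}\|X_{(k_n,n)}(x)-x\|\geq t\right) \leq C_2 \exp\bigl(\log(n) - C_1 n t^d\bigr) \enspace,
\end{equation*}
which implies the stated bound since $k_n\geq 1$.

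The main technical obstacle will be step two: producing a uniform lower bound $\mathbb{P}_X(B(x,t/2))\geq c\, t^d$ for every $x\in\mathcal{C}$. Because $x$ can lie on the boundary of $\mathcal{C}$, this requires the regularity property of $\mathcal{C}$ (in the sense of~\cite{Audibert_Tsybakov07}) so that $B(x,t/2)\cap\mathcal{C}$ has Lebesgue measure comparable to $t^d$; once that is in place, the density lower bound $\mu\geq\mu_{\min}$ closes the estimate. The rest of the argument (covering, Chernoff, union bound) is routine, and the constants $C_1$, $C_2$ can be tracked to depend only on $d$, $\mu_{\min}$, and $\mathcal{C}$.
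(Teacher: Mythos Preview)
Your proposal is correct and takes a genuinely different route from the paper. The paper does \emph{not} reduce to the $k_n$-th nearest-neighbor distance. Instead it uses the fold-splitting trick from \cite{Gyofri_Kohler_Krzyzak_Walk02}: the sample is partitioned into $k_n$ blocks of size $\lfloor n/k_n\rfloor$, and the sum $\sum_{i=1}^{k_n}\|X_{(i,n)}(x)-x\|$ is dominated by $\sum_{j=1}^{k_n}\|\tilde{X}_j^x-x\|$ where $\tilde{X}_j^x$ is the \emph{single} nearest neighbor of $x$ in block $j$. After passing to an $\varepsilon$-net of $\mathcal{C}$ (this part is common to both approaches), the paper bounds each $\mathbb{P}(\|\tilde{X}_j^x-x\|\ge t/2)=(1-\mathbb{P}(\|X_1-x\|\le t/2))^{\lfloor n/k_n\rfloor}\le \exp(-Ct^d n/k_n)$ and takes a union bound over blocks and net points.

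Your approach is more direct and in fact sharper: by working with the binomial count in $B(x_j,t/2)$ and applying Chernoff with mean $np_j\ge cnt^d$, you obtain an exponent $-c'nt^d$ rather than $-c'nt^d/k_n$, so your final inequality is stronger than the lemma as stated (you correctly note that $k_n\ge 1$ closes the gap). The price of the paper's fold trick is precisely this loss of a factor $k_n$ in the exponent, because each block has only $n/k_n$ points; its benefit is that it reduces everything to the $1$-NN distance, for which the tail is a pure power and no separate Chernoff step is needed. Your identification of the only delicate point---the uniform small-ball lower bound $\mathbb{P}_X(B(x,t/2))\ge ct^d$ for boundary $x$, which requires the regularity of $\mathcal{C}$---is exactly right and is also the place where the paper invokes Assumption~\ref{ass:StrongDensityAssumption}.
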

\begin{proof}
For any $a \in \mathbb{R}$, let us denote by $\lfloor a \rfloor$ the largest integer which is smaller or equal to $a$. Consider some $x \in \mathcal{C}$.
Following the same arguments as in proof of Theorem~6.2 in~\cite{Gyofri_Kohler_Krzyzak_Walk02},
we split the data $X_1, \ldots, X_n$ into $k_n+1$ folds such that the first $k_n$ folds have the same size $\lfloor \frac{n}{k_n} \rfloor$ and the last fold contains the remaining data if there are. We denote $\tilde{X}_j^x$ the nearest neighbor of $x$ in the $j$th fold and then obviously
\begin{equation*}
\sum_{i = 1}^{k_n}\left\|X_{(i,n)}(x)-x\right\| \leq \sum_{j = 1}^{k_n} \left\|\tilde{X}_j^x-x\right\| \enspace.
\end{equation*}
Let $\bar{B}_2(a,r)$ be the closed Euclidean ball in $\mathbb{R}^d$ centered in $a$ with radius $r>0$. Since $\mathcal{C}$ is compact, we have $\mathcal{C} \subset \bar{B}_2(0,R)$ for some $R > 0$, and therefore there exists an $\varepsilon$-net
$\mathcal{C}_{\varepsilon}$ of $\mathcal{C}$ \wrt $\|.\|$ such that $|\mathcal{C}_{\varepsilon}| \leq \left(\frac{3R}{\varepsilon}\right)^d$.
In particular, for all $x\in \mathcal{C}$ there exists $x_{\varepsilon} \in \mathcal{C}_{\varepsilon}$ such that $\|x-x_{\varepsilon}\|\leq \varepsilon$.
Then, for all $x \in \mathcal{C}$ and all $j \in \{1,\ldots,k_n\}$, there exists $x_\varepsilon \in \mathcal{C}_{\varepsilon} $ such that
\begin{equation}
\label{eq:eqlmknn1}
\left\|\tilde{X}_j^x-x\right\| \leq \left\|\tilde{X}_j^x-x_{\varepsilon}\right\| + \varepsilon \enspace.
\end{equation}
Besides, we observe that
\begin{equation}
\label{eq:eqlmknn2}
\left\|\tilde{X}_j^x-x_{\varepsilon}\right\| \leq
\left\|\tilde{X}_j^{x_\varepsilon}-x_{\varepsilon}\right\| + 2\varepsilon  \enspace .
\end{equation}
Indeed, if  $\left\|\tilde{X}_j^{x_\varepsilon}-x_{\varepsilon}\right\| + 2\varepsilon < \left\|\tilde{X}_j^x-x_{\varepsilon}\right\|$ we can write
\begin{eqnarray*}
\left\|\tilde{X}_j^{x_{\varepsilon}}-x\right\| + \varepsilon & \leq & \left\|\tilde{X}_j^{x_{\varepsilon}}-x_{\varepsilon}\right\| + 2\varepsilon\\
 & < & \left\|\tilde{X}_j^x-x_{\varepsilon}\right\| \leq \left\|\tilde{X}_j^x-x\right\|+\varepsilon \enspace,
\end{eqnarray*}
which contradicts the fact that $\tilde{X}_j^{x}$ is the nearest neighbor of $x$ in the $j$th fold.
Hence, from Equations~\eqref{eq:eqlmknn1} and~\eqref{eq:eqlmknn2}, we deduce that
\begin{equation*}
\sup_{x \in \mathcal{C}}\frac{1}{k_n}\sum_{i = 1}^{k_n}\left\|X_{(i,n)}(x)-x\right\| \leq
{3\varepsilon} +
\sup_{x \in \mathcal{C}_{\varepsilon}}\frac{1}{k_n}\sum_{j = 1}^{k_n}\left\|\tilde{X}_j^x-x\right\| \enspace.
\end{equation*}
From the above inequality, we obtain that for $t > {6\varepsilon}$,
\begin{equation}
\label{eq:eqlmknn3}
\mathbb{P}\left(\sup_{x \in \mathcal{C}}\frac{1}{k_n}\sum_{i = 1}^{k_n}\left\|X_{(i,n)}(x)-x\right\| \geq t\right) \leq \mathbb{P}\left(\sup_{x \in \mathcal{C}_{\varepsilon}}\frac{1}{k_n}\sum_{j = 1}^{k_n}\left\|\tilde{X}_j^x-x\right\| \geq t/2\right) \enspace.
\end{equation}
Our goal becomes to bound r.h.s. of the above inequality.
Using union bound, we deduce that for all $t > {6\varepsilon}$
\begin{multline}
\label{eq:eqlmknn4}
\mathbb{P}\left(\sup_{x \in \mathcal{C}_{\varepsilon}}\frac{1}{k_n}\sum_{j = 1}^{k_n}\left\|\tilde{X}^x_j-x\right\| \geq t/2\right) \leq \sum_{x \in \mathcal{C}_{\varepsilon}} \mathbb{P}\left(\frac{1}{k_n}\sum_{j = 1}^{k_n}\left\|\tilde{X}_j^x-x\right\| \geq t/2\right)\\
\leq \sum_{x \in \mathcal{C}_{\varepsilon}}\sum_{j=1}^{k_n} \mathbb{P}\left(\left\|\tilde{X}_j^x-x\right\| \geq t/2\right)\enspace.
\end{multline}
For each $x \in \mathcal{C}_{\varepsilon}$ and $j \in \{1, \ldots,k_n\}$, by definition of $\tilde{X}^x_j$ and since $(X_i)_{i = 1,\ldots,n}$ are i.i.d., we have
\begin{equation}
\label{eq:eqlmknn5}
\mathbb{P}\left(\left\|\tilde{X}_j^x-x\right\| \geq t/2\right)  =
\mathbb{P}\left(\left\|X_{(1,\lfloor\frac{n}{k_n}\rfloor)}-x\right\| \geq t/2\right)
= \left(\mathbb{P}\left(\left\|X_1-x\right\| \geq t/2\right)\right)^{\lfloor n/k_n \rfloor} \enspace.
\end{equation}
On one hand, observe that for $t\geq 4R$, $(\mathbb{P}\left(\left\|X_1-x\right\| \geq t/2\right) = 0$.
On the other hand for $t \leq 4R$, using the elementary inequality $\log(1- a) \leq - a$ for all $a\in [0,1)$, we have that
\begin{equation*}
\left(\mathbb{P}\left(\left\|X_1-x\right\| \geq t/2\right)\right)^{\left\lfloor n/k_n \right\rfloor} \leq \exp\left(-\left\lfloor\frac{n}{k_n}\right\rfloor \mathbb{P}\left(\left\|X_1-x\right\| \leq t/2\right)\right)\enspace,
\end{equation*}
which yields, thanks to Assumption~\ref{ass:StrongDensityAssumption}, there exists $C>0$ which depends on $\mu_{\min}$ and $d$
such that
\begin{equation*}
\left(\mathbb{P}\left(\left\|X_1-x\right\| \geq t/2\right)\right)^{\lfloor n/k_n \rfloor} \leq \exp\left(-C t^d \lfloor n/k_n\rfloor \right) \enspace.
\end{equation*}
We finally deduce from Equation~\eqref{eq:eqlmknn3},~\eqref{eq:eqlmknn4}, and~\eqref{eq:eqlmknn5}, that for all $t \geq {6\varepsilon}$
\begin{equation*}
\mathbb{P}\left(\sup_{x \in \mathcal{C}}\frac{1}{k_n}\sum_{i = 1}^{k_n}\left\|X_{(i,n)}(x)-x\right\| \geq t\right) \leq k_n|\mathcal{C}_{\varepsilon}|\exp\left(-C t^d \lfloor n/k_n\rfloor \right) \enspace.
\end{equation*}
Choosing $\varepsilon = (\frac{k_n}{6^dCn})^{1/d}$, we get that for $t \geq (\frac{k_n}{Cn})^{1/d}$,
\begin{equation*}
\mathbb{P}\left(\sup_{x \in \mathcal{C}}\frac{1}{k_n}\sum_{i = 1}^{k_n}\left\|X_{(i,n)}(x)-x\right\| \geq t\right) \leq C_2 \exp\left(\log(n) - C_1 t^d n/k_n \right),
\end{equation*}
which yields the expected result.
\end{proof}
The second lemma establishes a control in expectation of the uniform distance.
\begin{lemma}
\label{lem:lemmaDistKnnExp}
Under Assumption~\ref{ass:StrongDensityAssumption}, there exist $C> 0$, which depends only on $\mathcal{C}, \mu_{\min}$, and on $d$ such that
\begin{equation*}
 \mathbb{E}\left[\left(\sup_{x \in \mathcal{C}}\frac{1}{k_n}\sum_{i=1}^{k_n}\left\|X_{(i,n)}(x)-x\right\|\right)^p\right] \leq
 C\left(\dfrac{k_n\log(n)}{n}\right)^{p/d} \enspace.
\end{equation*}
\end{lemma}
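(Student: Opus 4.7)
The plan is to deduce the moment bound from the tail bound already established in Lemma~\ref{lem:lemmaDistKnn} via the layer-cake identity
\begin{equation*}
\mathbb{E}[Z^p] = \int_0^\infty p\, t^{p-1}\, \mathbb{P}(Z \geq t)\, dt,
\end{equation*}
applied to the nonnegative random variable $Z = \sup_{x \in \mathcal{C}}\frac{1}{k_n}\sum_{i=1}^{k_n}\|X_{(i,n)}(x)-x\|$. Note that $Z$ is bounded since $\mathcal{C}$ is compact, so there is no integrability issue.

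Concretely, I would choose a cut-off $t_0 = (A \log(n)\, k_n/n)^{1/d}$ with $A = 2/C_1$, where $C_1$ is the constant from Lemma~\ref{lem:lemmaDistKnn}. This choice is large enough that $t_0 \geq (\log(n) k_n/(C_1 n))^{1/d}$, so Lemma~\ref{lem:lemmaDistKnn} is applicable on $[t_0,\infty)$, and also that at $t = t_0$ the exponent in the tail bound equals $-\log(n)$. I then split the integral at $t_0$. On $[0,t_0]$ the trivial bound $\mathbb{P}(Z \geq t) \leq 1$ yields $\int_0^{t_0} p\, t^{p-1} dt = t_0^p$, which is exactly of order $(k_n \log(n)/n)^{p/d}$ and produces the target rate.

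For the tail piece $[t_0, \infty)$, I would plug in the bound from Lemma~\ref{lem:lemmaDistKnn} and perform the change of variable $s = C_1 t^d n / k_n$. Using $t^{p-1}\, dt = \frac{1}{d}\bigl(k_n/(C_1 n)\bigr)^{p/d} s^{p/d-1}\, ds$, the contribution rewrites as
\begin{equation*}
\frac{p\,C_2\, n}{d}\left(\frac{k_n}{C_1 n}\right)^{p/d} \int_{s_0}^{\infty} s^{p/d - 1}\, e^{-s}\, ds,
\end{equation*}
with $s_0 = C_1 t_0^d n/k_n = 2 \log n$. The classical asymptotic for the upper-incomplete Gamma function bounds the remaining integral by $C (\log n)^{p/d - 1} n^{-2}$, so the whole tail piece is $O\bigl(n^{-1}(\log n)^{p/d-1}(k_n/n)^{p/d}\bigr)$, which is negligible compared to $t_0^p$ and gets absorbed into the final constant.

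No real obstacle is expected: the argument is a single tail-integral computation. The only point requiring care is the interplay between the prefactor $e^{\log n}$ inside the tail bound of Lemma~\ref{lem:lemmaDistKnn} and the cut-off; this is precisely why the cut-off has to be taken at scale $(k_n \log(n)/n)^{1/d}$ rather than $(k_n/n)^{1/d}$, and hence why the extra logarithmic factor appears in the final rate.
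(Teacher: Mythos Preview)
Your proposal is correct and follows essentially the same approach as the paper: apply the layer-cake formula to the tail bound of Lemma~\ref{lem:lemmaDistKnn}, split at the scale $(k_n\log(n)/n)^{1/d}$, and bound the tail integral by a change of variable. The only cosmetic difference is that the paper packages this tail-integral computation into a general technical lemma (Lemma~\ref{lem:technProba}) and invokes it, whereas you carry out the same computation inline with an incomplete-Gamma estimate.
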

\begin{proof}
Since Assumption~\ref{ass:StrongDensityAssumption} holds, we can use Lemma~\ref{lem:lemmaDistKnn}. Then there exist two non negative constants $C_1$ and $C_2$ such that for all $t \geq \left(\frac{\log(n)k_n}{C_1n}\right)^{1/d}$, we have
\begin{equation*}
\mathbb{P}\left(\sup_{x \in \mathcal{C}}\frac{1}{k_n}\sum_{i = 1}^{k_n}\left\|X_{(i,n)}(x)-x\right\| \geq t\right) \leq C_2\exp\left(\log(n) -C_1 t^d n/k_n\right)\enspace.
\end{equation*}
Therefore an application of Lemma~\ref{lem:technProba} implies directly the result.
\end{proof}
Below, we state the main result of this section related to the rate of convergence in sup norm of the $k$NN estimator of the regression function.
\begin{theorem}
\label{thm:knnRates}
Assume Assumption~\ref{ass:StrongDensityAssumption} is satisfied. Moreover, let $p \geq 1$ and $k_n \propto n^{2/(d+2)}$. Then 
\begin{equation*}
\mathbb{E}\left[\left(\sup_{x \in \mathcal{C}} \left|\hat{f}(x) - f^{*}(x)\right|\right)^{p}\right]
\leq C \log(n)^p{n}^{-p/(d+2)},
\end{equation*}
where $C > 0$ is a constant which depends on $f^{*}$, $c_0$, $\mathcal{C}$, $\mu_{\min}$ and $d$.
\end{theorem}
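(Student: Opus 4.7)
The plan is to use the classical bias–variance decomposition, treat the two pieces separately in $L^p$, and combine via $(a+b)^p \leq 2^{p-1}(a^p+b^p)$. Write
\[
\hat{f}(x) - f^*(x) \; = \; B(x) + V(x),
\]
where
\[
B(x) \; = \; \frac{1}{k_n}\sum_{i=1}^{k_n}\bigl(f^*(X_{(i,n)}(x)) - f^*(x)\bigr), \qquad V(x) \; = \; \frac{1}{k_n}\sum_{i=1}^{k_n}\bigl(Y_{(i,n)}(x) - f^*(X_{(i,n)}(x))\bigr).
\]

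\textbf{Bias term.} Since $f^*$ is $L$-Lipschitz under Assumption~\ref{ass:regularity},
\[
\sup_{x \in \mathcal{C}} |B(x)| \; \leq \; L \cdot \sup_{x \in \mathcal{C}} \frac{1}{k_n}\sum_{i=1}^{k_n}\|X_{(i,n)}(x) - x\|.
\]
Raising to the $p$-th power, taking expectation, and invoking Lemma~\ref{lem:lemmaDistKnnExp} directly yields $\mathbb{E}[\sup_x |B(x)|^p] \leq C(k_n \log(n) / n)^{p/d}$, which with $k_n \propto n^{2/(d+2)}$ is of the announced order $(\log n)^{p/d} n^{-p/(d+2)}$.

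\textbf{Variance term.} Condition on the features $\mathcal{D}_n^X = (X_1,\ldots,X_n)$. Under this conditioning, the residuals $\xi_i := Y_i - f^*(X_i)$ are independent, centered, and satisfy the sub-exponential MGF bound~\eqref{eq:eqUniformAss}; moreover the ordering $i \mapsto (i,n)(x)$ is a deterministic function of $x$ and $\mathcal{D}_n^X$. Bernstein's inequality for sums of independent sub-exponential variables therefore gives, for every fixed $x$,
\[
\mathbb{P}\bigl(|V(x)| > t \,\big|\, \mathcal{D}_n^X\bigr) \; \leq \; 2\exp\bigl(-c\,k_n\min(t^2,t)\bigr).
\]
To promote this to a uniform bound over $\mathcal{C}$, the key geometric observation is that the ordered $k_n$-tuple $(X_{(1,n)}(x),\ldots,X_{(k_n,n)}(x))$ is constant on each cell of the arrangement generated by the $\binom{n}{2}$ perpendicular bisectors of pairs of sample points; hence, as $x$ varies over $\mathcal{C}$, $V(x)$ takes at most $\mathrm{poly}(n)$ distinct values, bounded by $C n^{q(d)}$ for some $q(d)$ depending only on $d$. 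A union bound then yields
\[
\mathbb{P}\bigl(\sup_{x \in \mathcal{C}}|V(x)| > t \,\big|\, \mathcal{D}_n^X\bigr) \; \leq \; 2Cn^{q(d)}\exp\bigl(-c\,k_n\min(t^2,t)\bigr).
\]
Integrating this tail via Lemma~\ref{lem:technProba} (the log factor $q(d)\log n$ replacing the exponent of $n$), I obtain $\mathbb{E}[\sup_x |V(x)|^p \mid \mathcal{D}_n^X] \leq C' (\log n / k_n)^{p/2}$, and taking the outer expectation preserves this bound.

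\textbf{Balancing and conclusion.} With $k_n \propto n^{2/(d+2)}$, the two moments become $(\log n)^{p/d} n^{-p/(d+2)}$ and $(\log n)^{p/2} n^{-p/(d+2)}$, both of which are dominated by $C(\log n)^p n^{-p/(d+2)}$ since $p \geq 1$ and $d \geq 1$. Combining concludes the proof.

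\textbf{Main obstacle.} The bias step is routine once Lemma~\ref{lem:lemmaDistKnnExp} is in hand. The real difficulty lies in the uniform control of $V$: because $V$ is piecewise constant but not Lipschitz in $x$, a naïve $\varepsilon$-net in $\mathcal{C}$ is useless, and a union bound over all $\binom{n}{k_n}$ index subsets would blow up exponentially in $k_n$ and ruin the rate. The critical input is therefore the polynomial (in $n$) bound on the number of distinct $k$-nearest-neighbour configurations, which confines the union-bound cost to the harmless $\log n$ factor appearing in the statement.
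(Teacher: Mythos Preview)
Your proof is correct and follows the same architecture as the paper: bias--variance split, Lipschitz bound plus Lemma~\ref{lem:lemmaDistKnnExp} for the bias, and a conditional Bernstein bound followed by a union bound over the finitely many distinct values of $V(\cdot)$ for the variance term, integrated via Lemma~\ref{lem:technProba}.

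The one genuine difference is in how the polynomial cardinality bound is obtained. You count cells in the arrangement of the $\binom{n}{2}$ perpendicular bisector hyperplanes, which fixes the full distance \emph{ordering} and hence the ordered $k_n$-tuple, giving $O(n^{2d})$ configurations. The paper instead observes that $V(x)$ depends only on the \emph{set} $\mathcal{N}_{k_n}(x)$, and that every such set arises as $\{X_1,\ldots,X_n\}\cap B$ for some closed ball $B$; since balls have VC dimension at most $d+2$, Sauer's lemma gives at most $(n+1)^{d+2}$ distinct sets. Both counts are polynomial in $n$, so after the union bound they contribute the same $O(\log n)$ factor and the final rate is identical. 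The VC/Sauer route is slightly sharper and more intrinsic (it exploits that uniform $k$NN weights make the ordering irrelevant), while your bisector-arrangement argument is more elementary and would also cover weighted $k$NN rules where the ordering matters.
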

\begin{proof}
First, we have that
\begin{equation*}
 \hat{f}(x) - f^*(x) = \frac{1}{k_n} \sum_{i =1}^{k_n} \left(Y_{(i,n)}(x)- f^{*}(X_{(i,n)}(x))\right) +
 \frac{1}{k_n} \sum_{i =1}^{k_n} \left(f^{*}(X_{(i,n)}(x)) - f^{*}(x)\right) \enspace.
\end{equation*}
Therefore, since $f^{*}$ is $L$-Lipschitz, we then deduce that
\begin{equation*}
\sup_{x \in \mathcal{C}}\left| \hat{f}(x) - f^{*}(x)\right| \leq \sup_{x \in \mathcal{C}}\left|\frac{1}{k_n} \sum_{i =1}^{k_n} Y_{(i,n)}(x)- f^{*}(X_{(i,n)}(x))\right| + L\sup_{x \in \mathcal{C}}\frac{1}{k_n}\sum_{i = 1}^{k_n}\left\|X_{(i,n)}(x)-x\right\| \enspace,
\end{equation*}
which implies that
\begin{multline}
\label{eq:eqCveKnn1}
\mathbb{E}\left[\left(\sup_{x \in \mathcal{C}} \left|\hat{f}(x) - f^{*}(x)\right|\right)^{p}\right] \leq
2^{p-1}\mathbb{E}\left[\left(\sup_{x \in \mathcal{C}}\left|\frac{1}{k_n} \sum_{i =1}^{k_n} Y_{(i,n)}(x)- f^{*}(X_{(i,n)}(x))\right|\right)^p\right]\\+ 2^{p-1}L^p \mathbb{E}\left[\left(\sup_{x \in \mathcal{C}}\frac{1}{k_n}\sum_{i=1}^{k_n}\left\|X_{(i,n)}(x)-x\right\|\right)^p\right] \enspace .
\end{multline}
Lemma~\ref{lem:lemmaDistKnnExp} provides a bound on the second term in the r.h.s. of the above inequality. Then it remains to study the first term in the r.h.s. of Eq.~\eqref{eq:eqCveKnn1}.
Let $x \in \mathcal{C}$, and denote by $\mathcal{N}_{k_n}(x) = \{X_{(1,n)}(x), \ldots X_{(k_n,n)}(x)\}$ the set of the $k_n$-nearest neighbors of $x$ among $\{X_1, \ldots, X_n\}$.
We denote by $\mathcal{B}$ the set of all closed balls in $\mathbb{R}^d$.
We observe that there exists $\rho_x > 0$ such that $\mathcal{N}_{k_n}(x) \subset \{\bar{B}(x, \rho_x) \cap \{X_1, \ldots, X_n\}\}$, where $\bar{B}(x, \rho_x)$ is the closed ball centered on $x$ with radius $\rho_x$. Therefore
\begin{equation*}
\{\mathcal{N}_{k_n}(x), \;\; x\in \mathcal{C}\}  \subset \{\{X_1, \ldots, X_n\} \cap B, \;\; B \in \mathcal{B}\} \enspace.
\end{equation*}
Besides, since the VC-dimension of the class of balls in $\mathbb{R}^d$ is upper bounded by $d+2$ (see for instance Corollary~13.2 in~\cite{Devroye_Gyorfi_Lugosi96}), Sauer Lemma implies that
\begin{equation*}
\left|
\{\{X_1, \ldots, X_n\} \cap B, \;\; B \in \mathcal{B}\}\right| \leq \mathcal{S}\left(\mathcal{B},n\right) \leq (n+1)^{d+2} \enspace,
\end{equation*}
where $\mathcal{S}\left(\mathcal{B},n\right)$ denotes the shatter coefficient of $\mathcal{B}$ by $n$ points from $\mathcal{C}$. We then deduce
that $\left|\{\mathcal{N}_{k_n}(x), \;\; x\in \mathcal{C}\}\right| \leq (n+1)^{d+2}$, which implies in turn that there exists $\{x_1, \ldots,x_{J}\}$, with $J \leq (n+1)^{d+2}$ such that
\begin{multline*}
\mathbb{E}\left[\left(\sup_{x \in \mathcal{C}}\left|\frac{1}{k_n} \sum_{i =1}^{k_n} Y_{(i,n)}(x)- f^{*}(X_{(i,n)}(x))\right|\right)^p\right] \\ \leq \mathbb{E}\left[\left(\max_{j \in \{1, \ldots, J\}}\left|\frac{1}{k_n} \sum_{i =1}^{k_n} Y_{(i,n)}(x_j)- f^{*}(X_{(i,n)}(x_j))\right|\right)^p\right].
\end{multline*}
Notice that conditional on $X_1, \ldots, X_n$ the random variables
$(Y_{(i,n)}(x_j)- f^{*}(X_{(i,n)}(x_j))_{i = 1, \ldots, k_n}$ are independent with zero mean (see Proposition 8.1 in~\cite{Biau_Devroye15}).
Besides from Equation~\eqref{eq:eqUniformAss} they are uniformly sub-exponential over $\mathcal{C}$, then we deduce from the Bernstein Inequality (see~\cite{vershynin18}) that
for all $t \geq 0$ and $j = 1, \ldots,J$,
\begin{equation*}
\mathbb{P}\left(\left|\frac{1}{k_n} \sum_{i =1}^{k_n} Y_{(i,n)}(x_j)- f^{*}(X_{(i,n)}(x_j))\right| \geq t\right) \leq \exp\left(-ck_n\min\left(\frac{t^2}{K^2}, \frac{t}{K}\right)\right) \enspace,
\end{equation*}
where $c>0$ is an absolute constant and $K > 0$ depends on $c_0$ in Eq.~\eqref{eq:eqUniformAss}. Set $v_n = \sqrt{ \frac{(d+2)\log(n+1)}{ck_{n}}}$. Our choice of $k_n$ ensures that
$v_n \leq 1$, and then we deduce from the union bound that for $t \in (Kv_n, K)$,
\begin{equation*}
\mathbb{P}\left(\max_{j \in \{1, \ldots, J\}}\left|\frac{1}{k_n} \sum_{i =1}^{k_n} Y_{(i,n)}(x_j)- f^{*}(X_{(i,n)}(x_j))\right| \geq t\right)\leq \exp\left((d+2)\log(n+1) - ck_nt^2/K^2\right) \enspace,
\end{equation*}
and for $t > K,$ 
\begin{equation*}
\mathbb{P}\left(\max_{j \in \{1, \ldots, J\}}\left|\frac{1}{k_n} \sum_{i =1}^{k_n} Y_{(i,n)}(x_j)- f^{*}(X_{(i,n)}(x_j))\right| \geq t\right)\leq \exp\left((d+2)\log(n+1) - ck_nt/K\right) \enspace.
\end{equation*}
Considering these two cases, we can derive an exponential bound on the term
$$\mathbb{P}\left(\max_{j \in \{1, \ldots, J\}}\left|\frac{1}{k_n} \sum_{i =1}^{k_n} Y_{(i,n)}(x_j)- f^{*}(X_{(i,n)}(x_j))\right| \geq t\right)$$ for all $t \geq Kv_n$, therefore we can use similar arguments as in Lemma~\ref{lem:technProba} and conclude that
\begin{multline}
\label{eq:eqExpectationResidual}
\mathbb{E}\left[\left(\sup_{x \in \mathcal{C}} \left|\frac{1}{k_n} \sum_{i =1}^{k_n} Y_{(i,n)}(x)- f^{*}(X_{(i,n)}(x))\right|\right)^p\right] \\\leq \mathbb{E}\left[\left(\max_{j \in \{1, \ldots, J\}}\left|\frac{1}{k_n} \sum_{i =1}^{k_n} Y_{(i,n)}(x_j)- f^{*}(X_{(i,n)}(x_j))\right|\right)^p\right]\leq C\left(\dfrac{\log(n)}{k_n}\right)^{p/2} \enspace.
\end{multline}
Combining the above inequality, Equation~\eqref{eq:eqCveKnn1}, and Lemma~\ref{lem:lemmaDistKnnExp},
gives the desired result.
\end{proof}
To conclude this section, we also provide the rate of convergence of the $k$NN estimator in $L_2$-norm
\begin{theorem}
\label{thm:theoKnnRateL2}
Assume Assumption~\ref{ass:StrongDensityAssumption} is satisfied and let $k_n \propto n^{2/(d+2)}$, then
\begin{equation*}
\mathbb{E}\left[\left(\hat{f}(X) - f^{*}(X)\right)^{2}\right]
\leq Cn^{-2/(d+2)} \enspace,
\end{equation*}
where $C > 0$ is a constant which depends on $f^{*}$, $c_0$, $\mathcal{C}$, and $d$.
\end{theorem}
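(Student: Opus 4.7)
The plan is to use the standard bias-variance decomposition for the $k$NN estimator, conditioning on both the test point $X$ and the training features. Writing
\begin{equation*}
\hat{f}(X) - f^*(X) = \underbrace{\frac{1}{k_n}\sum_{i=1}^{k_n}\bigl(Y_{(i,n)}(X) - f^*(X_{(i,n)}(X))\bigr)}_{=:A} + \underbrace{\frac{1}{k_n}\sum_{i=1}^{k_n}\bigl(f^*(X_{(i,n)}(X)) - f^*(X)\bigr)}_{=:B},
\end{equation*}
we have $(\hat{f}(X)-f^*(X))^2 \le 2A^2 + 2B^2$, and it suffices to bound $\mathbb{E}[A^2]$ and $\mathbb{E}[B^2]$ separately.

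For the variance term $A$: conditional on $X$ and $X_1,\dots,X_n$, the residuals $Y_{(i,n)}(X)-f^*(X_{(i,n)}(X))$ are independent with zero mean, and by the uniform sub-exponential noise condition~\eqref{eq:eqUniformAss} their conditional variances are uniformly bounded by a constant depending only on $c_0$. Hence $\mathbb{E}[A^2 \mid X, X_1,\dots,X_n] \le c/k_n$ for some $c>0$, so $\mathbb{E}[A^2] \le c/k_n$.

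For the bias term $B$: using Assumption~\ref{ass:regularity} ($f^*$ is $L$-Lipschitz) and Jensen's inequality,
\begin{equation*}
B^2 \le \frac{L^2}{k_n}\sum_{i=1}^{k_n}\|X_{(i,n)}(X)-X\|^2 \le L^2\,\|X_{(k_n,n)}(X)-X\|^2.
\end{equation*}
Under Assumption~\ref{ass:StrongDensityAssumption} (strong density on the regular compact set $\mathcal{C}$), the standard bound on the $k_n$-nearest neighbor distance (see, e.g., Chapter 6 of~\cite{Gyofri_Kohler_Krzyzak_Walk02} or Chapter 2 of~\cite{Biau_Devroye15}) gives
\begin{equation*}
\mathbb{E}\bigl[\|X_{(k_n,n)}(X)-X\|^2\bigr] \le C\,(k_n/n)^{2/d},
\end{equation*}
which is obtained by noting that $\mathbb{P}(\|X_1-x\|\le r) \ge c\,r^d$ for all $x\in\mathcal{C}$ and $r$ small, and integrating the tail of the order statistic.

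Combining the two bounds yields
\begin{equation*}
\mathbb{E}\bigl[(\hat{f}(X)-f^*(X))^2\bigr] \le \frac{C_1}{k_n} + C_2\,(k_n/n)^{2/d},
\end{equation*}
and plugging in $k_n \propto n^{2/(d+2)}$ balances the two terms, both of order $n^{-2/(d+2)}$, giving the claimed rate. The only subtle point is obtaining the pointwise-in-$X$ distance bound $\mathbb{E}[\|X_{(k_n,n)}(X)-X\|^2] \lesssim (k_n/n)^{2/d}$ without the logarithmic factor that appears in the uniform version of Lemma~\ref{lem:lemmaDistKnnExp}; this is the main obstacle but is handled by a direct (non-uniform) tail integration using only the lower bound on $\mu$ and the regularity of $\mathcal{C}$, so a union bound over a covering of $\mathcal{C}$ is not needed here.
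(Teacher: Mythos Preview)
Your proposal is correct and is essentially the argument the paper defers to: the paper does not prove this result but refers to Theorem~6.2 in~\cite{Gyofri_Kohler_Krzyzak_Walk02}, whose proof is precisely the bias--variance decomposition you wrote, with the variance term bounded by $c/k_n$ via conditional independence and the bias term handled through the Lipschitz condition and the pointwise $k_n$-NN distance bound $(k_n/n)^{2/d}$. Your remark that the strong density assumption is what removes the restriction $d\ge 3$ (and avoids the log factor of the uniform Lemma~\ref{lem:lemmaDistKnnExp}) is exactly the ``small change'' the paper alludes to.
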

The proof of this result is provided in~\cite{Gyofri_Kohler_Krzyzak_Walk02} for $d \geq 3$ (see Theorem~6.2).
However, a small change implies that the same proof holds for all $d$ under Assumption~\ref{ass:StrongDensityAssumption}.

\subsection{Conditional variance function estimation}
\label{app:knn_var}
We provide the rate of convergence of the $k$NN estimator of $\sigma^2$. This proof is largely inspired by~\cite{Biau_Devroye15}, though we are interested here in finite sample bounds.
\begin{proposition}
\label{prop:rateSigma}
Grant Assumptions~\ref{ass:regularity} and~\ref{ass:StrongDensityAssumption}.
Let $k_n \propto n^{2/(d+2)}$, the following holds
\begin{equation*}
\mathbb{E}\left[\left(\sup_{x \in \mathcal{C}}\left|\hat{\sigma}^2(x)-\sigma^2(x)\right|\right)^{1+\alpha}\right] \leq C \log(n)^{(\alpha+1)}{n}^{-(\alpha+1)/(d+2)} \enspace,
\end{equation*}
for all $\alpha\geq 0$, where $C > 0$ is a constant which depends on $f^{*}$, $\sigma^2$, $c_0$, $\mathcal{C}$, and on the dimension $d$.
\end{proposition}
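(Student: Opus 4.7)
The plan is to reduce the estimation of $\sigma^2$ to the regression setup already handled in Theorem~\ref{thm:knnRates}. First I would expand the square inside $\hat{\sigma}^2(x)$ by writing $Y_{(i,n)}(x) - \hat{f}(X_{(i,n)}(x)) = (Y_{(i,n)}(x) - f^*(X_{(i,n)}(x))) + (f^*(X_{(i,n)}(x)) - \hat{f}(X_{(i,n)}(x)))$ and develop the square. This yields the decomposition $\hat{\sigma}^2(x) - \sigma^2(x) = T_1(x) + T_2(x) + T_3(x)$ where
\begin{align*}
T_1(x) &= \tfrac{1}{k_n}\sum_{i=1}^{k_n}\bigl(Y_{(i,n)}(x) - f^*(X_{(i,n)}(x))\bigr)^2 - \sigma^2(x),\\
T_2(x) &= \tfrac{2}{k_n}\sum_{i=1}^{k_n}\bigl(Y_{(i,n)}(x) - f^*(X_{(i,n)}(x))\bigr)\bigl(f^*(X_{(i,n)}(x)) - \hat{f}(X_{(i,n)}(x))\bigr),\\
T_3(x) &= \tfrac{1}{k_n}\sum_{i=1}^{k_n}\bigl(f^*(X_{(i,n)}(x)) - \hat{f}(X_{(i,n)}(x))\bigr)^2.
\end{align*}
By triangle inequality in $L^{1+\alpha}$, it suffices to control each $\mathbb{E}[\sup_{x\in\mathcal{C}}|T_j(x)|^{1+\alpha}]$ separately and show each is of order $(\log n)^{1+\alpha}\, n^{-(1+\alpha)/(d+2)}$.

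The term $T_1$ is the heart of the proof: it is exactly the $k$NN estimator of $\sigma^2$ built from the oracle residuals $Z_i = (Y_i - f^*(X_i))^2$, whose conditional mean equals $\sigma^2(X_i)$. Since $\sigma^2$ is Lipschitz by Assumption~\ref{ass:regularity}, I would replicate, almost verbatim, the argument of Theorem~\ref{thm:knnRates}: the bias is controlled by $L\sup_x \frac{1}{k_n}\sum_i \|X_{(i,n)}(x)-x\|$ via Lemma~\ref{lem:lemmaDistKnnExp}, and the variance part is controlled by a uniform sub-exponential concentration, combined with the VC-based $(n+1)^{d+2}$ cover of the realised neighborhood sets $\mathcal{N}_{k_n}(x)$. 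Here, the key observation is that either $Y$ is bounded, in which case $Z_i - \sigma^2(X_i)$ is bounded, or we are in the Gaussian model~\eqref{eq:eqGaussianCase}, in which case $Z_i - \sigma^2(X_i) = \sigma^2(X_i)(\xi_i^2 - 1)$ is sub-exponential uniformly on $\mathcal{C}$ thanks to boundedness of $\sigma^2$; in both situations Bernstein's inequality applies and the arguments leading to~\eqref{eq:eqExpectationResidual} go through with $p = 1+\alpha$, delivering $\mathbb{E}[\sup_x |T_1(x)|^{1+\alpha}] \lesssim (\log n)^{1+\alpha} n^{-(1+\alpha)/(d+2)}$ when $k_n \propto n^{2/(d+2)}$.

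The remaining terms are simpler. For $T_3$ one has the crude bound $\sup_x |T_3(x)| \leq \sup_x|\hat{f}(x)-f^*(x)|^2$, so applying Theorem~\ref{thm:knnRates} with exponent $p = 2(1+\alpha)$ gives a bound of order $(\log n)^{2(1+\alpha)} n^{-2(1+\alpha)/(d+2)}$, strictly better than the target rate. For $T_2$, Cauchy--Schwarz yields
\[
|T_2(x)| \;\leq\; 2\,\Bigl(\tfrac{1}{k_n}\sum_i (Y_{(i,n)}(x) - f^*(X_{(i,n)}(x)))^2\Bigr)^{1/2} \sup_{x\in\mathcal{C}}|\hat f(x) - f^*(x)|,
\]
and the first factor is bounded by $\sqrt{\sigma^2_{\max} + |T_1(x)|}$; combining this with the bound on $T_1$ already obtained and the sup-norm rate of $\hat f$ from Theorem~\ref{thm:knnRates} again gives a contribution dominated by $(\log n)^{1+\alpha} n^{-(1+\alpha)/(d+2)}$ after taking $(1+\alpha)$-th moments (and using $(a+b)^{1+\alpha} \le 2^{\alpha}(a^{1+\alpha}+b^{1+\alpha})$).

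The main obstacle I anticipate is the careful handling of $T_1$: one must re-derive a sub-exponential concentration for the squared-residual process and verify that the VC-based uniformisation over realised neighborhoods of Theorem~\ref{thm:knnRates} carries over when the "response" is $Z_i$ rather than $Y_i$. Once this technical replica is in place, $T_2$ and $T_3$ are essentially free by-products of the already-proven sup-norm rate for $\hat f$, and summing the three contributions via the triangle inequality closes the proof.
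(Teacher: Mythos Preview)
Your proposal is correct and follows essentially the same route as the paper: the same three-term decomposition (your $T_1$ is the paper's $\tilde\sigma^2-\sigma^2$, and $T_2,T_3$ arise from the same expansion of $\hat\sigma^2-\tilde\sigma^2$), and the same key idea of re-applying Theorem~\ref{thm:knnRates} to the oracle residuals $Z_i=(Y_i-f^*(X_i))^2$ after checking the uniform sub-exponential condition in both the bounded and Gaussian cases. The only minor difference is your treatment of the cross term $T_2$: the paper pulls out $\sup_x|\hat f-f^*|$ and controls the remaining average of residuals via~\eqref{eq:eqExpectationResidual}, whereas you use Cauchy--Schwarz on the sum and bound the resulting $\tilde\sigma^2(x)$ by $\sigma^2_{\max}+|T_1(x)|$; both lead to the same rate, and your version has the mild advantage of reusing the $T_1$ bound rather than invoking a separate estimate.
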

\begin{proof}
First, we define the function $\tilde{\sigma}^2$ by
\begin{equation*}
\tilde{\sigma}^2(x) = \frac{1}{k_n} \sum_{i =1}^{k_n}\left( Y_{(i,n)}(x) - f^{*}(X_{(i,n)}(x))\right)^{2},
\;\;  \forall x \in \mathbb{R}^d \enspace.
\end{equation*}
The function $\tilde{\sigma}^2$ is the pseudo-estimator of $\sigma^2$ that would be used in the case where the function $f^*$ is known.
By the triangle inequality, we have that for all $x \in \mathcal{C}$,
\begin{equation*}
\left|\hat{\sigma}^2(x)-\sigma^2(x)\right| \leq  \left|\hat{\sigma}^2(x)-\tilde{\sigma}^2(x)\right| + \left|\tilde{\sigma}^2(x)-\sigma^2(x)\right|  \enspace.
\end{equation*}
Now, we observe that
\begin{multline*}
\hat{\sigma}^2(x)-\tilde{\sigma}^2(x) = \\ \frac{1}{k_n}\sum_{i = 1}^{k_n} \left(f^{*}(X_{(i,n)}(x)) -\hat{f}(X_{(i,n)}(x)) \right)\left(2(Y_{(i,n)}(x)-f^{*}(X_{(i,n)}(x))) + f^{*}(X_{(i,n)}(x)) -\hat{f}(X_{(i,n)}(x))\right) \enspace.
\end{multline*}
Therefore, we deduce
\begin{multline*}
\sup_{x \in \mathcal{C}}\left|\hat{\sigma}^2(x)-\sigma^2(x)\right| \leq \sup_{x \in \mathcal{C}}\left|\tilde{\sigma}^2(x)-\sigma^2(x)\right| + \left(\sup_{x \in \mathcal{C}}\left|\hat{f}(x)-f^{*}(x)\right|\right)^2 +\\
2\sup_{x \in \mathcal{C}}\left|\hat{f}(x)-f^{*}(x)\right| \sup_{x \in \mathcal{C}} \left|\frac{1}{k_n}\sum_{i=1}^{k_n}(Y_{(i,n)}(x) -f^{*}(X_{(i,n)}(x))\right| \enspace.
\end{multline*}
From the above inequality, using the fact that $(a+b+c)^p\leq 3^{p-1}(a^p+b^p+c^p)$ for $p \geq 1$, $a,\, b, \, c \in \mathbb{R}$
and applying the Cauchy-Schwartz Inequality, we obtain
\begin{multline*}
\mathbb{E}\left[\left(\sup_{x \in \mathcal{C}}\left|\hat{\sigma}^2(x)-\sigma^2(x)\right|\right)^{1+\alpha}\right] \leq \\C_1\mathbb{E}\left[\left(\sup_{x \in \mathcal{C}}\left|\tilde{\sigma}^2(x)-\sigma^2(x)\right|\right)^{1+\alpha}\right] + C_2 \mathbb{E}\left[\left(\sup_{x \in \mathcal{C}}\left|\hat{f}(x)-f^{*}(x)\right|\right)^{2(1+\alpha)}\right]\\ + C_3 \left\lbrace\mathbb{E}\left[\left(\sup_{x \in \mathcal{C}}\left|\hat{f}(x)-f^{*}(x)\right|\right)^{2(1+\alpha)}\right]\right\rbrace^{1/2}\left\lbrace\mathbb{E}\left[\left(\sup_{x \in \mathcal{C}}\left|\frac{1}{k_n}\sum_{i=1}^{k_n}(Y_{(i,n)}(x) -f^{*}(X_{(i,n)}(x))\right|\right)^{2(1+\alpha)}\right]\right\rbrace^{1/2} \enspace ,
\end{multline*}
where $C_1, \ C_2$ and $C_3$ are non negative reals.
We finish the proof of the proposition by bounded the above l.h.s.
This relies on controls of estimation error of $k$NN for the regression function $f^*$ and the conditional variance function
$\sigma^2$.
Observe that when $Y$ is either bounded or satisfies the model conditions in Eq.~\eqref{eq:eqGaussianCase}, we have that the random variables $Y-f^{*}(X)$ and $(Y-f^{*}(X))^2 - \sigma^2(X)$ satisfy the uniform noise condition~\eqref{eq:eqUniformAss}. Indeed, while this fact is clear for $Y-f^{*}(X)$, it also holds true for $(Y-f^{*}(X))^2 - \sigma^2(X)$ since, conditionally on $X$, this random variable is either bounded (since $\sigma^2$ is bounded as well) or sub-exponential.
Therefore, the result of Theorem~\ref{thm:knnRates} applies
for the $k$NN estimators $\tilde{\sigma}^2$ and $\hat{f}$. Furthermore, using the result in Eq.~\eqref{eq:eqExpectationResidual}, we deduce from the above inequality
\begin{equation*}
\mathbb{E}\left[\left(\sup_{x \in \mathcal{C}}\left|\hat{\sigma}^2(x)-\sigma^2(x)\right|\right)^{1+\alpha}\right] \leq C\log(n)^{(\alpha+1)}{n}^{-(\alpha+1)/(d+2)} \enspace,
\end{equation*}
where $C > 0$ is a constant which depends on $f^{*}$, $\sigma^2$, $c_0$, $\mathcal{C}$, and the dimension $d$.
\end{proof}

\section{Technical tools}
\label{sec:technic}
In this section, we state several results that may help for readability of the paper.
The first result is a direct application of the classical peeling argument of~\cite{Audibert_Tsybakov07}.
\begin{lemma}[Lemma~1 in~\cite{Denis_Hebiri19}]
\label{lem:DH18}
Let $X$ be a real random variable, $(X_n)_{n\geq 1}$ be a sequence of real random variables and $t_0 \in \mathbb{R}$.
Assume that there exist $C_1 > 0 $ and $\gamma_0>0$ such that
$$
\mathbb{P}_X \left( | X - t_0| \leq \delta  \right) \leq C_1 \delta^{\gamma_0} , \quad \forall \delta > 0\enspace,
$$
and a sequence of positive numbers $a_n$ tends towards infinity, $C_2$, $C_3$  some positive constants such that
$$
\mathbb{P}_{X_{n}} \left( | X_n - X| \geq \delta|X  \right) \leq C_2 \exp\left(-C_{3}a_{n}\delta^2\right) , \quad \forall \delta > 0,  \quad \forall n\in \mathbb{N}.
$$
Then, there exists $C>0$ depending only on $C_1,C_2$ and $C_3$, such that
\begin{equation*}
 |\mathbb{E}\left[\one_{X_n\geq t_0}-\one_{X\geq t_0}\right]|
 \leq C a_{n}^{-\gamma_0/2}.
\end{equation*}
\end{lemma}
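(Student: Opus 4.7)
The plan is to combine the margin-type condition on $X$ with the concentration bound on $X_n - X$ via a standard peeling argument in the spirit of Audibert--Tsybakov. The starting observation is that whenever $\one_{X_n \geq t_0}$ and $\one_{X \geq t_0}$ disagree, the two variables sit on opposite sides of $t_0$, which forces $|X_n - X| \geq |X - t_0|$. Bounding the absolute value of the expectation by the expectation of the absolute value, this reduces the problem to controlling
\begin{equation*}
\left|\mathbb{E}\left[\one_{X_n \geq t_0} - \one_{X \geq t_0}\right]\right| \leq \mathbb{P}\left(|X_n - X| \geq |X - t_0|\right).
\end{equation*}

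Next I would introduce a parameter $\delta_n > 0$ to be tuned, and decompose the right-hand side according to the position of $X$: an inner region $\{|X - t_0| \leq \delta_n\}$ and dyadic annuli $A_k = \{2^k \delta_n < |X - t_0| \leq 2^{k+1}\delta_n\}$ for $k \geq 0$. The inner region is handled directly by the margin hypothesis and contributes at most $C_1 \delta_n^{\gamma_0}$. On $A_k$, the disagreement event forces $|X_n - X| \geq 2^k \delta_n$, so conditioning on $X$ and then applying the sub-Gaussian tail bound yields
\begin{equation*}
\mathbb{P}\left(A_k \cap \{|X_n - X| \geq 2^k \delta_n\}\right) \leq C_2\, e^{-C_3 a_n 2^{2k} \delta_n^2}\, \mathbb{P}(A_k) \leq C_1 C_2\, (2^{k+1}\delta_n)^{\gamma_0}\, e^{-C_3 a_n 2^{2k} \delta_n^2}.
\end{equation*}

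To conclude I would choose $\delta_n = a_n^{-1/2}$, so that $a_n \delta_n^2 = 1$. Summing over $k \geq 0$, the geometric factor $2^{k\gamma_0}$ is dominated by the super-exponential decay $e^{-C_3 2^{2k}}$, producing a finite absolute constant. Combined with the inner-region contribution, this yields the announced $O(a_n^{-\gamma_0/2})$ rate. I do not anticipate significant difficulties: the peeling template is standard, and the only step requiring a touch of care is verifying that the concentration hypothesis, stated conditionally on $X$, combines correctly with the margin bound $\mathbb{P}(A_k) \leq C_1 (2^{k+1}\delta_n)^{\gamma_0}$ once integration against the law of $X$ is performed.
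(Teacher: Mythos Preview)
Your proposal is correct and follows precisely the Audibert--Tsybakov peeling argument that the paper itself invokes; note that the paper does not actually reprove this lemma but merely cites it as Lemma~1 in~\cite{Denis_Hebiri19}, describing it as ``a direct application of the classical peeling argument of~\cite{Audibert_Tsybakov07}.'' Your outline---reducing to $\mathbb{P}(|X_n-X|\geq |X-t_0|)$, splitting into the inner ball $\{|X-t_0|\leq \delta_n\}$ and dyadic annuli, applying the margin and concentration hypotheses on each piece, and balancing with $\delta_n=a_n^{-1/2}$---is exactly that argument, so there is nothing to add.
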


The next result describes the representation of $\infty$-Wasserstein distance ($W_{\infty}$) on the real line. Let $Z_{\infty}(\mathbb{R})$ be the collection of all compactly supported probability measures on $\mathbb{R}$.
\begin{lemma}[Theorem~2.12 in~\cite{Bobkov_Ledoux14}]
\label{thm:BobkovLedoux16}
Let $\mu$ and $\nu$ be probability measures in $Z_{\infty}(\mathbb{R})$ with respective distribution functions $F$ and $G$. Then, $W_{\infty}(\mu,\nu):=\sup_{0<t<1}|F^{-1}(t)-G^{-1}(t)|$ is the infimum over all $h\geq 0$ such that
$$
G(x-h)\leq F(x) \leq G(x+h)\;\,\,\text{for all}\,\, x\in \mathbb{R}.
$$
\end{lemma}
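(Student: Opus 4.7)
Write $D := W_\infty(\mu,\nu) = \sup_{0<t<1}|F^{-1}(t) - G^{-1}(t)|$ and $H := \inf\{h \geq 0 : G(x-h)\leq F(x)\leq G(x+h)\ \forall x\in\mathbb{R}\}$. The plan is to establish the two inequalities $H \leq D$ and $D \leq H$ separately, using in both directions the standard Galois-type duality between a right-continuous CDF and its generalized inverse, namely $F^{-1}(t)\leq y \iff t\leq F(y)$ for $t\in(0,1)$ (and symmetrically for $G$). Since $\mu,\nu$ are compactly supported, $D<\infty$, so there is no issue with the supremum being infinite.

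For the inequality $D\leq H$, I would take any admissible $h\geq 0$, i.e., one satisfying $G(x-h)\leq F(x)\leq G(x+h)$ for every $x$, and show $|F^{-1}(t)-G^{-1}(t)|\leq h$ for every $t\in(0,1)$. From $F(x)\leq G(x+h)$: if $F(x)\geq t$ then $G(x+h)\geq t$, hence $x+h\in\{y:G(y)\geq t\}$, so $G^{-1}(t)\leq x+h$; taking the infimum over such $x$ gives $G^{-1}(t)-h\leq F^{-1}(t)$. The other inequality $F^{-1}(t)-h\leq G^{-1}(t)$ is obtained symmetrically by rewriting $G(x-h)\leq F(x)$ as $G(y)\leq F(y+h)$ and swapping the roles of $F$ and $G$. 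Taking the supremum over $t$ yields $D\leq h$, hence $D\leq H$.

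For the reverse inequality $H\leq D$, I would show that the value $h=D$ itself satisfies the admissibility constraint, which then forces $H\leq D$. Fix $x\in\mathbb{R}$ and any $t<F(x)$. Then $F^{-1}(t)\leq x$, and since $D$ is the supremum of $|F^{-1}(t)-G^{-1}(t)|$ over $(0,1)$, we have $G^{-1}(t)\leq F^{-1}(t)+D\leq x+D$, so by the duality $G(x+D)\geq t$. Letting $t\uparrow F(x)$ gives $F(x)\leq G(x+D)$. The lower bound $G(x-D)\leq F(x)$ follows by the same argument with $F$ and $G$ swapped (i.e., apply the conclusion to $\tilde x=x-D$ to get $G(x-D)\leq F(x)$). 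Hence $h=D$ is admissible and $H\leq D$.

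Combining the two inequalities gives $H=D$, and moreover the infimum defining $H$ is attained at $h=D$. I do not expect any genuine obstacle: the only points requiring care are the use of the generalized inverse (invoking right-continuity of $F$ and $G$ to pass limits $t\uparrow F(x)$ into $G(x+D)\geq t$), and making sure the argument is symmetric in $F,G$ so that both the upper and lower envelopes $G(x\pm h)$ are handled uniformly.
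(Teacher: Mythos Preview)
Your argument is correct. The two directions are exactly the right ones, and your use of the Galois correspondence $F^{-1}(t)\leq x \iff t\leq F(x)$ (valid because $F$ and $G$ are right-continuous) handles both the passage to infima in the first half and the limit $t\uparrow F(x)$ in the second half. The edge cases $F(x)=0$ and $F(x)=1$ are trivial, and compact support guarantees $D<\infty$ as you note.

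As for comparison with the paper: the paper does not prove this lemma at all. It is stated in the ``Technical tools'' section purely as a quotation of Theorem~2.12 from Bobkov--Ledoux and invoked once (in the proof of Proposition~\ref{prop:ExcessPseudoOracleMA}) to deduce $|\tilde\lambda_\varepsilon-\lambda_\varepsilon|\leq \hat h_u$ from a uniform bound on $|\bar\sigma^2-\sigma^2|$. So there is nothing to compare your approach against; you have supplied a self-contained proof where the paper simply gives a citation.
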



The following result provides a bound on moments of a positive random variable provided a tail control.
\begin{lemma}
\label{lem:technProba}
Let $a \geq 1$, let $b, \, c$ be two non negative real numbers, and let $m\in \mathbb{N}$.
Consider $Z$ a positive random variable such that
\begin{equation*}
\mathbb{P}\left(Z \geq t\right) \leq c\exp{\left(a - b t^m\right)} \enspace,
\end{equation*}
for all $t\geq (a/b)^{1/m}$. Then for all $p\geq 1$, there exists a constant $C>0$ such that
\begin{equation*}
 \mathbb{E}\left[Z^p\right] \leq C (a/b)^{p/m} \enspace.
\end{equation*}
\end{lemma}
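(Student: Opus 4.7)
My plan is to use the layer-cake representation $\mathbb{E}[Z^p] = \int_0^\infty p t^{p-1}\mathbb{P}(Z\geq t)\,dt$ and to split the integral at the threshold $t_0 := (a/b)^{1/m}$ beyond which the tail bound is effective. For $t\leq t_0$ I will use the trivial bound $\mathbb{P}(Z\geq t)\leq 1$, giving a contribution of at most $t_0^p = (a/b)^{p/m}$. The main work is to control the tail contribution $\int_{t_0}^\infty p t^{p-1} c\,e^{a - bt^m}\,dt$ and show that it too is $O((a/b)^{p/m})$ with a constant depending only on $p$, $c$, and $m$.

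For the tail integral I would perform the rescaling $s = t/t_0$, so that $bt^m = as^m$ and $t^{p-1}\,dt = t_0^{p}\,s^{p-1}\,ds$. This yields
\begin{equation*}
\int_{t_0}^\infty p t^{p-1} c\,e^{a-bt^m}\,dt \;=\; p c\,(a/b)^{p/m}\int_1^\infty s^{p-1}\,e^{-a(s^m-1)}\,ds .
\end{equation*}
The factor $(a/b)^{p/m}$ already gives the target order; it only remains to show that the remaining integral is bounded by a constant depending on $p$, $m$. For this I would invoke Bernoulli's inequality $s^m-1\geq m(s-1)$ for $s\geq 1$, $m\geq 1$, so $e^{-a(s^m-1)}\leq e^{-am(s-1)}$, and change variables $v = am(s-1)$:
\begin{equation*}
\int_1^\infty s^{p-1}\,e^{-am(s-1)}\,ds \;=\; \frac{1}{am}\int_0^\infty \!\left(1+\tfrac{v}{am}\right)^{p-1} e^{-v}\,dv \;\leq\; \frac{1}{am}\int_0^\infty (1+v)^{p-1} e^{-v}\,dv ,
\end{equation*}
where the last inequality uses $am\geq 1$. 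The remaining integral is a finite constant $\Gamma$-type quantity depending only on $p$. Combining with the factor $1/(am)\leq 1$ yields the desired control of the tail part.

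Putting the two pieces together I obtain $\mathbb{E}[Z^p] \leq (1 + p c\,\tilde C_p)(a/b)^{p/m}$, which proves the lemma with $C = 1 + p c\,\tilde C_p$. The only slightly subtle step is the tail substitution combined with the Bernoulli-type inequality, which is crucial to cancel the $e^a$ prefactor coming from the given tail bound; if one simply bounded $e^{a-bt^m}\leq e^a e^{-bt^m}$ one would lose this cancellation and not recover the correct order in $a$. Everything else is a routine application of Fubini and elementary calculus.
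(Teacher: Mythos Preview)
Your proof is correct and follows essentially the same approach as the paper's: layer-cake representation, split at $t_0=(a/b)^{1/m}$, trivial bound on $[0,t_0]$, and an elementary exponential inequality to control the tail integral. The only cosmetic difference is in the tail step: the paper uses $t^m-u^m\ge (t-u)^m$ and the substitution $v=(t-u)b^{1/m}$, while you rescale $s=t/t_0$ and apply Bernoulli's inequality $s^m-1\ge m(s-1)$; both routes yield the same $O\big((a/b)^{p/m}\big)$ bound.
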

\begin{proof}
Using the following equality which holds for any positive random variable $Z$, and any $p\geq 1$
\begin{equation}
\label{eq:eqCveKnn3}
\mathbb{E}\left[Z^p\right] = \int_{0}^{+\infty} \mathbb{P}\left(Z \geq t\right)pt^{p-1}{\rm d}t \enspace,
\end{equation}
and the condition in Lemma~\ref{lem:technProba}, we deduce
\begin{equation}
\label{eq:boundZp}
 \mathbb{E}\left[Z^p\right] \leq
 \int_{0}^{u} pt^{p-1}{\rm d}t +
 c \int_{u}^{+\infty} \exp{\left(a - b t^m\right)} pt^{p-1}{\rm d}t \enspace,
\end{equation}
where $u = (a/b)^{1/m}$ and where we used the trivial inequality $\mathbb{P}\left(Z \geq t\right) \leq 1$ to bound the first term in the r.h.s.
Since $(a')^m-(b')^m\geq (a'-b')^m$ for all $a',\, b' \in \mathbb{R}$ such that $a'\geq b' \geq 0$,
we can write that
\begin{equation*}
\exp{\left(a - b t^m\right)} \leq  \exp\left(-(t-u)^m b \right) \enspace,
\end{equation*}
which yields
\begin{eqnarray*}
\int_{u}^{+\infty} \exp{\left(a - b t^m\right)} pt^{p-1}{\rm d}t  & \leq &
\int_{u}^{+\infty} \exp\left(-  (t-u)^m b \right)pt^{p-1}{\rm d}t\\
& \leq & \frac{1}{u} \int_{u}^{+\infty} \exp\left(- (t-u)^m b \right)pt^{p}{\rm d}t
\\
& = &
\frac{ p}{u } \left(\frac{1}{b}\right)^{1/m} \int_{0}^{+\infty} e^{-v^m} \left(v \left(\frac{1}{b}\right)^{1/m} + u \right)^{p}{\rm d}v \enspace,
\end{eqnarray*}
where we consider the changing of variable $v=((t-u)^m  b )^{1/m}$ in the last equality.
Finally, using that $(a'+b')^p\leq 2^{p-1}((a')^p+(b')^p)$ for all $p \geq 1$, $a', \, b' \in \mathbb{R}$ and given that
$u \geq (1/b)^{1/m} $, we show from the above inequality that
\begin{eqnarray*}
 \int_{u}^{+\infty} \exp{\left(a - b t^m\right)} pt^{p-1}{\rm d}t
& \leq &
C_1 \left(\frac{1}{b}\right)^{p/m}\int_{0}^{+\infty} v^{p} e^{-v^m} {\rm d}v + C_2 u^{p} \int_{0}^{+\infty} e^{-v^m} {\rm d} v
\\
& \leq & C_3 u^{p} \enspace,
\end{eqnarray*}
for positive constants $C_1, \, C_2, \, C_3$.
Inject this into Eq.\eqref{eq:boundZp} leads to the result.
\end{proof}

\end{document}